\newtheorem{theorem}{Theorem}[section]
\newtheorem{lemma}[theorem]{Lemma}
\newtheorem{corollary}[theorem]{Corollary}
\newtheorem{remark}[theorem]{Remark}
\newenvironment{customthm}[1]
  {\innercustomthm}
  {\endinnercustomthm}
\newenvironment{customlemma}[1]
  {\innercustomlemma}
  {\endinnercustomlemma}
\DeclareMathOperator*{\argmin}{arg\,min}
\DeclareMathOperator*{\rk}{rank}
\DeclareMathOperator*{\tr}{trace}
\begin{document}

%

%

\twocolumn[

\aistatstitle{Convexified Message-Passing Graph Neural Networks}

\aistatsauthor{ Saar Cohen \And Noa Agmon \And  Uri Shaham }

\aistatsaddress{ Department of Computer Science \\ Bar-Ilan University, Israel \And Department of Computer Science \\ Bar-Ilan University, Israel \And Department of Computer Science \\ Bar-Ilan University, Israel } ]

\begin{abstract}
  Graph Neural Networks (GNNs) are key tools for graph representation learning, demonstrating strong results across diverse prediction tasks. In this paper, we present \textit{\textbf{Convexified Message-Passing Graph Neural Networks}} (CGNNs), a novel and general framework that combines the power of message-passing GNNs with the tractability of \textit{convex} optimization. By mapping their nonlinear filters into a reproducing kernel Hilbert space, CGNNs transform training into a convex optimization problem, which projected gradient methods can solve both efficiently and optimally. Convexity further allows CGNNs' statistical properties to be analyzed accurately and rigorously. For two-layer CGNNs, we establish rigorous generalization guarantees, showing convergence to the performance of an optimal GNN. To scale to deeper architectures, we adopt a principled layer-wise training strategy. Experiments on benchmark datasets show that CGNNs significantly exceed the performance of leading GNN models, obtaining 10–40\% higher accuracy in most cases, underscoring their promise as a powerful and principled method with strong theoretical foundations. In rare cases where improvements are not quantitatively substantial, the convex models either slightly exceed or match the baselines, stressing their robustness and wide applicability. Though over-parameterization is often used to enhance performance in non-convex models, we show that our CGNNs yield shallow convex models that can surpass non-convex ones in accuracy and model compactness.
\end{abstract}

\section{INTRODUCTION}
\label{sec:intro}
Graphs serve as versatile models across various domains, such as social networks \citep{wang2018deep}, protein design \citep{ingraham2019generative}, and medical diagnosis \citep{li2020graphdiagnosis}. Graph representation learning has thus attracted widespread attention in recent years, with Graph Neural Networks (GNNs) emerging as powerful tools. Message Passing Neural Networks (MPNNs) \citep{gilmer2017neural,hamilton2017inductive,velivckovic2018graph}, a most widely adopted class of GNNs, provide a scalable and elegant architecture that effectively exploits a message-passing mechanism to extract useful information and learn high-quality representations from graph data.

In this paper, we present the new and general framework of \textbf{\textit{Convexified Message-Passing Graph Neural Networks}} (CGNNs), reducing the training of message-passing GNNs to a \textit{convex} optimization problem solvable efficiently and optimally via projected gradient methods. The convexity of CGNNs offers a distinct advantage: it ensures \textit{global} optimality and computational tractability without relying on, e.g., over-parameterization, unlike many existing approaches \citep{allen2019convergence,du2018power,du2018gradient,zou2020gradient}. This convexity also enables a precise and theoretically grounded analysis of generalization and statistical behavior. 

We exhibit our framework by convexifying two-layer Graph Convolution Networks (GCNs) \citep{defferrard2016convolutional,kipf2017semisupervised}, a widely-used GNN model for learning over graph-structured data. Inspired by the convexification of convolutional neural networks by \citet{zhang2017convexified}, we adapt their methodology to the graph domain. A high-level overview of our convexification pipeline is summarized in Figure~\ref{fig:cgnn-schematic}, explained as follows. The key challenge lies in handling GCNs' nonlinearity: we address this by first relaxing the space of GCN filters to a reproducing kernel Hilbert space (RKHS) (step (1) in Figure~\ref{fig:cgnn-schematic}), effectively reducing the model to one with a linear activation function (step (2)). We then show that GCNs with RKHS filters admit a low-rank matrix representation (step (3)). Finally, by relaxing the non-convex rank constraint into a convex nuclear norm constraint (step (4)), we arrive at our CGCN formulation.

Theoretically, we establish an upper bound on the expected generalization error of our class of two-layer CGCNs, composed of the best possible performance attainable by a two-layer GCN with infinite data and a complexity term that decays to zero polynomially with the number of training samples. Our result dictates that the generalization error of a CGCN provably converges to that of an optimal GCN, offering both strong statistical guarantees and practical relevance. Moreover, our analysis naturally extends to deeper models by applying the argument layer-wise, providing insights into the statistical behavior of multi-layer CGCNs as well.
\begin{figure*}[t!]
\centering
\begin{tikzpicture}[node distance=1.2cm and 0.8cm]

\tikzset{
  mybox/.style={
    rectangle, draw=black, minimum width=2.2cm, minimum height=1.3cm,
    align=center, fill=blue!15
  },
  blackbox/.style={
    mybox, fill=ForestGreen, text=white
  },
  graybox/.style={
    mybox, fill=gray!20
  },
  subbox/.style={
    rectangle, draw=black,
    minimum width=2.4cm, minimum height=0.7cm, align=center
  },
  arrow/.style={
    thick, ->, >=stealth,
    shorten >=1pt, shorten <=1pt
  }
}

\node (a) [mybox, minimum width=1cm] {GNN};

\node (b) [mybox, right=of a, minimum width=1cm] {\textbf{(1)}\\Map to\\RKHS};

\node (c) [mybox, right=of b, minimum width=1cm] {\textbf{(2)}\\Linear\\activation};

\node (d) [mybox, right=of c] {\textbf{(3)}\\Low-rank matrix\\parameterization};

\node (e) [mybox, right=of d] {\textbf{(4)}\\Nuclear norm\\relaxation};

\node (f) [blackbox, right=of e, minimum width=1cm] {\textbf{Train}\\\textbf{CGNN}};

\draw[arrow] (a) -- (b);
\draw[arrow] (b) -- (c);
\draw[arrow] (c) -- (d);
\draw[arrow] (d) -- (e);
\draw[arrow] (e) -- (f);

\end{tikzpicture}

\caption{Convexification Procedure of a message-passing GNN into a CGNN.}
\label{fig:cgnn-schematic}
\end{figure*}
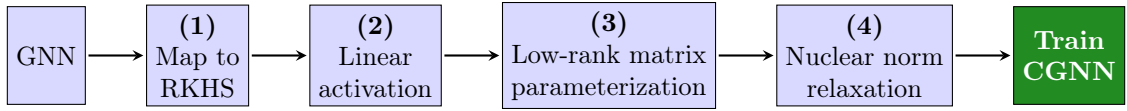
Finally, we perform an extensive empirical analysis over various benchmark datasets, where our framework exhibits outstanding results. First, we apply our framework to a wide range of leading message-passing GNN models, including GAT \citep{velivckovic2018graph}, GATv2 \citep{brody2022how}, GIN \citep{xu2018how}, GraphSAGE \citep{hamilton2017inductive}, and ResGatedGCN \citep{bresson2017residual}. We also extended our framework to hybrid transformer models, specifically GraphGPS \citep{rampavsek2022recipe} and GraphTrans \citep{shi2021masked,wu2021representing}, which integrate local message passing GNNs with global attention mechanisms. Remarkably, in most cases, the convex variants outperform the original non-convex counterparts by a large magnitude, obtaining 10–40\% higher accuracy, highlighting the practical power of our approach alongside its theoretical appeal. Even in the minority of cases where improvements are modest, the convexified models still match or slightly exceed the baseline performance, further validating the robustness and broad applicability of our method. While over-parameterization is commonly used to boost performance in non-convex models \citep{allen2019convergence,du2018power,du2018gradient,zou2020gradient}, we show that our CGNNs framework enables the construction of shallow convex architectures that outperform over-parameterized non-convex ones in both accuracy and model compactness.

\section{RELATED WORK}
\label{sec:related-work}
\textbf{Message-Passing GNNs.} Early developments of message-passing GNNs include GCN \citep{defferrard2016convolutional,kipf2017semisupervised}, GAT \citep{velivckovic2018graph}, GIN \citep{xu2018how}, and GraphSAGE \citep{hamilton2017inductive}. Such methods effectively exploit a message-passing mechanism to recursively aggregate useful neighbor information \citep{gilmer2017neural}, enabling them to learn high-quality representations from graph data. Those models have proven effective across diverse tasks, obtaining state-of-the-art performance in various graph prediction tasks (see, e.g., \citep{chen2022structure,hamilton2017inductive,rampavsek2022recipe,velivckovic2018graph,xu2018how,you2021graph}). Our CGNNs harness the power of message-passing GNNs by transforming training into a \textit{convex} optimization problem. This shift not only enables efficient computation of globally optimal solutions, but also allows an accurate and rigorous analysis of statistical properties. In Appendix \ref{supp:ADDITIONAL RELATED WORK}, we provide more details about hybrid transformer models.

\textbf{Convergence of Gradient Methods to Global Optima.} A vast line of research analyzed the convergence of gradient-based methods to global optima when training neural networks, but under restrictive conditions. 
Several works rely on over-parameterization, often requiring network widths far exceeding the number of training samples \citep{allen2019convergence,du2018gradient,zou2020gradient}, with milder assumptions in shallow networks \citep{du2018power,soltanolkotabi2018theoretical}. However, the convexity of our CGNNs enables global optimality and computational efficiency without relying on any of the above restrictive assumptions.

\textbf{Convex Neural Networks.} The area of convex neural networks is relatively underexplored. \citet{zhang2016regularized} proposed a convex relaxation step for fully-connected neural networks, which inspired the convexification of convolutional neural networks (CNNs) \citep{zhang2017convexified}; see Appendix \ref{supp:ccnns} for an extensive comparison to \citep{zhang2017convexified}. Other works study linear CNNs \citep{gunasekar2018implicit} and linear GNNs \citep{shin2023efficient}, which are of less interest in practice since they collapse all layers into one. Recently, \citet{cohen2021convex} proposed convex versions of specific GNN models termed as aggregation GNN, which apply only to a signal with temporal structure \citep{gama2018convolutional}. Their work is narrowly tailored and focused on a specific application in swarm robotics \citep{cohen2021spatial,cohen2021recent,cohen2020converging}, where convexity arises from modeling assumptions tailored to domain adaptation. In contrast, our CGNN framework establishes a general and theoretically principled convexification of diverse message-passing GNNs by transforming the standard architecture itself, independent of application-specific constraints. Unlike their formulation that does not preserve, e.g., standard GNN modularity, our approach retains these key structural features and applies broadly to a wide range of graph learning tasks. In particular, our contribution resolves the key limitations of \cite{cohen2021convex} through three fundamental distinctions: (1) \textbf{Scope}, by enabling training on variable-sized graphs (inductive setting) versus their fixed-topology restriction; (2) \textbf{Theory}, by extending analysis to general non-polynomial filters and proving strictly tighter nuclear norm bounds; and (3) \textbf{Methodology}, by utilizing an RKHS formulation that supports modern optimizers and generic modular tasks. In Appendix \ref{supp:compare to myself}, we supply an extensive comparison to \cite{cohen2021convex}, detailing the fundamental distinctions from their work. As such, CGNNs represent a scalable and domain-agnostic foundation for convex GNN training, grounded in a fundamentally different mathematical framework.

\textbf{Classical Graph Kernels.} Our framework offers a unique theoretical bridge between modern deep graph learning and classical graph kernels. While GNNs have largely superseded kernel methods in popularity, establishing the formal connections between them contextualizes the theoretical advances of the CGNN framework, particularly regarding the role of Reproducing Kernel Hilbert Spaces (RKHS) and optimization landscapes. We supply a detailed comparison to classic graph kernels in Appendix \ref{supp:graph kernels}.

\textbf{GNNs as Unified Optimization Problems.} Our framework relates to and fundamentally differs from prior work on GNNs as unified optimization problems. Specifically, \cite{chen2023bridging} classify spatial GNNs based on how they aggregate neighbor information, while discussing how graph convolution can be depicted as an optimization problem. They also discuss how filters of spectral GNNs can be approximated via either linear, polynomial or rational approximation. The works by \cite{he2021bernnet}, \cite{wang2022powerful} and \cite{guo2023graph} focus on polynomial approximation, treating spectral filters as parameterized by a polynomial basis (see Appendix \ref{supp:GNNs as Unified Optimization Problems} for details). However, in all those works training remains \textit{\textbf{non-convex}}. Our contribution is thus orthogonal and complementary: rather than deriving architectures, we \textbf{\textit{convexify the training problem}} itself for general, non-convex message-passing GNNs (not only spectral ones). Training thus becomes a convex optimization problem, ensuring global optimality as well as an accurate and rigorous analysis of statistical properties. Further, our framework applies directly to general message-passing GNNs and hybrid transformer architectures like GraphGPS and GraphTrans, whereas The works by \cite{he2021bernnet}, \cite{wang2022powerful} and \cite{guo2023graph} only cover spectral GNNs. In fact, the optimization-oriented analysis of \cite{wang2022powerful} is restricted to \textit{linear} GNNs, while we focus on convexifying \textbf{\textit{non-convex}} GNNs. We will add a dedicated discussion explicitly contrasting these settings.

\textbf{Notations.} For brevity, $[n]$ denotes the discrete set $\{1, 2,\dots , n\}$ for any $n \in \mathbb{N}_{>0}$. 
For a rectangular matrix $A$, let $\|A\|_*$ be its nuclear norm (i.e., the sum of $\mathcal{A}_{\ell}$'s singular values), and $\|A\|_2$ be its spectral norm (i.e., maximal singular value). Let $\ell^2(\mathbb{N})$ be the set of countable dimensional of square-summable sequences $v$, i.e., $\sum_{i=1}^\infty v_i^2 < \infty$. 

\section{PRELIMINARIES}
\label{sec:gnn}
Network data can be captured by an underlying undirected graph $\mathcal{G} = (\mathcal{V},\mathcal{E}, \mathcal{W})$, where $\mathcal{V} = \{1,...,n\}$ denotes the vertex set, $\mathcal{E}$ denotes the edge set and $\mathcal{W}:\mathcal{E} \rightarrow \mathbb{R}$ denotes the edge weight function, while a missing edge $(i,j)$ is depicted by $\mathcal{W}(i,j) = 0$. The {\it neighborhood} of node $i \in \mathcal{V}$ is denoted by $N_i:= \{j \in \mathcal{V} | (j,i) \in \mathcal{E}\}$. The data on top of this graph is modeled as a \textit{graph signal} $\mathcal{X} \in \mathbb{R}^{n \times F}$, whose $i^{\text{th}}$ row $[\mathcal{X}]_{i}=\mathbf{x}_i$ is an $F$-dimensional \textit{feature vector} assigned to node $i \in \mathcal{V}$. We leverage the framework of {\it graph signal processing} (GSP) \citep{ortega:graph-signal-processing} as the mathematical foundation for learning from graph signals. For relating the graph signal $\mathcal{X}$ with the underlying structure of $\mathcal{G}$, we define a {\it graph shift operator} (GSO) by $\mathcal{S} \in \mathbb{R}^{n \times n}$, which 
satisfies $[\mathcal{S}]_{ij} = s_{ij} = 0$ if $(j,i) \notin \mathcal{E}$ for $j \neq i$. Commonly utilized GSOs include the \textbf{adjacency} matrix \citep{sandryhaila2013discrete,sandryhaila2014discrete}, the \textbf{Laplacian} matrix \citep{shuman2013emerging}, and their normalized counterparts \citep{defferrard2016convolutional,gama2018diffusion}. By viewing $\mathcal{S}$ as a linear operator, the graph signal $\mathcal{X}$ can be {\it shifted} over the nodes, yielding that the output at node $i$ for the $f^{\text{th}}$ feature becomes $[\mathcal{S}\mathcal{X}]_{if} = \sum_{j=1}^n [\mathcal{S}]_{ij}[\mathcal{X}]_{jf} = \sum_{j \in N_i} s_{ij} x_{jf}$. Note that, whereas $\mathcal{S}\mathcal{X}$ corresponds to a local information exchange between a given node and its direct neighbors, applying this linear operator recursively aggregates information from nodes in further hops along the graph. Namely, the {\it $k$-shifted signal} $\mathcal{S}^k\mathcal{X}$ aggregates information from nodes that are $k$-hops away. 
%

\textbf{Graph Convolutional Networks} (GCNs) stack multiple graph (convolutional) layers to extract high-level node representations from graph signals \citep{gama2019stability}. To formally describe the creation of convolutional features in a GCN with $L \in \mathbb{N}$ layers, let the output of layer $\ell-1$ (and thus the input to the $\ell^{\text{th}}$ layer) be $\mathcal{X}_{\ell-1}$, comprising of $F_{\ell-1}$ feature vectors. The input to layer $0$ is $\mathcal{X}_0=\mathcal{X}$, yielding $F_0 = F$, while the final output has $G := F_L$ features. Given a set of {\it filters} $A := \{\mathcal{A}_{\ell k} \in \mathbb{R}^{F_{\ell-1} \times F_{\ell}}| k \in [K] \cup\{0\}, \ell \in [L]\}$, the {\it graph (convolutional) filter} at layer $\ell$ processes the $F_{\ell-1}$ features of the input graph signal $\mathcal{X}_{\ell-1}$ in parallel by $F_\ell$ graph filters to output the following $F_\ell$ convolutional features:
\begin{equation}
    \label{eq:conv-features}
         \Psi^{\mathcal{A}_{\ell}}(\mathcal{X};\mathcal{S}) := \sum_{k=0}^K \mathcal{S}^k \mathcal{X}_{\ell-1} \mathcal{A}_{\ell k},
\end{equation}
where $\mathcal{A}_{\ell} := (\mathcal{A}_{\ell k})_{k=0}^K$ concatenates the filters at layer $\ell$. Note that all nodes share the same parameters to weigh equally the information from all $k$-hop away neighbors for any $k \in [K]$, which corresponds to the \textit{parameter sharing} of GCNs. A GCN is then defined as a nonlinear mapping between graph signals $\Phi: \mathbb{R}^{n \times F} \rightarrow \mathbb{R}^{n \times G}$, which is obtained by feeding the aggregated features through a pointwise nonlinear activation function $\sigma:\mathbb{R}\rightarrow\mathbb{R}$ (which, for brevity, denotes its entrywise application in \eqref{eq:graph-neural-network}), producing the feature vectors of the $\ell^{\text{th}}$ layer's output:
\begin{equation}
    \label{eq:graph-neural-network}
    \Phi(\mathcal{X}; \mathcal{S}, A) := \mathcal{X}_L; \quad \mathcal{X}_\ell := \sigma(\Psi^{\mathcal{A}_{\ell}}(\mathcal{X};\mathcal{S})).
\end{equation}
We denote the $f^{\text{th}}$ column of the filters $\mathcal{A}_{\ell k}$ as $\mathbf{a}^f_{\ell k} = [a_{\ell k}^{fg}]_{g \in [F_{\ell-1}]}$. Given some positive constants $\{R_{\ell}\}_{\ell \in [L]}$, we consider the following {\it non-convex} function class of (regularized) GCNs:
\begin{equation}
    \label{eq:gnn-model}
    \mathcal{F}_\text{gcn} := \{\Phi\text{ as in \eqref{eq:graph-neural-network}} : \max_{0 \leq k \leq K} \|\mathbf{a}^f_{\ell k}\|_2 \leq R_{\ell} \text{ } \forall f,\ell\}.
\end{equation}  
By setting $R_\ell=\infty$ for any $\ell \in [L]$, this function class reduces to the class of (unregularized) GCNs.



\subsection{Training GCNs}
\label{sec:erm}
We focus on the \textit{graph classification} task with $G$ classes. Here, we are given a training set $\mathcal{T}:=\{(\mathcal{G}^{(j)}, \mathcal{X}^{(j)}, y^{(j)})\}_{j=1}^{m}$ of $m$ samples, where the $j^{\text{th}}$ sample consists of a graph $\mathcal{G}^{(j)}$, a graph signal $\mathcal{X}^{(j)} \in \mathbb{R}^{n \times F}$ over $\mathcal{G}^{(j)}$ and the graph's true label $y^{(j)} \in [G]$. Without loss of generality, we assume that each graph $\mathcal{G}^{(j)}$ has $n$ nodes (otherwise, we can add isolated dummy nodes). 
We then train a GCN (as formulated in \eqref{eq:graph-neural-network}), while node features from the last layer are aggregated to obtain the graph-level representation via a fixed permutation invariant function, such as summation or a more sophisticated graph-level pooling function \citep{ying2018hierarchical,zhang2018end}. Training is done by solving the following optimization problem for some loss function $J: \mathbb{R}^{n\times G} \times [G] \rightarrow \mathbb{R}$:
\begin{equation}
    \label{eq:erm}
         \Phi^*_\text{gcn} \in \argmin_{\Phi \in \mathcal{F}_\text{gcn}} \frac{1}{m}\sum_{j=1}^m J(\Phi(\mathcal{X}^{(j)};\mathcal{S},A), y^{(j)}).
\end{equation}
We assume that $J$ is convex and $M$-Lipschitz in its first argument (predictions) for any fixed second argument (labels). The number of parameters to be trained are determined by the number of layers $L$, the number of filters $K$ and features $F_\ell$ in each layer $\ell$. By \eqref{eq:graph-neural-network}, each GCN $\Phi \in \mathcal{F}_\text{gcn}$ depends nonlinearly on the parameters, making the optimization problem in \eqref{eq:erm} non-convex. Next, we thus present a relaxation of the class $\mathcal{F}_\text{gcn}$, enabling a convex formulation of the associated problem in \eqref{eq:erm}.

\begin{remark}
    As our focus is on understanding generalization of convexified message-passing GNNs, we restricted evaluations to graph classification. This is exactly as in prior works that only consider a single task (see, e.g., \citep{lachi2025bridging,chen2023nagphormer,wu2022nodeformer}). However, \textbf{our convexification framework directly extends to node-level prediction}, by using the node features from the last layer instead of aggregating them to obtain the graph-level representation. While this results in a node-level instead of graph-level loss, our convexification procedure remains identical.
\end{remark}

\section{CONVEXIFICATION OF GCNS}
\label{sec:convex}
As $\mathcal{F}_\text{gcn}$ in \eqref{eq:gnn-model} is a non-convex set and the activation function is nonlinear, the ERM problem on GCNs is non-convex, raising the challenges mentioned in Section \ref{sec:intro}. To overcome them, we propose a procedure for making both the ERM's predictions domain and loss function \textit{convex}. Following \citet{zhang2017convexified}, we develop the class of {\bf \textit{Convexified GCNs} (CGCNs)}. In Section \ref{sec:linear}, we first depict our method for \textit{linear} activation functions. Given that the linear activation function would collapse all layers into one, this case is of no interest in practice. However, it supplies us with some insight into the general kernelization scheme of GCNs. In the nonlinear case (Section \ref{sec:nonlinear}), we reduce the problem to the linear case by proposing a relaxation to a suitably chosen {\it Reproducing Kernel Hilbert Space} (RKHS), enabling us to obtain a convex formulation of the associated ERM problem. We then present an algorithm for learning CGCNs (Section \ref{sec:The Algorithm for Learning CGCNs}), discuss its scalability and time complexity (Section \ref{sec:Scalable Learning of CGCNs and time complexity}), and establish its generalization guarantees (Section \ref{sec:Convergence and Generalization Error}).

\subsection{\textit{Linear} Activation Functions}
\label{sec:linear}
To provide the intuition for our method, we employ the nuclear norm in the simple case of the linear activation function $\sigma(\mathbf{w})=\mathbf{w}$, which provides us with a convex function class, unlike \eqref{eq:gnn-model}. In this case, the output of layer $\ell$ is simply $\mathcal{X}_{\ell} = \sum_{k=0}^K \mathcal{S}^k \mathcal{X}_{\ell-1} \mathcal{A}_{\ell k}$ due to \eqref{eq:conv-features}-\eqref{eq:graph-neural-network}. Hence, $\mathcal{X}_{\ell}$ linearly depends on the filters $\{\mathcal{A}_{\ell k}\}_{k=0}^K$ of layer $\ell$, i.e., the convolution function $\Psi^{\mathcal{A}_{\ell}}: \mathbb{R}^{n \times F_{\ell-1}} \rightarrow \mathbb{R}^{n \times F_{\ell}}$ in \eqref{eq:conv-features} linearly depends on the parameters.

\textbf{Low-Rank Constraints.} The filters matrix $\mathcal{A}_{\ell k} \in \mathbb{R}^{F_{\ell-1} \times F_{\ell}}$ has rank at most $F_{\ell-1}$. As $\mathcal{A}_\ell := (\mathcal{A}_{\ell k})_{k=0}^K$ concatenates the filters at layer $\ell$, then $\rk(\mathcal{A}_{\ell}) \leq F_{\ell-1}$. Hence, the class $\mathcal{F}_\text{gcn}$ of GCNs in \eqref{eq:gnn-model} corresponds to a collection of functions depending on norm and low-rank constraints imposed on the filters. We next define:
\begin{subequations}
    \label{eq:gnn-model alternate}
    \begin{align}
        \mathcal{F}_\text{linear-gcn} := \{\Psi^{\mathcal{A}_\ell} : & \max_{0 \leq k \leq K} \|\mathbf{a}^f_{\ell k}\|_2 \leq R_{\ell} \quad \text{ and } \label{eq:bound cons} \\
        & \rk(\mathcal{A}_{\ell}) = F_{\ell-1} \quad \forall  f,\ell\}, \label{eq:rank cons}
    \end{align}    
\end{subequations} 
which is the specific form of our original class of GCNs in \eqref{eq:gnn-model} for the linear case. While \eqref{eq:bound cons} holds even when the parameters are \textit{not} shared over all nodes, the low-rank constraint \eqref{eq:rank cons} will no longer be satisfied in this case. Namely, \eqref{eq:rank cons} is achieved via GCNs' parameter sharing: each node processes its neighborhood up to $K$ hops away using the same filters, and thus the number of independent rows of parameters remains at most $F_{\ell-1}$, regardless of the number of nodes in the graph.

\textbf{Nuclear Norm Relaxation.} However, filters that satisfy the constraints \eqref{eq:bound cons} and \eqref{eq:rank cons} form a \textit{non-convex} set. A standard convex relaxation of a rank constraint involves replacing it with a nuclear norm constraint. That is, controlling the sum of singular values instead of the number of non-zero ones, similarly to how the $\ell_1$-norm constraint can be viewed as a convex relaxation of the $\ell_0$-norm constraint.

\textbf{Upper Bounding the Nuclear-Norm.} In Appendix \ref{supp:nuclear norm const}, we prove that:
\begin{lemma}
    \label{lemma:nuclear norm const}
    Assume that the filters $\mathcal{A}_{\ell}$ at each layer $\ell$ satisfy \eqref{eq:bound cons} and \eqref{eq:rank cons}. Then, $\mathcal{A}_{\ell}$'s nuclear norm is upper bounded as $\|\mathcal{A}_{\ell}\|_* \leq \mathcal{B}_\ell$ for some $\mathcal{B}_\ell>0$ that is a function of $R_\ell , F_\ell, F_{\ell-1}, K$.
\end{lemma}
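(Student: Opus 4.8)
The plan is to control the nuclear norm by the Frobenius norm, exploiting the low-rank structure guaranteed by \eqref{eq:rank cons}, and then to bound the Frobenius norm directly from the per-column constraint \eqref{eq:bound cons}. First I would pin down the orientation of the concatenation: since $\rk(\mathcal{A}_\ell) \leq F_{\ell-1}$ must be preserved, the $K+1$ blocks $\mathcal{A}_{\ell k} \in \mathbb{R}^{F_{\ell-1}\times F_\ell}$ are concatenated horizontally so that $\mathcal{A}_\ell \in \mathbb{R}^{F_{\ell-1}\times (K+1)F_\ell}$. Its rank is then at most $F_{\ell-1}$, and equals $F_{\ell-1}$ under \eqref{eq:rank cons}.

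The key inequality is the Cauchy--Schwarz bound relating the nuclear and Frobenius norms: if $\sigma_1,\dots,\sigma_r$ denote the nonzero singular values of a rank-$r$ matrix $A$, then $\|A\|_* = \sum_{i=1}^r \sigma_i \leq \sqrt{r}\,\bigl(\sum_{i=1}^r \sigma_i^2\bigr)^{1/2} = \sqrt{r}\,\|A\|_F$. Applying this to $A = \mathcal{A}_\ell$ with $r = F_{\ell-1}$ gives $\|\mathcal{A}_\ell\|_* \leq \sqrt{F_{\ell-1}}\,\|\mathcal{A}_\ell\|_F$. This is the one genuinely non-elementary step, and it is precisely where \eqref{eq:rank cons} enters: without the rank constraint the prefactor would degrade to $\sqrt{\min(F_{\ell-1},(K+1)F_\ell)}$, the worst-case matrix dimension.

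It then remains to bound the Frobenius norm from the column constraint. Since concatenation merely collects entries, $\|\mathcal{A}_\ell\|_F^2 = \sum_{k=0}^K \|\mathcal{A}_{\ell k}\|_F^2$, and expressing each block through its columns yields $\|\mathcal{A}_{\ell k}\|_F^2 = \sum_{f=1}^{F_\ell} \|\mathbf{a}_{\ell k}^f\|_2^2 \leq F_\ell R_\ell^2$ by \eqref{eq:bound cons}. Summing over the $K+1$ blocks gives $\|\mathcal{A}_\ell\|_F^2 \leq (K+1)F_\ell R_\ell^2$.

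Combining the two bounds above, $\|\mathcal{A}_\ell\|_* \leq R_\ell\sqrt{(K+1)F_{\ell-1}F_\ell} =: \mathcal{B}_\ell$, which is the claimed bound and is manifestly a function of $R_\ell, F_\ell, F_{\ell-1}, K$. I do not anticipate a serious obstacle: the argument is short and each step is standard. The only points requiring care are fixing the concatenation shape so that the rank bound reads $F_{\ell-1}$ rather than $(K+1)F_\ell$, and invoking \eqref{eq:rank cons} at the right moment so that the Cauchy--Schwarz prefactor is the tighter $\sqrt{F_{\ell-1}}$.
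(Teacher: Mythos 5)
Your proposal is correct and follows essentially the same route as the paper's proof: both bound the nuclear norm via $\|\mathcal{A}_\ell\|_* \leq \sqrt{\rk(\mathcal{A}_\ell)}\,\|\mathcal{A}_\ell\|_F$ with $\rk(\mathcal{A}_\ell) \leq F_{\ell-1}$, then bound $\|\mathcal{A}_\ell\|_F^2 \leq R_\ell^2 F_\ell (K+1)$ from the per-column constraint \eqref{eq:bound cons}, arriving at the same $\mathcal{B}_\ell = R_\ell\sqrt{F_\ell F_{\ell-1}(K+1)}$. Your explicit care about the orientation of the concatenation (so that the rank prefactor is $\sqrt{F_{\ell-1}}$ rather than the worst-case dimension) is a point the paper leaves implicit, but it does not change the argument.
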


Using Lemma \ref{lemma:nuclear norm const}, we can replace \eqref{eq:bound cons} and \eqref{eq:rank cons} with a nuclear norm constraint, thus forming a convex function class. Namely, we define the class of \textbf{\textit{Convexified GCNs}} (CGCNs) as:
\begin{equation}
    \label{eq:cgcn}
    \mathcal{F}_\text{cgcn} := \{\{\Psi^{\mathcal{A}_\ell}\}_{\ell=1}^L : \|\mathcal{A}_{\ell}\|_* \leq \mathcal{B}_\ell \quad \forall\ell\},
\end{equation}
while guaranteeing that $\mathcal{F}_\text{gcn} \subseteq \mathcal{F}_\text{cgcn}$. Thus, solving the ERM in \eqref{eq:erm} over $\mathcal{F}_\text{cgcn}$ instead of $\mathcal{F}_\text{gcn}$ yields a convex optimization problem over a wealthier class of functions. Next, for the more general setting of \textit{nonlinear} activation functions, we devise an algorithm capable of solving this convex program. 

\subsection{\textit{Nonlinear} Activation Functions}
\label{sec:nonlinear}
For certain nonlinear activation functions $\sigma$ and suitably chosen kernel function $\kappa$ such as the Gaussian RBF kernel, we next prove how the class $\mathcal{F}_\text{gcn}$ of GCNs in \eqref{eq:gnn-model} can be relaxed to a {\it Reproducing Kernel Hilbert Space} (RKHS), reducing the problem to the linear activation case (Readers may refer to \citep{scholkopf2001generalized} for a brief overview on RKHS). Beforehand, we rephrase the filter at layer $\ell$ from \eqref{eq:conv-features}-\eqref{eq:graph-neural-network} as follows. Denoting the $f^{\text{th}}$ column of the filters $\mathcal{A}_{\ell k}$ as $\mathbf{a}^f_{\ell k}$, in Appendix \ref{supp:rephrase} we prove that:
\begin{lemma}
    \label{lemma:filter rephrased}
    A GCN induces, at each layer $\ell$, a nonlinear filter $\tau_\ell^{f}$ for the $f^{\text{th}}$ feature, which operates on the structured aggregation of neighborhood information and is given as (the activation $\sigma$ is applied entrywise): 
\begin{equation}
    \label{eq:filter map}
    \begin{array}{c}
         \tau_\ell^{f}: \mathcal{Z} \in \mathbb{R}^{(K+1) \times F_{\ell-1}} \mapsto \sigma(\sum_{k=0}^K \langle [\mathcal{Z}]_{k+1}, \mathbf{a}^f_{\ell k} \rangle).
    \end{array}    
\end{equation}
\end{lemma}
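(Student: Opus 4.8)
The plan is to verify \eqref{eq:filter map} by a direct entrywise unfolding of the matrix convolution in \eqref{eq:conv-features} and \eqref{eq:graph-neural-network}, and then to exhibit the argument $\mathcal{Z}$ as the per-node stack of shifted signals. The key observation is that, because of parameter sharing, the filter columns $\mathbf{a}^f_{\ell k}$ do not depend on the node, so a single map $\tau_\ell^{f}$ can govern the action of layer $\ell$ at every node simultaneously, with the node entering only through the argument $\mathcal{Z}$.

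First I would fix a layer $\ell \in [L]$, a feature index $f \in [F_\ell]$, and a node $i \in \mathcal{V}$, and expand the $(i,f)$ entry of the convolution. Writing $[\mathcal{A}_{\ell k}]_{gf} = a_{\ell k}^{fg}$ for the $g$-th component of the $f$-th column, the matrix products give
\begin{equation*}
[\Psi^{\mathcal{A}_\ell}(\mathcal{X};\mathcal{S})]_{if} = \sum_{k=0}^K [\mathcal{S}^k \mathcal{X}_{\ell-1}\mathcal{A}_{\ell k}]_{if} = \sum_{k=0}^K \sum_{g=1}^{F_{\ell-1}} [\mathcal{S}^k \mathcal{X}_{\ell-1}]_{ig}\, a_{\ell k}^{fg}.
\end{equation*}
Next I would introduce, for the fixed node $i$, the matrix $\mathcal{Z} \in \mathbb{R}^{(K+1)\times F_{\ell-1}}$ whose $(k+1)$-th row $[\mathcal{Z}]_{k+1}$ equals the $i$-th row of the $k$-shifted signal $\mathcal{S}^k \mathcal{X}_{\ell-1}$; this is precisely the \emph{structured aggregation of neighborhood information} referenced in the statement, collecting the $0$- through $K$-hop aggregates seen at node $i$. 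With this identification each inner summand becomes an inner product, $\sum_{g=1}^{F_{\ell-1}} [\mathcal{S}^k \mathcal{X}_{\ell-1}]_{ig}\, a_{\ell k}^{fg} = \langle [\mathcal{Z}]_{k+1}, \mathbf{a}^f_{\ell k}\rangle$, so the displayed entry equals $\sum_{k=0}^K \langle [\mathcal{Z}]_{k+1}, \mathbf{a}^f_{\ell k}\rangle$. Applying $\sigma$ entrywise via \eqref{eq:graph-neural-network} then yields $[\mathcal{X}_\ell]_{if} = \sigma\bigl(\sum_{k=0}^K \langle [\mathcal{Z}]_{k+1}, \mathbf{a}^f_{\ell k}\rangle\bigr) = \tau_\ell^{f}(\mathcal{Z})$, which is exactly \eqref{eq:filter map}.

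Since the derivation is purely a change of bookkeeping, no genuine analytic obstacle arises; the only point requiring care is conceptual rather than computational. I would emphasize that the same filter vectors $\mathbf{a}^f_{\ell k}$ appear for every node $i$, so the resulting map $\tau_\ell^{f}$ is \emph{shared} across nodes and depends on $i$ only through its argument $\mathcal{Z}$. This is the manifestation of GCN parameter sharing at the level of a single nonlinear filter, and it is precisely the structural feature that will later permit relaxing each $\tau_\ell^{f}$ into an RKHS and thereby reducing the nonlinear model to the linear case of Section~\ref{sec:linear}.
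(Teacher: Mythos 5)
Your proposal is correct and follows essentially the same route as the paper's proof in Appendix~\ref{supp:rephrase}: an entrywise expansion of the matrix product in \eqref{eq:conv-features}, identifying the $(k+1)$-th row of $\mathcal{Z}$ with $[\mathcal{S}^k\mathcal{X}_{\ell-1}]_i$, rewriting the inner sum as $\langle [\mathcal{Z}]_{k+1}, \mathbf{a}^f_{\ell k}\rangle$, and applying $\sigma$ entrywise via \eqref{eq:graph-neural-network}. Your added remark that the shared filters $\mathbf{a}^f_{\ell k}$ make $\tau_\ell^{f}$ node-independent is a faithful restatement of the parameter-sharing discussion surrounding the lemma in the main text, not a deviation.
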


For any graph signal $\mathcal{S}^k\mathcal{X}$, let $\mathbf{z}_i^k(\mathcal{X}) := [\mathcal{S}^k\mathcal{X}]_i$ be the $i^{\text{th}}$ row of $\mathcal{S}^k\mathcal{X}$, which is the information aggregated at node $i$ from $k$-hops away. Defining the matrix $\mathcal{Z}_{i,\ell}(\mathcal{X}_{\ell-1}) \in \mathbb{R}^{(K+1) \times F_{\ell-1}}$ whose $(k+1)^{\text{th}}$ row is $[\mathcal{Z}_{i,\ell}(\mathcal{X}_{\ell-1})]_{k+1} = \mathbf{z}_i^k(\mathcal{X}_{\ell-1})$ for $k \in [K]\cup \{0\}$, the output at layer $\ell$ is thus given by $\tau_\ell^k(\mathcal{Z}_{i,\ell}(\mathcal{X}_{\ell-1}))$.

\textbf{Kernelization of GCNs.} In the sequel, we consider a training data $\mathcal{T} = \{(\mathcal{G}^{(j)}, \mathcal{X}^{(j)}, y^{(j)})\}_{j=1}^{m}$, some layer of index $\ell$ and a positive semi-definite kernel function $\kappa_{\ell}: \mathbb{R}^{F_{\ell-1}}\times\mathbb{R}^{F_{\ell-1}} \rightarrow \mathbb{R}$ for layer $\ell$. Thus, for brevity, we omit indices involving $\ell$ and $\mathcal{X}_{\ell-1}^{(j)}$ when no ambiguity arise. Namely, we denote, e.g., $\mathcal{Z}_{i,j} := \mathcal{Z}_{i,\ell}(\mathcal{X}_{\ell-1}^{(j)})$ and $\mathbf{z}_{i,j}^k := \mathbf{z}_i^k(\mathcal{X}_{\ell-1}^{(j)})$. 

First, we prove in Lemma \ref{lemma:ker} that the filter specified by \eqref{eq:filter map} resides in the RKHS induced by the kernel (See Appendix \ref{supp:kernelization} for a proof that adapts the one by \citet{zhang2017convexified} to GCNs). \citet{zhang2017convexified} prove that convolutional neural networks can be contained within the RKHS induced by properly chosen kernel and activation functions. For certain choices of kernels (e.g., the Gaussian RBF kernel) and a sufficiently smooth $\sigma$ (e.g., smoothed hinge loss), we can similarly prove that the filter $\tau_\ell^{f}$ in \eqref{eq:filter map} is contained within the RKHS induced by the kernel function $\kappa_\ell$. See Appendices \ref{sec:inverse}-\ref{sec:gauss} for possible choices of kernels and Appendix \ref{sec:valid act} for valid activation functions, as well as an elaboration on sufficient smoothness in Remark \ref{remark:sufficiently smooth}.

\begin{lemma}
    \label{lemma:ker}
    Mercer’s theorem \citep[Theorem 4.49]{steinwart2008support} implies that the filter $\tau^{f}_\ell$ at layer $\ell$ for the $f^{\text{th}}$ feature can be reparametrized as a linear combination of kernel products, reducing the problem to the linear activations case. Formally:
    \begin{equation}
        \label{eq:reparametrized}
        \begin{aligned}
            \tau^{f}_\ell(\mathcal{Z}_{i,j}) =
            \sum_{k=0}^K\sum_{(i',j') \in [n] \times [m]} \alpha_{k,(i',j')}^f \cdot \kappa_\ell (\mathbf{z}_{i,j}^k, \mathbf{z}_{i',j'}^{k}),
        \end{aligned}
    \end{equation}
    where $\{\alpha_{k,(i',j')}^f\}_{(i',j',k) \in [n] \times [m] \times ([K] \cup \{0\})}$ are some coefficients.
\end{lemma}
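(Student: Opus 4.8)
The plan is to combine two ingredients: the RKHS-membership of the nonlinear filter $\tau^f_\ell$ and a representer-theorem reduction, both adapted from the convexified-CNN construction of \citet{zhang2017convexified}. First I would establish the membership claim (the content deferred to Appendix~\ref{supp:kernelization}): for a sufficiently smooth activation $\sigma$ and an admissible kernel $\kappa_\ell$, such as the Gaussian RBF or inverse-polynomial kernels of Appendices~\ref{sec:inverse}--\ref{sec:gauss}, the map $\tau^f_\ell$ from \eqref{eq:filter map} lies in the RKHS associated with $\kappa_\ell$, with finite norm controlled by $\max_{0\le k\le K}\|\mathbf{a}^f_{\ell k}\|_2$. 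This membership is exactly what licenses the reparametrization \eqref{eq:reparametrized}, since it guarantees the existence of a representing element whose inner products reproduce the filter's values on the training inputs.

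Given membership, the steps, in order, are as follows. By Mercer's theorem I would realize $\kappa_\ell$ through a feature map $\phi_\ell : \mathbb{R}^{F_{\ell-1}} \to \ell^2(\mathbb{N})$ with $\kappa_\ell(u,v) = \langle \phi_\ell(u), \phi_\ell(v)\rangle$, and lift it hop-wise to a feature representation $\Phi$ of the structured input $\mathcal{Z}$, so that $\tau^f_\ell(\mathcal{Z}) = \langle h^f, \Phi(\mathcal{Z})\rangle$ for some element $h^f$ of the induced feature space. I would then collect, over the finite training set, all aggregation vectors $\{\mathbf{z}_{i',j'}^k\}_{(i',j')\in[n]\times[m]}$ for every hop $k\in[K]\cup\{0\}$, and invoke the representer theorem: the values $\{\tau^f_\ell(\mathcal{Z}_{i,j})\}_{(i,j)}$ depend on $h^f$ only through its orthogonal projection onto $\mathrm{span}\{\phi_\ell(\mathbf{z}_{i',j'}^k)\}$. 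Replacing $h^f$ by this projection and expanding it in the training features introduces coefficients $\alpha^f_{k,(i',j')}$ indexed by hop and training node; the reproducing property then collapses each inner product into $\kappa_\ell(\mathbf{z}_{i,j}^k, \mathbf{z}_{i',j'}^k)$, and summing over the $K+1$ hops recovers precisely \eqref{eq:reparametrized}.

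I expect the main obstacle to be the membership step itself, which is the technical heart borrowed from \citet{zhang2017convexified}: proving that $\sigma$ composed with a linear form lands in the RKHS with finite norm requires combining the analyticity or smoothness of $\sigma$ with the eigenvalue decay of the chosen kernel, and is what forces the restrictions on the pair $(\sigma,\kappa_\ell)$. A secondary, GCN-specific subtlety is making the hop index $k$ surface cleanly: because the single-vector kernel $\kappa_\ell$ is shared across all hops, mirroring the parameter sharing of \eqref{eq:conv-features}, the multi-hop kernel must be taken additive over hops so that the representer expansion splits into the sum over $k$ rather than collapsing onto a single kernel on the concatenated input. Verifying that the membership argument remains valid for this additive, hop-decomposed kernel is the point I would treat most carefully.
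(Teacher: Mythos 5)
Your proposal is correct and takes essentially the same route as the paper's proof in Appendix~\ref{supp:kernelization}: there, Mercer's theorem yields a feature map $\varphi_\ell$ realizing the filter as $\sum_{k=0}^K\langle\varphi_\ell(\mathbf{z}^k),\bar{\mathbf{a}}^f_{\ell k}\rangle$, and the paper then projects each $\bar{\mathbf{a}}^f_{\ell k}$ via a self-adjoint orthogonal projector $\mathcal{P}^k_{\ell-1}$ onto the span of the training features --- exactly your representer-theorem step --- with your additive-over-hops concern resolved precisely as you anticipate, by carrying out the projection and coefficient expansion separately for each hop $k$ before summing. The membership step you flag as the technical heart is likewise deferred by the paper to Corollaries~\ref{corollary:inverse-contains} and~\ref{corollary:rbf}, adapted from \citet{zhang2017convexified}, so your outline matches the paper's argument in both structure and substance.
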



\begin{remark}
    Lemma \ref{lemma:ker} suggests that filters of the form \eqref{eq:reparametrized} are parameterized by a finite set of coefficients $\{\alpha_{k,(i',j')}^f\}$, allowing optimization to be carried over these coefficients instead of the original parameters $\mathcal{A}_\ell$ (See Section \ref{sec:The Algorithm for Learning CGCNs} for details on solving this optimization problem).
\end{remark}

To avoid rederiving all results in the kernelized setting, we reduce the problem to the linear case in Section \ref{sec:linear}. Thus, consider the (symmetric) kernel matrix $\mathcal{K}^k_\ell \in \mathbb{R}^{nm \times nm}$, whose entry at row $(i,j)$ and column $(i',j')$ equals to $\kappa_\ell (\mathbf{z}_{i,j}^k, \mathbf{z}_{i',j'}^{k})$. We factorize the kernel matrix as $\mathcal{K}^k_\ell = Q_\ell^k (Q_\ell^k)^\top$ with $Q_\ell^k \in \mathbb{R}^{nm \times P}$ (as in \citep{zhang2017convexified}, we can use, e.g., Cholesky factorization \citep{dereniowski2003cholesky} with $P = nm$). 
\begin{remark}
    \label{remark:feature}
    The $(i,j)$-th row of $Q_\ell^k$, i.e., $\mathbf{q}_{i,j}^k := [Q_\ell^k]_{(i,j)} \in \mathbb{R}^P$, can be interpreted as a feature vector instead of the original $\mathbf{z}_{i,j}^k \in \mathbb{R}^{F_{\ell-1}}$ (the information aggregated at node $i$ from $k$-hops away).
\end{remark}

Thus, using Remark \ref{remark:feature}, the reparameterization in \eqref{eq:reparametrized} of the filter at layer $\ell$ for the $f^{\text{th}}$ feature can be rephrased as:
\begin{equation}
    \label{eq:reparametrized rephrased}
    \begin{aligned}
        \tau^{f}_\ell(\mathcal{Z}_{i,j}) &= \sum_{k=0}^K \langle\mathbf{q}_{i,j}^k, \hat{\mathbf{a}}_{\ell k}^f \rangle ; \\
        \hat{\mathbf{a}}_{\ell k}^f :&= \sum_{(i',j',k) \in [n] \times [m] \times ([K] \cup \{0\})} \alpha_{k,(i',j')}^f \mathbf{q}_{i',j'}^k
    \end{aligned}
\end{equation}

Next, we outline the learning and testing phases of CGCNs (See Appendix \ref{supp:kernelization} for more details).

\begin{remark}
    \label{remark:convexified}
    Relaxing the filters to the induced RKHS reduces the problem to the linear case. The resulting relaxed filters satisfy a low-rank constraint and have a bounded nuclear norm. By Section \ref{sec:linear}, convexly relaxing the low-rank constraint via the nuclear norm yields a convex function class. 
\end{remark}

To summarize, we describe the forward pass operation of a single layer in Algorithm \ref{alg:forward pass}. It processes an input graph signal $\mathcal{X}$ and transforms it in two steps:

\textbf{Step 1: Arranging Input to the Layer.}  
Each layer $\ell$ takes as input the graph signal $\mathcal{X}_{\ell-1}$ from the previous layer and computes intermediate graph signals $Z_\ell^k(\mathcal{X}_{\ell-1})$ for different hop distances $k$ (line \ref{line:compute_Z}). For each node $i$, information is aggregated from its $k$-hop neighbors (line \ref{line:kernel_eval}), from which the $i^{\text{th}}$ row of $Z_\ell^k(\mathcal{X}_{\ell-1})$ is then obtained based on kernel products (lines \ref{line:kernel_eval}-\ref{line:pseudo_inverse}).

\textbf{Step 2: Computing the Layer Output.}  
Using the intermediate representations and the trainable filters $\hat{\mathcal{A}}_{\ell k}$, the layer's output signal $\mathcal{X}_\ell$ is calculated (line \ref{line:construct_output}). Specifically, each node’s output feature vector at layer $\ell$ is computed via a weighted sum of the transformed signals (line \ref{line:weighted_sum}).

\begin{algorithm}[t!]
    \caption{Forward Pass of the $\ell^\text{th}$ CGCN Layer}
    \textbf{Input:} Graph signal $\mathcal{X} \in \mathbb{R}^{n \times F}$, filters $\hat{\mathcal{A}}_{\ell} = (\hat{\mathcal{A}}_{\ell k})_{k=0}^K$
    \begin{algorithmic}[1]        
            \State {\textbf{\underline{Step 1: Arrange input to layer $\ell$}}}
            \State {Given an input graph signal $\mathcal{X}_{\ell-1} \in \mathbb{R}^{n \times F}$ from the previous layer, compute a new graph signal $Z_\ell^k(\mathcal{X}_{\ell-1}) \in \mathbb{R}^{n \times P}$ for any $\ell \in [L]$ and $k \in [K] \cup \{0\}$ as follows.} \label{line:compute_Z}
                    \State {Compute $\mathbf{z}_i^k(\mathcal{X}_{\ell-1}^{(j)}) := [\mathcal{S}^k \mathcal{X}_{\ell-1}^{(j)}]_i$, the signal aggregated at node $i$ from $k$-hops away with respect to the training graph signal $\mathcal{X}^{(j)}$.}
                    \State {As usual in kernel methods, we need to compute $\kappa_{\ell}(\mathbf{z}, \mathbf{z}_i^k(\mathcal{X}_{\ell-1}))$, where $\mathbf{z}\in \mathbb{R}^{F_{\ell-1}}$ is some input graph signal.} \label{line:kernel_eval}
                    \State {Form the vector $\mathbf{v}_{\ell}^k(\mathbf{z}_i^k(\mathcal{X}_{\ell-1}))$ whose elements are kernel products (See Appendix \ref{supp:kernelization}).}
                    \State {Compute the $i^{\text{th}}$ row of $Z_\ell^k(\mathcal{X}_{\ell-1})$ using $Z_{\ell}^k(\mathcal{X}_{\ell-1})_i = (Q^k)^\dag \mathbf{v}_{\ell}^k(\mathbf{z}_i^k(\mathcal{X}_{\ell-1}))$, where $(Q^k)^\dag$ is the pseudo-inverse of $Q^k$ (See Appendix \ref{supp:kernelization} for more details).} \label{line:pseudo_inverse}
            \State {\textbf{\underline{Step 2: Compute the layer's output}}}
            \State {Using the computed signals $Z_\ell^k(\mathcal{X}_{\ell-1})$, construct the output graph signal $\mathcal{X}_\ell$ for layer $\ell$.} \label{line:construct_output}
                    \State {Layer $\ell$ induces a \textit{linear} mapping between graph signals $\hat{\Psi}^{\hat{\mathcal{A}}_{\ell}}: \mathbb{R}^{n \times F_{\ell-1}} \rightarrow \mathbb{R}^{n \times F_\ell}$ s.t. the value of the $f^{\text{th}}$ feature of node $i$ at layer $\ell$ is $[\mathcal{X}_{\ell}]_{if} = [\hat{\Psi}^{\hat{\mathcal{A}}_{\ell}}(\mathcal{X}_{\ell-1})]_{if} = \sum_{k=0}^{K} \langle \hat{\mathbf{z}}_i^k(\mathcal{X}_{\ell-1}), \hat{\mathbf{a}}_{\ell k}^f \rangle$, where $\hat{\mathbf{a}}_{\ell k}^f$ are the trainable parameters (filters) of the CGCN layer from \eqref{eq:reparametrized rephrased}.} \label{line:weighted_sum}
    \end{algorithmic}
    \textbf{Output:} Output graph signal $\mathcal{X}_{\ell} \in \mathbb{R}^{n \times F_\ell}$
    \label{alg:forward pass}
\end{algorithm}

\subsection{The Algorithm for Learning Multi-Layer CGCNs}
\label{sec:The Algorithm for Learning CGCNs}

\begin{algorithm}[t!]
    \caption{Training of CGCNs}
    \label{alg:two-layer}
    \textbf{Input}: Training set $\mathcal{T} = \{(\mathcal{G}^{(j)}, \mathcal{X}^{(j)}, y^{(j)})\}_{j=1}^{m}$; Kernels $\{\kappa_\ell\}_{\ell=1}^L$; Regularization parameters $\{\mathcal{B}_\ell\}_{\ell=1}^L$; Number of graph filters $\{F_\ell\}_{\ell=1}^L$
    \begin{algorithmic}[1] 
        \For{\textbf{each} layer $2 \leq \ell \leq L$}
            \State{Taking $\{(\mathcal{G}^{(j)}, \mathcal{X}_{\ell-1}^{(j)}, {y}^{(j)})\}_{j=1}^{m}$ as training samples, construct a kernel matrix $\mathcal{K}_\ell^k \in \mathbb{R}^{nm \times nm}$ as in Section \ref{sec:nonlinear}.\label{state:kernel}} 
            \State{Compute a factorization or an approximation of $\mathcal{K}_\ell^k$ to obtain $Q_\ell^k \in \mathbb{R}^{nm \times P}$.\label{state:factorization}}
            \State{For any sample $\mathcal{X}_{\ell-1}^{(j)}$, form a graph signal $Z_\ell^k(\mathcal{X}_{\ell-1}^{(j)}) \in \mathbb{R}^{n \times P}$ via line \ref{line:pseudo_inverse} of Algorithm \ref{alg:forward pass}.}
            \State{Solve \eqref{eq:erm-convex} via \eqref{eq:projected gradient descent} to obtain a the filters $\hat{\mathcal{A}}_{\ell}$.}
            \State{Compute the output $\mathcal{X}_{\ell}^{(j)}$ of layer $\ell$ via Algorithm \ref{alg:forward pass}.}
        \EndFor
    \end{algorithmic}
    \textbf{Output}: Predictors $\{\hat{\Psi}^{\hat{\mathcal{A}}_{\ell}}(\cdot)\}_{\ell=1}^L$ and the output $\mathcal{X}_{L}^{(j)}$.
\end{algorithm}

Algorithm \ref{alg:two-layer} summarizes the training process of a multi-layer CGCN, estimating the parameters of each layer from bottom to top (i.e., in a layer-wise fashion). As such, to learn the nonlinear filter $\tau_\ell^f$ at layer $\ell$, we should learn the $P$-dimensional vector $\hat{\mathbf{a}}_{\ell k}^f$. For this sake, we define the graph signal $Z_j^k \in \mathbb{R}^{n \times P}$ for any $k \in [K] \cup \{0\}$ and $ j \in [m]$ whose $i^{\text{th}}$ row is $[Q^k]_{(i,j)}$. We can then apply the nuclear norm relaxation from Section \ref{sec:linear}. Solving the resulting optimization problem yields a matrix of filters $\hat{\mathcal{A}}_{\ell k} \in \mathbb{R}^{P \times F_{\ell}}$ for layer $\ell$ whose $f^{\text{th}}$ column is $\hat{\mathbf{a}}_{\ell k}^f$. 

Given regularization parameters $\{\mathcal{B}_\ell\}_{\ell=1}^L$ for the filters' nuclear norms, we thus aim to solve the following optimization problem at each layer $\ell$ due to line \ref{line:weighted_sum} of Algorithm \ref{alg:forward pass}:
\begin{equation}
    \label{eq:erm-convex}
    \begin{aligned}
        &\hat{\mathcal{A}}_{\ell} \in \argmin_{\|\hat{\mathcal{A}}_{\ell}\|_{*} \leq \mathcal{B}_\ell} \tilde{J}(\hat{\mathcal{A}}_{\ell}), \\ \text{ where }& \tilde{J}(\hat{\mathcal{A}}_{\ell}) := \frac{1}{m}\sum_{j=1}^m J(\hat{\Psi}^{\hat{\mathcal{A}}_{\ell}}(\mathcal{X}_{\ell-1}^{(j)}), y^{(j)}).
    \end{aligned}
\end{equation}
This can be easily solved via {\it projected gradient descent}: at round $t$, with step size $\eta^t > 0$, compute:
\begin{equation}
    \label{eq:projected gradient descent}
    \hat{\mathcal{A}}_{\ell}^{t+1} = \Pi_{\mathcal{B}_\ell}(\hat{\mathcal{A}}_{\ell}^{t} - \eta^t \nabla_{\hat{\mathcal{A}}_{\ell}} \tilde{J}(\hat{\mathcal{A}}_{\ell}^{t}))
\end{equation}
where $\nabla_{\hat{\mathcal{A}}_{\ell}} \tilde{J}$ is the gradient of $\tilde{J}$ from \eqref{eq:erm-convex} and $\Pi_{\mathcal{B}_\ell}$ denotes the Euclidean projection onto the nuclear norm ball $\{ \hat{\mathcal{A}}_{\ell} : \|\hat{\mathcal{A}}_{\ell}\|_{*} \leq \mathcal{B}_\ell \}$. This projection can be done by first computing the singular value decomposition of $\hat{\mathcal{A}}$ and then projecting the vector of singular values onto the $\ell_1$-ball via an efficient algorithm (see, e.g., \citep{duchi2008efficient,duchi2011adaptive,xiao2014proximal}). The problem in \eqref{eq:erm-convex} can also be solved using a projected version of any other optimizer (e.g., Adam \citep{kingma2014adam}).

\subsection{Scalability, Time Complexity and Challenges of Learning CGCNs}
\label{sec:Scalable Learning of CGCNs and time complexity}
Algorithm \ref{alg:two-layer}'s runtime highly depends on how the factorization matrix $Q_\ell^k$ is computed and its width $P$ (line \ref{state:factorization}). Naively picking $P=nm$ requires solving the exact kernelized problem, which can be costly. To gain scalability, we compute a \textit{Nystr\"{o}m approximation} \citep{williams2001using} in $\mathcal{O}(P^2 nm)$ time by randomly sampling training examples, computing their kernel matrix and representing each training example by its similarity to the sampled examples and kernel matrix. This avoids computing or storing the full kernel matrix, thus requiring only $\mathcal{O}(Pmn)$ space. While this approach is efficient \citep{williams2001using,drineas2005approximating}, it relies on the sampled subset adequately representing the entire data; poorly represented examples may degrade approximation quality.
See Appendix \ref{supp:nystrom} for more details and alternative methods.


\subsection{Convergence and Generalization Error}
\label{sec:Convergence and Generalization Error}
We now analyze Algorithm \ref{alg:two-layer}'s generalization error, focusing on two-layer models ($L=1$) in the binary classification case where the output dimension is $G=2$. Our main result can be easily extended to the multi-class case by a standard one-versus-all reduction to the binary case. In case of multiple hidden layers, our analysis then applies to each hidden layer separately. However, a global depth-wide bound is an important future direction.

Specifically, by an approach similar to \citet{zhang2017convexified}, we prove that the generalization error of Algorithm \ref{alg:two-layer} converges to the optimal generalization error of a GCN, as it is bounded above by this optimal error plus an additive term that decays polynomially to zero with the sample size. For any $\ell \in [L]$ and $k \in [K]$, we use the random kernel matrix $\mathcal{K}_{\ell}^k(\mathcal{X}) \in \mathbb{R}^{n \times n}$ induced by a graph signal $\mathcal{X} \in \mathbb{R}^{n \times F}$ drawn randomly from the population, i.e., the $(i,i')$-th entry of $\mathcal{K}_{\ell}^k(\mathcal{X})$ is $\kappa_\ell (\mathbf{z}_i^k(\mathcal{X}), \mathbf{z}_{i'}^{k}(\mathcal{X}))$, where $\mathbf{z}_i^k(\mathcal{X}_{\ell-1}) := [\mathcal{S}^k\mathcal{X}_{\ell-1}]_i$ aggregates information at node $i$ from $k$-hops away.
\begin{theorem}
    \label{thm:generalization error main}
    Consider two-layer models in the binary classification case (i.e., $L=1,G=2$). Let $J(\cdot,\mathbf{y})$ be $M$-Lipchitz continuous for any fixed $\mathbf{y} \in [G]^n$ and let $\kappa_L$ be either the inverse polynomial kernel or the Gaussian RBF kernel (Recall Appendices \ref{sec:inverse}-\ref{sec:gauss}). For any valid activation function $\sigma$ (Recall Appendix \ref{sec:valid act}), there is a constant radius $R_L>0$ such that the expected generalization error is at most:
    \begin{equation*}
        \label{eq:rademacher theory fin}
        \begin{aligned}
            \mathbb{E}_{\mathcal{X},\mathbf{y}} [J(\hat{\Psi}^{\hat{\mathcal{A}}_{L}}(\mathcal{X}), \mathbf{y})] &\leq \inf_{\Phi \in \mathcal{F}_\text{gcn}} \mathbb{E}_{\mathcal{X},\mathbf{y}} [J(\Phi(\mathcal{X}), \mathbf{y})] \\
            &\quad+ \frac{Const}{\sqrt{m}},
        \end{aligned}
    \end{equation*}
    where $Const$ is some constant that depends on $R_L$, but not on the sample size $m$.
\end{theorem}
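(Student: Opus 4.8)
The plan is to bound the excess risk through the standard estimation/approximation decomposition, using the convex class $\mathcal{F}_\text{cgcn}$ from \eqref{eq:cgcn} as the bridge between the empirical minimizer returned by Algorithm~\ref{alg:two-layer} and the population-optimal GCN. Writing $R(f):=\mathbb{E}_{\mathcal{X},\mathbf{y}}[J(f(\mathcal{X}),\mathbf{y})]$ for the population risk and $\hat R(f):=\tfrac1m\sum_{j=1}^m J(f(\mathcal{X}^{(j)}),y^{(j)})$ for the empirical risk, for $L=1$ the output $\hat\Psi^{\hat{\mathcal{A}}_L}$ is exactly the empirical risk minimizer over $\mathcal{F}_\text{cgcn}$. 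I would then decompose
\[
R(\hat\Psi^{\hat{\mathcal{A}}_L})-\inf_{\Phi\in\mathcal{F}_\text{gcn}}R(\Phi)
=\underbrace{\Big(R(\hat\Psi^{\hat{\mathcal{A}}_L})-\inf_{f\in\mathcal{F}_\text{cgcn}}R(f)\Big)}_{\text{estimation}}
+\underbrace{\Big(\inf_{f\in\mathcal{F}_\text{cgcn}}R(f)-\inf_{\Phi\in\mathcal{F}_\text{gcn}}R(\Phi)\Big)}_{\text{approximation}}
\]
and control each term separately, the first by a uniform deviation bound and the second by a containment argument.

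For the approximation term I would show it is nonpositive by establishing $\mathcal{F}_\text{gcn}\subseteq\mathcal{F}_\text{cgcn}$ at the population level. Lemma~\ref{lemma:filter rephrased} and Lemma~\ref{lemma:ker} show that, for a valid activation $\sigma$ and either the inverse-polynomial or the Gaussian RBF kernel $\kappa_L$, every GCN filter $\tau^f_L$ lies in the RKHS induced by $\kappa_L$ and is linear in the kernel features $\mathbf{q}^k$. The remaining task is to choose the constant radius $R_L$ so that whenever the filter weights satisfy the norm constraint \eqref{eq:bound cons}, the induced matrix $\hat{\mathcal{A}}_L$ has RKHS (hence nuclear) norm at most $\mathcal{B}_L$; combining the per-feature RKHS-norm bound with the rank-at-most-$F_0$ structure from \eqref{eq:rank cons} and Lemma~\ref{lemma:nuclear norm const} yields $\|\hat{\mathcal{A}}_L\|_*\le\mathcal{B}_L$. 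Consequently every population-optimal GCN is realizable inside $\mathcal{F}_\text{cgcn}$, so the approximation term is $\le 0$.

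For the estimation term, the ERM optimality of $\hat{\mathcal{A}}_L$ gives $R(\hat\Psi^{\hat{\mathcal{A}}_L})-\inf_{f\in\mathcal{F}_\text{cgcn}}R(f)\le 2\sup_{f\in\mathcal{F}_\text{cgcn}}|R(f)-\hat R(f)|$, and symmetrization bounds the expected supremum by the Rademacher complexity $\mathcal{R}_m(J\circ\mathcal{F}_\text{cgcn})$ up to absolute constants. Since $J$ is $M$-Lipschitz in its first argument, Talagrand's contraction inequality gives $\mathcal{R}_m(J\circ\mathcal{F}_\text{cgcn})\le M\,\mathcal{R}_m(\mathcal{F}_\text{cgcn})$. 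Because each $f\in\mathcal{F}_\text{cgcn}$ is linear in the pooled kernel feature matrices with $\|\hat{\mathcal{A}}_L\|_*\le\mathcal{B}_L$, I would invoke the duality between the nuclear and spectral norms to write $\mathcal{R}_m(\mathcal{F}_\text{cgcn})=\tfrac{\mathcal{B}_L}{m}\,\mathbb{E}_\epsilon\big\|\sum_{j}\epsilon_j\,\Xi_j\big\|_2$, where $\Xi_j$ is the (permutation-invariantly pooled) feature matrix of sample $j$ and $\epsilon_j$ are Rademacher signs. Boundedness of the kernel (the Gaussian RBF is bounded by $1$, and the inverse-polynomial kernel is bounded on the relevant domain) controls $\|\Xi_j\|_2$, so a noncommutative (matrix Khintchine) concentration bound gives $\mathbb{E}_\epsilon\|\sum_j\epsilon_j\Xi_j\|_2=O(\sqrt m)$, whence $\mathcal{R}_m(\mathcal{F}_\text{cgcn})=O(\mathcal{B}_L/\sqrt m)$ and the estimation term is $O(M\mathcal{B}_L/\sqrt m)$.

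Combining the two contributions yields the claim with $Const=O(M\mathcal{B}_L c_\kappa)$, where $c_\kappa$ is the kernel/feature bound; since $\mathcal{B}_L$ is a fixed function of $R_L,F_L,F_0,K$ by Lemma~\ref{lemma:nuclear norm const}, $Const$ does not depend on $m$. I expect the main obstacle to be the approximation step: translating the RKHS-norm control of the population filter into a uniform nuclear-norm bound $\mathcal{B}_L$ on the finite matrix $\hat{\mathcal{A}}_L$ produced through the factorization $\mathcal{K}=QQ^\top$ and the pseudo-inverse of Algorithm~\ref{alg:forward pass}, and ensuring the same $R_L$ works for the \emph{population} kernel $\mathcal{K}^k_L(\mathcal{X})$ rather than only its empirical counterpart. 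Care is needed so that the feature representation used in training is consistent with the population feature map entering the Rademacher complexity; here I would rely on the boundedness and smoothness of $\kappa_L$ and $\sigma$ (via the appendices on valid activations and admissible kernels) to make the RKHS-to-nuclear-norm transfer uniform over the class and independent of the sample size.
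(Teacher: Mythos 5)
Your proposal follows essentially the same route as the paper's proof: the paper likewise passes through the relaxed class $\mathcal{F}_\text{cgcn}$, proves the containment $\mathcal{F}_\text{gcn}\subset\mathcal{F}_\text{cgcn}$ (its Lemma~\ref{lemma:rich}), shows the algorithm's output is an ERM over $\mathcal{F}_\text{cgcn}$ (its Lemma~\ref{lemma:erm}, which is precisely the ``RKHS-to-nuclear-norm transfer'' obstacle you flag), and bounds $\mathcal{R}_m(\mathcal{F}_\text{cgcn})$ via nuclear/spectral duality plus a matrix concentration inequality before invoking standard Rademacher theory with the $M$-Lipschitz loss. The only cosmetic differences are your explicit estimation/approximation decomposition (equivalent to the paper's direct application of the Rademacher generalization bound) and your use of matrix Khintchine where the paper truncates the Mercer feature map and applies Minsker's matrix Bernstein inequality, which is what produces the $\sqrt{\log(n(K+1))\sum_{k}\mathbb{E}[\|\mathcal{K}_L^k(\mathcal{X})\|_2]}$ factor in the paper's constant.
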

\begin{proof}
    (\textit{Sketch})
    The full proof in Appendix \ref{supp:proof of thm 1} comprises of two steps: When $L=1,G=2$, we first consider a wealthier function class, denoted as $\mathcal{F}_\text{cgcn}$, which is proven to contain the class of GCNs $\mathcal{F}_\text{gcn}$ from \eqref{eq:graph-neural-network}. The richer class relaxes the class of GCNs in two aspects: (1) The graph filters are relaxed to belong to the RKHS induced by the kernel, and (2) the $\ell_2$-norm constraints in \eqref{eq:graph-neural-network} are substituted by a single constraint on the Hilbert norm induced by the kernel. Though Algorithm \ref{alg:two-layer} never explicitly optimizes over the function class $\mathcal{F}_\text{cgcn}$, we prove that the predictor $\hat{\Psi}^{\hat{\mathcal{A}}_{L}}$ generated at the last layer is an empirical risk minimizer in $\mathcal{F}_\text{cgcn}$. Then, we prove that $\mathcal{F}_\text{cgcn}$ is not "too big" by upper bounding its \textit{Rademacher complexity}, a key concept in empirical process theory that can be used to bound the generalization error in our ERM problem (see, e.g., \citep{bartlett2002rademacher} for a brief overview on Rademacher complexity). Combining this upper bound with standard Rademacher complexity theory \citep{bartlett2002rademacher}, we infer that the generalization error of $\hat{\Psi}^{\hat{\mathcal{A}}_{L}}$ converges to the best generalization error of $\mathcal{F}_\text{cgcn}$. Since $\mathcal{F}_\text{gcn} \subset \mathcal{F}_\text{cgcn}$, the latter error is bounded by that of GCNs, completing the proof.
\end{proof}

\section{EXPERIMENTAL EVALUATION}
\label{sec:Experimental evaluation}
\textbf{Datasets.} We evaluate our framework through extensive experiments on several benchmark graph classification datasets \citep{morris2020tudataset}, including the four bioinformatics datasets MUTAG, PROTEINS, PTC-MR, NCI1 and the chemical compounds dataset Mutagenicity. We also evaluated our models on the \verb|ogbg-molhiv| molecular property prediction dataset from the Open Graph Benchmark \citep{hu2020open}, originally sourced from \textsc{MoleculeNet} \citep{wu2018moleculenet} and among its largest benchmarks. Following many prior works \citep{ma2019graph,ying2018graph}, we randomly split each dataset into three parts: 80\% as training set, 10\% as validation set and the remaining 10\% as test set. The statistics of the above datasets appear in Appendix \ref{supp:Dataset Descriptions}.

\textbf{Baselines.} We apply our convexification procedure to the message-passing mechanisms of popular GNNs and hybrid graph transformers, creating convex counterparts. Our framework is integrated into a wide range of leading models, including GCN \citep{defferrard2016convolutional,kipf2017semisupervised}, GAT \citep{velivckovic2018graph}, GATv2 \citep{brody2022how}, GIN \citep{xu2018how}, GraphSAGE \citep{hamilton2017inductive}, ResGatedGCN \citep{bresson2017residual}, as well as the hybrid transformers GraphGPS \citep{rampavsek2022recipe} and GraphTrans \citep{shi2021masked,wu2021representing}. In Appendix \ref{supp:Experimental Results for DirGNNs}, we include additional experiments for DirGNN \citep{rossi2024edge}. All models share the same architecture design for fair comparison; see Appendix \ref{supp:Implementation Details} for more details. \textbf{Our code is available at \citep{code_implementation}.}

We would like to clarify that many recent graph learning papers in influential venues continue using the same baselines or older ones (see, e.g., \citep{armgaan2025gnnxemplar,yehudai2025depthwidth,hordan2025spectral,ma2023graph,chen2023nagphormer}). Further, our baselines are still listed in leaderboards of modern datasets like Open Graph Benchmark (OGB) \citep{hu2020open}. Thus, our baselines match what current peer-reviewed GNN work uses for molecular datasets and graph classification, especially for message-passing comparisons. We also clarify that our goal is parallel: to compare convexified GNNs to their own non-convex counterparts, \textit{not} to propose a new SOTA model.

\begin{table*}[t!]
\centering
\caption{Accuracy ($\%\pm\sigma$) of $2$-layer convex versions of base models against their $2$- and $6$-layer non-convex counterparts on various graph classification datasets over $4$ runs, each with different seeds. Datasets are ordered from left to right in ascending order of size. Convex variants start with 'C'. A $2$-layer model is marked by (2L), while a $6$-layer one by (6L). Results of convex models surpassing their non-convex variants by a large gap ($\sim$10--40\%) are in \textbf{bold} and modest improvements are \underline{underlined}.}
\begin{tabular}{lccccc}
\toprule
\textbf{Model} & \textbf{MUTAG} & \textbf{PTC-MR} & \textbf{PROTEINS} & \textbf{NCI1} & \textbf{Mutagenicity} \\
\midrule
\rowcolor{Gainsboro!50} CGCN (2L)           & \textbf{$\mathbf{85.0\pm3.5}$} & \underline{${67.6\pm5.8}$} & {$\mathbf{75.0\pm3.2}$} & {$\mathbf{77.1\pm2.8}$} & \underline{${80.6\pm3.3}$} \\
GCN (2L)            & $50.0\pm20.3$ & $48.5\pm10.4$ & $59.5\pm4.1$ & $61.0\pm4.0$ & $69.7\pm2.4$ \\
GCN (6L)            & $65.0\pm19.6$ & $63.0\pm11.8$ & $58.7\pm6.1$ & $65.0\pm5.9$ & $72.7\pm1.6$ \\
\midrule
\rowcolor{Gainsboro!50} CGIN (2L)           & \textbf{$\mathbf{82.5\pm5.5}$} &{$\mathbf{66.4\pm3.4}$} & {$\mathbf{77.7\pm1.8}$} & \underline{${74.8\pm9.4}$} & {$\mathbf{77.9\pm4.5}$} \\
GIN (2L)            & $50.0\pm11.7$ & $49.3\pm10.0$ & $50.4\pm12.0$ & $61.0\pm1.9$ & $67.8\pm4.6$ \\
GIN (6L)            & $61.2\pm14.3$ & $51.9\pm14.5$ & $61.6\pm1.2$ & $66.3\pm2.4$ & $69.1\pm3.2$ \\
\midrule
\rowcolor{Gainsboro!50} CGAT (2L)           & \textbf{$\mathbf{83.7\pm5.4}$} & {$\mathbf{70.1\pm9.9}$} & {$\mathbf{72.1\pm4.7}$} & \underline{${68.5\pm4.7}$} & \underline{${75.3\pm2.6}$} \\
GAT (2L)            & $62.0\pm22.7$ & $58.7\pm18.3$ & $60.0\pm3.3$ & $67.1\pm3.9$ & $69.3\pm3.0$ \\
GAT (6L)            & $75.0\pm16.2$ & $61.0\pm17.6$ & $63.1\pm1.8$ & $67.3\pm2.6$ & $72.8\pm2.0$ \\
\midrule
\rowcolor{Gainsboro!50} CGATv2 (2L)         & \textbf{$\mathbf{87.5\pm2.5}$} & {$\mathbf{69.4\pm7.5}$} & {$\mathbf{73.4\pm1.7}$} & \underline{${66.2\pm1.5}$} & \underline{${80.6\pm1.0}$} \\
GATv2 (2L)          & $68.7\pm4.1$ & $56.3\pm13.6$ & $61.6\pm1.6$ & $62.2\pm6.2$ & $70.1\pm2.7$ \\
GATv2 (6L)          & $70.0\pm7.9$ & $59.2\pm17.4$ & $63.3\pm2.1$ & $63.5\pm4.0$ & $74.9\pm1.9$ \\
\midrule
\rowcolor{Gainsboro!50} CResGatedGCN (2L)   & \textbf{$\mathbf{83.7\pm5.4}$} & {$\mathbf{73.0\pm5.0}$} & {$\mathbf{78.5\pm2.2}$} & \underline{${69.8\pm3.2}$} &  \underline{${79.1\pm3.5}$} \\
ResGatedGCN (2L)    & $43.7\pm17.4$ & $50.7\pm7.1$ & $47.9\pm9.9$ & $67.2\pm2.7$ & $70.6\pm0.9$ \\
ResGatedGCN (6L)    & $55.0\pm12.2$ & $55.4\pm20.5$ & $55.8\pm8.7$ & $67.3\pm2.5$ & $71.5\pm1.6$ \\
\midrule
\rowcolor{Gainsboro!50} CGraphSAGE (2L)     & \textbf{$\mathbf{82.5\pm2.4}$} & {$\mathbf{72.3\pm 6.1}$} & {$\mathbf{73.6\pm3.1}$} & \underline{${69.3\pm2.8}$} & {$\mathbf{80.1\pm3.6}$} \\
GraphSAGE (2L)      & $65.0\pm16.5$ & $56.2\pm17.0$ & $62.5\pm0.8$ & $61.4\pm4.1$ & $70.6\pm1.8$ \\
GraphSAGE (6L)      & $55.0\pm12.7$ & $61.5\pm8.9$ & $63.1\pm1.8$ & $64.7\pm2.0$ & $70.8\pm2.3$ \\
\midrule
\rowcolor{Gainsboro!50} CGraphTrans (2L)    & \textbf{$\mathbf{81.2\pm4.1}$} & {$\mathbf{68.1\pm8.3}$} & {$\mathbf{75.4\pm3.2}$} & \underline{${68.9\pm2.5}$} & \underline{${73.8\pm3.4}$} \\
GraphTrans (2L)     & $62.5\pm16.7$ & $55.3\pm13.8$ & $60.7\pm2.6$ & $68.9\pm0.8$ & $68.0\pm4.1$ \\
GraphTrans (6L)     & $57.5\pm19.5$ & $61.4\pm16.9$ & $61.6\pm1.2$ & $70.0\pm4.3$ & $72.1\pm3.8$ \\
\midrule
\rowcolor{Gainsboro!50} CGraphGPS (2L)      & \textbf{$\mathbf{83.7\pm2.1}$} & {$\mathbf{65.5\pm9.3}$} & {$\mathbf{81.6\pm7.2}$} & {$\mathbf{60.1\pm6.1}$} & \underline{${59.4\pm4.1}$} \\
GraphGPS (2L)       & $50.9\pm19.6$ & $49.2\pm20.1$ & $42.0\pm1.9$ & $50.8\pm2.6$ & $41.4\pm8.7$ \\
GraphGPS (6L)       & $52.5\pm2.1$ & $46.4\pm2.3$ & $48.8\pm1.5$ & $49.4\pm1.9$ & $55.1\pm5.5$ \\
\bottomrule
\end{tabular}
\label{tab:convex-vs-non-convex}
\end{table*}

\textbf{Results and Analysis.} Table \ref{tab:convex-vs-non-convex} presents our main empirical results for the graph classification benchmarks, comparing two-layer convex models against two-layer and six-layer non-convex models. On the \verb|ogbg-molhiv| dataset, our convex models consistently obtained around 98\% accuracy, surpassing their non-convex counterparts, which reached about 96\%. Impressively, in most cases, the two-layer convex variants substantially surpass their non-convex counterparts by 10–40\% accuracy, showcasing the practical strength of our framework alongside its theoretical benefits. Even when improvements are modest, our convex models match or slightly surpass their non-convex versions, underscoring the robustness and broad applicability of our approach. Notable examples are GATv2 and GraphGPS, whose convex versions significantly outperform their non-convex ones in 4 out of 5 datasets, with slight improvements on Mutagenicity. Our convex models' superiority over the hybrid transformer GraphGPS, which combines message-passing with self-attention, highlights the power of convexified GNNs over non-convex ones, even when enhanced by self-attention. In Appendix \ref{supp:Additional Experiments}, we visualize these trends more holistically.

Over-parameterization is known to aid gradient-based methods to reach global minima that generalize well in non-convex models \citep{allen2019convergence,du2018gradient,zou2020gradient}, making it a common practice for obtaining strong performance. Our results challenge this approach, showing that our shallow convex models often surpass their more over-parameterized non-convex counterparts and obtain better results with fewer layers and parameters. Namely, we surpass them in both performance and model compactness. These gains are most notable in small-data regimes, which are prone to overfitting, where we suspect that local optima obtained by non-convex models may hinder generalization. However, convex models avoid such pitfalls, offering more stable and reliable learning. Our results show that rich representations can be attained without over-parameterization by exploiting convexity in shallow models. 

In Appendix \ref{supp:Experimental Results for Deeper Convex Models}, we give more results for \textit{\textbf{deeper}} convex models, depicting how accuracy can be improved as depth increases. This directly addresses how our convexified models handle over-smoothing: our results in \ref{supp:Experimental Results for Deeper Convex Models} show that the accuracy of convex models can further improve with additional layers, which is consistent with our theoretical guarantees in Theorem \ref{thm:generalization error main}. Hence, this illustrates that errors remain controlled as depth increases, offering strong empirical support for the practical effectiveness of multi-layer CGNNs, which is crucial because state-of-the-art GNN applications often rely on deeper architectures. Intuitively, convexified models handle over-smoothing well thanks to the nuclear norm relaxation used by our model. Specifically, the nuclear norm projection reduces the effective capacity of filters and, in practice, suppresses small singular-value directions that often correspond to high-frequency/noisy modes over the graph. This prevents representation collapse in shallow models and empirically reduces over-smoothing in deeper convex models. See Appendix \ref{supp:excel} for more details.

\section{CONCLUSION}
We presented CGNNs, which transform the training of message-passing GNNs to a convex optimization problem. They ensure global optimality and computational efficiency, while having provable generalization and statistical guarantees. Empirically, we showed that our CGNNs can effectively capture complex patterns without the need for over-parameterization or extensive data. This insight offers a more efficient and data-scarce alternative to traditional deep learning approaches, with substantial improvements across various benchmark datasets. As noted in Section \ref{sec:Scalable Learning of CGCNs and time complexity}, CGNNs' training is heavily affected by the factorization of the kernel matrix in terms of time and space complexity. Addressing these challenges is thus a key direction for future work.

\bibliography{sample}

\section*{Checklist}

\begin{enumerate}

  \item For all models and algorithms presented, check if you include:
  \begin{enumerate}
    \item A clear description of the mathematical setting, assumptions, algorithm, and/or model. [Yes. Our mathematical setting and model are described in Section \ref{sec:gnn}, while in Sections \ref{sec:nonlinear}--\ref{sec:Convergence and Generalization Error} we clarify that certain theoretical claims, such as inclusion in the RKHS and generalization guarantees in Theorem \ref{thm:generalization error main}, hold only for certain choices of kernel functions and sufficiently smooth activation functions (see Appendices \ref{sec:inverse}-\ref{sec:valid act}). Our algorithm for learning multi-layer CGCNs is fully described in Section \ref{sec:The Algorithm for Learning CGCNs}.]
    \item An analysis of the properties and complexity (time, space, sample size) of any algorithm. [Yes. Section \ref{sec:Scalable Learning of CGCNs and time complexity} and Appendix \ref{supp:nystrom} discuss the computational efficiency of the proposed algorithms and how they scale with dataset size.]
    \item (Optional) Anonymized source code, with specification of all dependencies, including external libraries. [No. The paper fully discloses all the information needed to reproduce the main experimental results. Further, all code is publicly available at \citep{code_implementation} for reproducibility as well as further inspection.]
  \end{enumerate}

  \item For any theoretical claim, check if you include:
  \begin{enumerate}
    \item Statements of the full set of assumptions of all theoretical results. [Yes. In Sections \ref{sec:nonlinear}--\ref{sec:Convergence and Generalization Error}, we clarify that certain theoretical claims, such as inclusion in the RKHS and generalization guarantees in Theorem \ref{thm:generalization error main}, hold only for certain choices of kernel functions (e.g., Gaussian RBF) and sufficiently smooth activation functions.]
    \item Complete proofs of all theoretical results. [Yes. Lemma \ref{lemma:nuclear norm const} is proven in Appendix \ref{supp:nuclear norm const}, in Appendix \ref{supp:rephrase} we prove Lemma \ref{lemma:filter rephrased}, Lemma \ref{lemma:ker} in proven in Appendix \ref{supp:kernelization}, and Theorem \ref{thm:generalization error main} is discussed in Section \ref{sec:Convergence and Generalization Error} along with a proof sketch, while a complete proof appears in Appendix \ref{supp:proof of thm 1}.]
    \item Clear explanations of any assumptions. [Yes. See Appendices \ref{sec:inverse}-\ref{sec:gauss} for possible choices of kernels and Appendix \ref{sec:valid act} for valid activation functions, as well as an elaboration on sufficient smoothness in Remark \ref{remark:sufficiently smooth}.]     
  \end{enumerate}

  \item For all figures and tables that present empirical results, check if you include:
  \begin{enumerate}
    \item The code, data, and instructions needed to reproduce the main experimental results (either in the supplemental material or as a URL). [No. The paper fully discloses all the information needed to reproduce the main experimental results. All code is publicly available at \citep{code_implementation} for reproducibility as well as further inspection.]
    \item All the training details (e.g., data splits, hyperparameters, how they were chosen). [Yes. In Section \ref{sec:Experimental evaluation} and Appendix~\ref{supp:Implementation Details}, we specify all the training and test details (e.g., data splits, hyperparameters, how they were chosen, type of optimizer, etc.). The statistics of the considered datasets appear in Appendix \ref{supp:Dataset Descriptions}, while Appendix~\ref{supp:Implementation Details} includes additional implementation details.]
    \item A clear definition of the specific measure or statistics and error bars (e.g., with respect to the random seed after running experiments multiple times). [Yes. In Table \ref{tab:convex-vs-non-convex} we explicitly report standard deviations. Similarly, the bar plots in Figure \ref{fig:accuracy bar plots} report mean accuracies over four runs along with error bars that denote the standard deviation.]
    \item A description of the computing infrastructure used. (e.g., type of GPUs, internal cluster, or cloud provider). [Yes. The paper provides sufficient information on the computing resources needed to reproduce the experiments in Appendix \ref{supp:Implementation Details}.]
  \end{enumerate}

  \item If you are using existing assets (e.g., code, data, models) or curating/releasing new assets, check if you include:
  \begin{enumerate}
    \item Citations of the creator If your work uses existing assets. [Yes. The creators and original owners of assets used in the paper are properly credited.]
    \item The license information of the assets, if applicable. [Yes. The license and terms of use are explicitly mentioned and properly respected. We also state which version of the asset is used and, where possible, include a URL.]
    \item New assets either in the supplemental material or as a URL, if applicable. [Not Applicable. The paper does not release new assets.]
    \item Information about consent from data providers/curators. [Yes. As mentioned earlier, the license and terms of use are explicitly mentioned and properly respected.]
    \item Discussion of sensible content if applicable, e.g., personally identifiable information or offensive content. [Not Applicable.]
  \end{enumerate}

  \item If you used crowdsourcing or conducted research with human subjects, check if you include:
  \begin{enumerate}
    \item The full text of instructions given to participants and screenshots. [Not Applicable. The paper does not involve crowdsourcing nor research with human subjects.]
    \item Descriptions of potential participant risks, with links to Institutional Review Board (IRB) approvals if applicable. [Not Applicable. The paper does not involve crowdsourcing nor research with human subjects.]
    \item The estimated hourly wage paid to participants and the total amount spent on participant compensation. [Not Applicable. The paper does not involve crowdsourcing nor research with human subjects.]
  \end{enumerate}

\end{enumerate}

\clearpage
\appendix
\thispagestyle{empty}

\onecolumn
\aistatstitle{Convexified Message-Passing Graph Neural Networks: \\
Supplementary Materials}

\section{ADDITIONAL RELATED WORK}
\label{supp:ADDITIONAL RELATED WORK}
\subsection{Graph Transformers} 
Instead of aggregating local neighborhoods, Graph Transformers capture interaction information between any pair of nodes via a single self-attention layer. Some works design specific attention mechanisms or positional encodings \citep{hussain2022global,kreuzer2021rethinking,ma2023graph,ying2021transformers}, while others combine message-passing GNNs to build hybrid architectures \citep{chen2022structure,rampavsek2022recipe,shi2021masked,wu2021representing}. Those methods enable nodes to interact with any other node, facilitating direct modeling of long-range relations. Our framework broadly applies to such hybrid methods, as we exhibit on the leading models GraphGPS \citep{rampavsek2022recipe} and GraphTrans \citep{shi2021masked,wu2021representing}. Empirically, applying our framework consistently improves performance in graph classification tasks, with significant gains on GraphGPS, showcasing the versatility and effectiveness of our approach.

\subsection{Comparison to Convexified CNNs \citep{zhang2017convexified}}
\label{supp:ccnns}
There is a fundamental mathematical gap between CNNs \citep{zhang2017convexified} and message-passing GNNs studied in our work. The transition from grids to graphs is not merely an application of existing theory; it requires solving a distinct set of theoretical problems related, which \cite{zhang2017convexified} do not address. Our work thus differs from that by \cite{zhang2017convexified} in several fundamental dimensions, such as:
\begin{enumerate}
    \item \textbf{Irregular Domains:} \cite{zhang2017convexified} rely on the fixed, grid-like structure of images where "neighbors" are deterministic; the input has a constant dimension and regular structure. They cannot account for a domain where the "receptive field" (neighborhood) size varies from node to node as in GNNs. Further, \cite{zhang2017convexified} cannot handle information exchange between nodes. In contrast, our work focuses on irregular domains consisting of graphs with varying numbers of nodes and edges, which cannot be derived from the fixed-grid analysis of Zhang et al. This lack of stationarity breaks many simplifications used in the CNN setting.

    \item \textbf{Differences in Convexification Procedures:} \cite{zhang2017convexified} use RKHS constructions tied to convolutional filters and patch extraction on grids. The kernel acts on image patches and the analysis exploits fixed patch geometry. Conversely, our work requires a different RKHS construction designed to explicitly model the recursive aggregation of information across multiple hops.

    \item \textbf{Nuclear norm relaxation:} Unlike \cite{zhang2017convexified}, our nuclear norm relaxation is applied to matrix representations of graph filters that concatenate hop-wise filters across features. The low-rank structure thus has node-hop-feature semantics (not just patches in a grid).
\end{enumerate}

Finally, we show how the convexification can be applied to a wide family of message-passing GNNs and hybrid transformer models, which cannot be treated by \cite{zhang2017convexified}.

\subsection{Fundamental Distinctions from \cite{cohen2021convex}}
\label{supp:compare to myself}

While our work and that by \cite{cohen2021convex} use convexification in graph contexts, the architectural foundations, scope and mathematical formulations are fundamentally different.
\begin{itemize}
    \item \textbf{Modeling and Scope:} \cite{cohen2021convex} proposed convex versions of specific GNN models termed as aggregation GNN, which apply only to a signal with temporal structure; particularly, their formulation is restricted to a \textbf{fixed} communication topology. Their work is also narrowly tailored and focused on a specific application in swarm robotics, where convexity arises from modeling assumptions tailored to domain adaptation. In contrast, our CGNN framework establishes a general and theoretically principled convexification of diverse message-passing GNNs by transforming the standard architecture itself, independent of application-specific constraints. In other words, we resolve key mathematical limitations of their work:
    \begin{enumerate}
        \item \cite{cohen2021convex}: Their formulation is mathematically restricted to a \textbf{fixed communication topology}. Consequently, \textbf{their method is undefined if the number of nodes changes between training and inference}. It cannot be applied to standard graph classification benchmarks.
        \item Our CGNN Framework: In contrast, our framework is explicitly designed for \textbf{multiple, variable-sized graphs}. By relaxing the message-passing filters directly (rather than polynomial coefficients of a fixed operator), our method supports training on \textbf{multiple, heterogeneous graphs}. As detailed in Appendix \ref{supp:Experimental Details}, our implementation handles datasets with varying numbers of nodes and edges, \textbf{a capability that is mathematically impossible under the formulation of \cite{cohen2021convex}}.
    \end{enumerate}

    \item \textbf{Modularity and Generalization:} Unlike \cite{cohen2021convex}, whose formulation does not preserve standard GNN modularity and is limited to a \textbf{single fixed graph and a specific control setting}, our framework retains the usual GNN architectural structure and supports \textbf{multiple} graphs during training. \textbf{Our convexification also extends to, e.g., node-level tasks} by applying the same procedure with a node-level loss on the final-layer features; no further reformulation is required. In contrast, their model cannot be directly adapted to generic node or graph prediction without substantial redesign. Hence, CGNNs provide a scalable, domain-agnostic convexification framework with a fundamentally different mathematical basis. In summary, the key modularity differences are:
    \begin{enumerate}
        \item \cite{cohen2021convex}: Narrowly tailored to a specific control setting (swarm robotics) and requires substantial redesign to adapt to other tasks. 
        \item Our Work: Our framework preserves the modularity of standard GNNs; extending our convexification to node-level tasks (as opposed to graph-level) requires only a change in the final loss function, not a reformulation of the architecture.
    \end{enumerate}

    \item \textbf{Tighter Theoretical Foundations and Different Models:} Our nuclear norm relaxation step and derived upper bound on relaxed filters’ nuclear norm are fundamentally different, much tighter and more general than those presented by \cite{cohen2021convex}.
    \begin{enumerate}
        \item The analysis by \cite{cohen2021convex} is limited to \textbf{polynomial} filters. After the nuclear norm relaxation of \textbf{polynomial} filters, \cite{cohen2021convex} prove that the nuclear norm of the filters at layer $\ell$ is at most $R_{\ell} F_{\ell} F_{\ell-1} \sqrt{\sum_i (\sum_k \lambda_{i}^{k})^2}$. Here, $\lambda_1, \dots, \lambda_n$ are the distinct eigenvalues of the shift operator $\mathcal{S}$, $R_{\ell},F_{\ell},K$ are as in Section \ref{sec:gnn}.

        \item Our analysis is fundamentally different: it is \textbf{not restricted to polynomial filters}, but applies to \textbf{general} filters. After the nuclear norm relaxation, Lemma \ref{lemma:nuclear norm const} gives an upper bound on the filters’ nuclear norm, whose derivation is simpler and yields a much tighter upper bound for general filters. Namely, we prove that the nuclear norm of \textbf{\textit{any}} possible filter at layer $\ell$ is at most $R_{\ell} \sqrt{F_{\ell} F_{\ell-1} (K+1)}$. Eventually, this results in a smaller generalization gap bound, as observed in Theorem \ref{thm:generalization error main}.
    \end{enumerate}

    \item \textbf{Algorithmic Differences:} Hence, our relaxation to a reproducing kernel Hilbert space (RKHS) and its analysis also substantially differ from \cite{cohen2021convex}. 
    \begin{enumerate}
        \item The methods of \cite{cohen2021convex} are limited to learning a relaxation of the frequency representation of \textbf{polynomial} filters. Further, their algorithm is restricted to using classic projected gradient descent instead of more modern optimizers. Also, the implementation of their framework is confined to graphs with the \textbf{same} number of nodes.
        \item As before, our analysis \textbf{is not restricted to polynomial filters}. Rather, it applies to \textbf{general non-convex} filters, by directly learning the relaxed filters themselves. Our algorithm can also employ a projected version of any modern optimizer (e.g., Adam). Additionally, as detailed in Appendix \ref{supp:Experimental Details}, our implementation utilizes several components that allow use to train our convex models on datasets with graphs containing \textbf{varying numbers of nodes and edges}.
    \end{enumerate}
    In short, the algorithmic differences are:
    \begin{enumerate}
        \item \cite{cohen2021convex}: \textbf{Restricted to learning a relaxation of the frequency representation of polynomial filters} using classic projected gradient descent. 
        \item Our Work: Our relaxation to a Reproducing Kernel Hilbert Space (RKHS) allows us to \textbf{directly learn the relaxed filters using any modern optimizer} (e.g., Adam), ensuring better scalability and convergence in practice.
    \end{enumerate}
\end{itemize}

\subsection{GNNs as Unified Optimization Problems}
\label{supp:GNNs as Unified Optimization Problems}
Our convexification framework relates to and fundamentally differs from prior work on GNNs as unified optimization problems. \cite{chen2023bridging} first classify spatial GNNs based on how they aggregate neighbor information, while discussing how graph convolution can be depicted as an optimization problem. They also discuss how filters of spectral GNNs can be approximated via either linear, polynomial or rational approximation. The works by \cite{he2021bernnet}, \cite{wang2022powerful} and \cite{guo2023graph} focus on polynomial approximation, treating spectral filters as parameterized by a polynomial basis. Specifically: 
\begin{itemize}
    \item BernNet \citep{he2021bernnet} parameterizes an arbitrary spectral filter in a Bernstein polynomial basis. BernNet is analyzed from the perspective of graph optimization, showing that any polynomial filter that attempts to approximate a valid filter has to take the form of BernNet.
    \item \cite{wang2022powerful} notice that spectral GNNs with different polynomial bases of the spectral filters have the same expressive power but different empirical performance. Thus, for **linear** GNNs, they analyze the effect of polynomial basis from an optimization perspective. Specifically, they show that a set of orthonormal bases whose weight function corresponds to the graph signal density can enable linear GNNs to achieve the highest convergence rate.
    \item \cite{guo2023graph} continue that work, by first proposing for learning an orthonormal polynomial basis. Then, they tackle the definition of optimal basis from an optimization perspective in [3], proposing a model that computes the optimal basis for a given graph structure and graph signal.

\end{itemize}

However, in all those works training remains \textit{\textbf{non-convex}}. Our contribution is thus orthogonal and complementary: rather than deriving architectures, we \textbf{\textit{convexify the training problem}} itself for general, non-convex message-passing GNNs (not only spectral ones). Training thus becomes a convex optimization problem, ensuring global optimality as well as an accurate and rigorous analysis of statistical properties. Further, our framework applies directly to general message-passing GNNs and hybrid transformer architectures like GraphGPS and GraphTrans, whereas The works by \cite{he2021bernnet}, \cite{wang2022powerful} and \cite{guo2023graph} only cover spectral GNNs. In fact, the optimization-oriented analysis of \cite{wang2022powerful} is restricted to \textit{linear} GNNs, while we focus on convexifying \textbf{\textit{non-convex}} GNNs. We will add a dedicated discussion explicitly contrasting these settings.

\subsection{Classic Graph Kernels}
\label{supp:graph kernels}
\begin{enumerate}
    \item \textbf{The relation to Weisfeiler-Lehman:} The most direct link lies in the Weisfeiler-Lehman (WL) graph kernel \citep{shervashidze2011weisfeiler}. It is well-established that standard message-passing GNNs are functionally equivalent to the 1-WL isomorphism test in terms of expressive power \citep{xu2018how}. Classical WL kernels operate by explicitly mapping graphs to high-dimensional sparse feature vectors containing counts of color-refined subtrees, which are then typically optimized via Support Vector Machines (SVMs).
    \begin{itemize}
        \item \textbf{The WL kernel approach:} The kernel method separates feature extraction (WL refinement) from learning (SVM). The "learning" is convex (quadratic programming), but the features are fixed and discrete.
        \item \textbf{Our Approach:} Our CGNN framework internalizes this process. By mapping the nonlinear message-passing filters themselves into an RKHS, we do not merely extract fixed features; we learn the propagation mechanism itself. Crucially, we achieve this while retaining the hallmark benefit of the SVM era: \textbf{\textit{convexity}}. CGNNs thus guarantee global optima, eliminating the local minima issues plaguing standard non-convex GNN training.
    \end{itemize}

    \item \textbf{Implicit vs. Explicit Infinite-Dimensional Spaces:} Classical graph kernels rely on the kernel trick to implicitly operate in high-dimensional spaces without computing coordinates. Conversely, widely-used GNNs (e.g., GCN, GIN) operate in explicit, low-dimensional Euclidean spaces. Convexified GNNs occupy a middle ground that leverages the best of both worlds. We utilize the RKHS formalism to first parameterize graph filters in a countable-dimensional feature space. By the representer theorem, we give a reduction of the original ERM problem to a \textit{\textbf{finite}}-dimensional one. Then, we solve the learning problem via projected gradient descent. This mirrors the Infinite-Width GNN (or Neural Tangent Kernel) regime, but with a tractable, finite-data implementation. While Graph Neural Tangent Kernels (GNTK) \citep{du2019graph,krishnagopal2023graph} are often static and used as fixed priors, CGNNs remain adaptable, learning the representation within the convex constraint set.

    \item \textbf{Dataset Implications and Generalization:} The connection is empirically relevant given the standard evaluation benchmarks. Datasets such as TUDataset (e.g., MUTAG, PROTEINS, NCI1) were originally curated to benchmark graph kernels. For years, kernel methods (like the WL-subtree kernel and Deep Graph Kernels) outperformed early GNNs on these tasks due to the small sample sizes, where over-parameterized GNNs were prone to overfitting. By convexifying GNNs, our method introduces rigorous regularization inherent to the RKHS norm. This explains the strong performance of CGNNs on these “kernel-dominated" datasets: we provide the adaptivity of a neural network but with the complexity control and generalization guarantees typically associated with SVM-based kernel methods. Thus, CGNNs can be viewed as the natural neural evolution of the WL-kernel: preserving the global optimization guarantees while enabling end-to-end learning of continuous message-passing dynamics.

\end{enumerate}

\newpage
\section{UPPER BOUNDING THE NUCLEAR NORM}
\label{supp:nuclear norm const}

\begin{customlemma}{\ref{lemma:nuclear norm const}}
    For any layer $\ell$, $\|\mathcal{A}_{\ell}\|_* \leq \mathcal{B}_\ell$ for some constant $\mathcal{B}_\ell>0$ dependent on $R_\ell , F_\ell, F_{\ell-1}, K$.
\end{customlemma}
\begin{proof}
    Note that the nuclear norm of any matrix $B$ can be bounded as $\|B\|_* \leq \|B\|_F \sqrt{\rk(B)}$, where $\|B\|_F$ is the Frobenius norm which is identical to the standard Euclidean norm of the vectorized version of the matrix. Hence, the nuclear norm of the filters $\mathcal{A}_{\ell}$ satisfying \eqref{eq:bound cons} and \eqref{eq:rank cons} is upper bounded by 
    \[
    \|\mathcal{A}_{\ell}\|_*^2 \leq \|\mathcal{A}_{\ell }\|_F^2 F_{\ell-1} = {\sum_{k=0}^K \sum_{f \in [F_\ell]} \sum_{g \in [F_{\ell-1}]} |a_{\ell k}^{fg}|^2} \cdot {F_{\ell-1}} \leq R_\ell^2 {F_\ell F_{\ell-1} (K+1)}.
    \]
    That is, $\|\mathcal{A}_{\ell}\|_* \leq \mathcal{B}_\ell$ for $\mathcal{B}_\ell := R_\ell \sqrt{F_\ell F_{\ell-1} (K+1)}$.
\end{proof}

\section{CHOICE OF KERNELS AND VALID ACTIVATIONS}
\label{sec:choice-of-kernels}
In this section, we provide properties regarding two types of kernels: the \textit{inverse polynomial kernel} (Section \ref{sec:inverse}) and the \textit{Gaussian RBF kernel} (Section \ref{sec:gauss}). We show that the reproducing kernel Hilbert space (RKHS) corresponding to these kernels contain the nonlinear filter at each layer $\ell$ for the $f^{\text{th}}$ feature of node $i$ for certain choices of an activation function $\sigma$. Finally, we summarize the possible choices for an activation function $\sigma$ that result from our discussion (Section \ref{sec:valid act}).

\subsection{Inverse Polynomial Kernel}
\label{sec:inverse}
For $F$ features, the \textit{inverse polynomial kernel} $\kappa^{IP}:\mathbb{R}^{F}\times\mathbb{R}^{F} \rightarrow \mathbb{R}$ is given as follows:
\begin{equation}
    \label{eq:inverse-poly}
    \kappa^{IP}(\mathbf{z},\mathbf{z}') := \frac{1}{2 - \langle \mathbf{z},\mathbf{z}'\rangle}; \quad \|\mathbf{z}\|_2 \leq 1,\|\mathbf{z}'\|_2 \leq 1
\end{equation}
\citet{zhang2017convexified} prove that $\kappa^{IP}$ indeed constitutes a legal kernel function. We include the proof to make the paper self-contained: they provide a feature mapping $\varphi:\mathbb{R}^{F} \rightarrow \ell^2(\mathbb{N})$ for which $\kappa^{IP}(\mathbf{z},\mathbf{z}')=\langle \varphi(\mathbf{z}),\varphi(\mathbf{z}')\rangle$. The $(k_1,\dots,k_j)$-th coordinate of $\varphi(\mathbf{z})$, where $j \in \mathbb{N}$ and $k_1,\dots,k_j \in [F_\ell]$, is given by $2^{-\frac{j+1}{2}}x_{k_1}\dots x_{k_j}$. Thus:
\begin{equation}
    \label{eq:def-varphi}    
        \langle \varphi(\mathbf{z}),\varphi(\mathbf{z}')\rangle = \sum_{j=0}^\infty 2^{-(j+1)} \sum_{(k_1,\dots,k_j) \in [F_\ell]^j} z_{k_1}\dots z_{k_j} z'_{k_1}\dots z'_{k_j}
\end{equation}
Since $\|\mathbf{z}\|_2 \leq 1$ and $\|\mathbf{z}'\|_2\leq 1$, the series above is absolutely convergent. Combined with the fact that $|\langle \mathbf{z},\mathbf{z}'\rangle| \leq 1$, this yields a simplified form of \eqref{eq:def-varphi}:
\begin{equation}
    \label{eq:simlified-varphi}
        \langle \varphi(\mathbf{z}),\varphi(\mathbf{z}')\rangle = \sum_{j=0}^\infty 2^{-(j+1)} (\langle \mathbf{z},\mathbf{z}'\rangle)^j = \frac{1}{2 - \langle \mathbf{z},\mathbf{z}'\rangle} = \kappa^{IP}(\mathbf{z},\mathbf{z}') 
\end{equation}
as desired. Combining \eqref{eq:filter map} with the proof of Lemma 1 in Appendix A.1 of Zhang et al. \citep{zhang2016l1}, the RKHS associated with $\kappa^{IP}$ is comprised of the class of nonlinear graph filters in \eqref{eq:filter map}. It can be formulated as follows:
\begin{corollary}
\label{corollary:inverse-contains}
    Assume that $\sigma$ has a polynomial expansion $\sigma(t) = \sum_{j=0}^\infty a_j t^j$. Let $C_\sigma(t):=\sqrt{\sum_{j=0}^\infty 2^{j+1}a_j^2 t^{2j}}$. If $C_\sigma (\|\mathbf{a}^f_{\ell k}\|_2) < \infty$, then the RKHS induced by $\kappa^{IP}$ contains the function $\tau_\ell^{f}: \mathcal{Z} \in \mathbb{R}^{(K+1) \times F_{\ell-1}} \mapsto \sigma(\sum_{k=0}^K \langle [\mathcal{Z}]_{k+1}, \mathbf{a}^f_{\ell k} \rangle)$ with Hilbert norm $\|\tau_\ell^{f}\|_{\mathcal{A}} \leq  C_\sigma(\sum_{k=0}^K \|\mathbf{a}^f_{\ell k}\|_2)$.
\end{corollary}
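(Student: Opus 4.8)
The plan is to reduce the statement to the single-neuron RKHS result of Zhang et al.\ (the proof of Lemma~1 in Appendix~A.1 of \citep{zhang2016l1}) by flattening the matrix argument $\mathcal{Z}$ into a vector. First I would stack the rows of $\mathcal{Z}$ into $\tilde{\mathbf{z}} := \mathrm{vec}(\mathcal{Z}) \in \mathbb{R}^{(K+1)F_{\ell-1}}$ and likewise collect the filter rows into $\tilde{\mathbf{a}} \in \mathbb{R}^{(K+1)F_{\ell-1}}$ whose $(k+1)$-th block is $\mathbf{a}^f_{\ell k}$. Then $\sum_{k=0}^K \langle [\mathcal{Z}]_{k+1}, \mathbf{a}^f_{\ell k}\rangle = \langle \tilde{\mathbf{z}}, \tilde{\mathbf{a}}\rangle$, so $\tau^f_\ell(\mathcal{Z}) = \sigma(\langle \tilde{\mathbf{z}}, \tilde{\mathbf{a}}\rangle)$ becomes a function of a single inner product, exactly the form handled by the inverse polynomial RKHS over $\mathbb{R}^{(K+1)F_{\ell-1}}$ via the feature map $\varphi$ of \eqref{eq:def-varphi}.

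The second step constructs the witnessing coefficient vector explicitly and verifies the representation. Writing $\sigma(\langle \tilde{\mathbf{z}}, \tilde{\mathbf{a}}\rangle) = \sum_{j=0}^\infty a_j \langle \tilde{\mathbf{z}}, \tilde{\mathbf{a}}\rangle^j$ and expanding each power as $\langle \tilde{\mathbf{z}}, \tilde{\mathbf{a}}\rangle^j = \sum_{(k_1,\dots,k_j)} (\tilde{z}_{k_1}\cdots\tilde{z}_{k_j})(\tilde{a}_{k_1}\cdots\tilde{a}_{k_j})$, I would define $w$ whose $(k_1,\dots,k_j)$-th coordinate is $a_j\, 2^{(j+1)/2}\, \tilde{a}_{k_1}\cdots\tilde{a}_{k_j}$. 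Pairing $w$ against $\varphi(\tilde{\mathbf{z}})$, whose matching coordinate is $2^{-(j+1)/2}\tilde{z}_{k_1}\cdots\tilde{z}_{k_j}$, the $2^{\pm(j+1)/2}$ factors cancel and recover $\langle w, \varphi(\tilde{\mathbf{z}})\rangle = \sigma(\langle \tilde{\mathbf{z}}, \tilde{\mathbf{a}}\rangle) = \tau^f_\ell(\mathcal{Z})$. Summing squared coordinates gives $\|w\|_{\ell^2}^2 = \sum_{j=0}^\infty 2^{j+1} a_j^2 (\sum_k \tilde{a}_k^2)^j = \sum_{j=0}^\infty 2^{j+1} a_j^2 \|\tilde{\mathbf{a}}\|_2^{2j} = C_\sigma(\|\tilde{\mathbf{a}}\|_2)^2$, so finiteness of this quantity certifies $\tau^f_\ell$ lies in the RKHS and, since the RKHS norm is the infimum of $\|w\|_{\ell^2}$ over all valid representations, yields $\|\tau^f_\ell\|_{\mathcal{A}} \le C_\sigma(\|\tilde{\mathbf{a}}\|_2)$.

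Finally I would convert the combined-norm bound into the stated per-hop form. Since $\|\tilde{\mathbf{a}}\|_2 = \sqrt{\sum_{k=0}^K \|\mathbf{a}^f_{\ell k}\|_2^2} \le \sum_{k=0}^K \|\mathbf{a}^f_{\ell k}\|_2$ (squaring the right-hand side only adds nonnegative cross terms) and $C_\sigma$ is nondecreasing on $[0,\infty)$ because each summand $2^{j+1}a_j^2 t^{2j}$ is, monotonicity gives $C_\sigma(\|\tilde{\mathbf{a}}\|_2) \le C_\sigma(\sum_{k=0}^K \|\mathbf{a}^f_{\ell k}\|_2)$, which is precisely the claimed Hilbert-norm bound.

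The part demanding the most care is the analytic bookkeeping that legitimizes the term-by-term manipulation: I must ensure the power series for $\sigma$ and the multi-index expansion converge absolutely so that reindexing into the $\varphi$-coordinates is valid. This is exactly where the kernel's domain restriction $\|\tilde{\mathbf{z}}\|_2 \le 1$ (inherited from \eqref{eq:inverse-poly} and guaranteed by normalizing the aggregated features $\mathbf{z}_i^k$) and the finiteness hypothesis $C_\sigma(\cdot) < \infty$ are both essential; everything else is a direct transcription of the single-neuron argument of \citep{zhang2016l1} once the vectorization identity is in place.
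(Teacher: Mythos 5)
Your proposal is correct and follows essentially the same route as the paper, which proves this corollary by deferring to the single-neuron argument of Lemma~1 in Appendix~A.1 of \citet{zhang2016l1}: your vectorization of $\mathcal{Z}$ and $\{\mathbf{a}^f_{\ell k}\}_k$ reduces the multi-hop sum to one inner product exactly as the paper intends, and your explicit coefficient vector $w$ with the $2^{\pm(j+1)/2}$ cancellation, the norm computation $\|w\|_{\ell^2} = C_\sigma(\|\tilde{\mathbf{a}}\|_2)$, and the monotonicity step $C_\sigma(\|\tilde{\mathbf{a}}\|_2) \leq C_\sigma(\sum_{k=0}^K \|\mathbf{a}^f_{\ell k}\|_2)$ are precisely the cited argument spelled out in full. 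Your attention to absolute convergence and the unit-ball domain restriction matches (and indeed makes explicit) the assumptions the paper imposes via normalization in Appendix~\ref{supp:kernelization}.
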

Consequently, Zhang et al. \citep{zhang2016l1} state that upper bounding $C_\sigma(t)$ for a particular activation function $\sigma$ is sufficient. To enforce $C_\sigma(t) < \infty$, the scalars $\{a_j\}_{j=0}^\infty$ must rapidly converge to zero, implying that $\sigma$ must be sufficiently smooth. For polynomial functions of degree $d$, the definition of $C_\sigma$ yields $C_\sigma(t)=\mathcal{O}(t^d)$. For the {\it sinusoid} activation $\sigma(t):=sin(t)$, we have:
\begin{equation}
    \label{eq:sinusoid}
    C_\sigma(t) = \sqrt{\sum_{j=0}^\infty \frac{2^{2j+2}}{((2j+1)!)^2}(t^2)^{2j+1}} \leq 2 e^{t^2}
\end{equation}
For the {\it erf} function $\sigma_{erf}(t):= \frac{2}{\sqrt{\pi}}\int_0^t e^{z^2}dz$ and the {\it smoothed hinge loss} function $\sigma_{sh}(t):= \int_{-\infty}^t \frac{1}{2}(\sigma_{erf}(t) + 1)dz$, Zhang et al. \citep{zhang2016l1} proved that $C_\sigma(t)=\mathcal{O}(e^{ct^2})$ for universal numerical constant $c > 0$.
\begin{remark}
\label{remark:sufficiently smooth}
Generally, a function $f$ is \textit{\textbf{sufficiently smooth}} if it is differentiable {\em sufficient} number of times. For instance, if we were to consider $\sigma(t):=\sin(t)$, its taylor expansion is provided by $\sin(t) = \sum_{j = 0}^\infty \frac{(-1)^j}{(2j+1)!}t^{2j+1}$. Denoting $b_j := \frac{(-1)^j}{(2j+1)!}t^{2j+1}$, the ratio test yields $\lim_{j \rightarrow \infty}|\frac{b_{j+1}}{b_{j}}| = 0$ for all $t \in I\!R$. Hence, the radius of convergence of the expansion is the set of all real numbers.

Regarding Corollary \ref{corollary:inverse-contains}, as well as Corollary \ref{corollary:rbf} which follows in the next subsection, for enforcing $C_\sigma(t) < \infty$, the coefficients $\{a_j\}_{j=0}^\infty$ in $\sigma$'s polynomial expansion must quickly converge to zero as illustrated above, thus requiring $\sigma$ to be sufficiently smooth. Such activations include {\bf polynomial} functions, the {\bf sinusoid}, the {\bf erf} function and the {\bf smoothed hinge loss}.
\end{remark}

\subsection{Gaussian RBF Kernel}
\label{sec:gauss}
For $F$ features, the \textit{Gaussian RBF kernel} $\kappa^{RBF}:\mathbb{R}^{F}\times\mathbb{R}^{F} \rightarrow \mathbb{R}$ is given as follows:
\begin{equation}
    \label{eq:rbf}
    \kappa^{RBF}(\mathbf{z},\mathbf{z}') := e^{-\gamma \|\mathbf{z}-\mathbf{z}'\|_2^2}; \quad \|\mathbf{z}\|_2 = \|\mathbf{z}'\|_2 = 1
\end{equation}
Combining \eqref{eq:filter map} with the proof of Lemma 2 in Appendix A.2 of Zhang et al. \citep{zhang2016l1}, the RKHS associated with $\kappa^{RBF}$ is comprised of the class of nonlinear input features. It can be formulated as follows:
\begin{corollary}
\label{corollary:rbf}
    Assume that $\sigma$ has a polynomial expansion $\sigma(t) = \sum_{j=0}^\infty a_j t^j$. Let $C_\sigma(t):=\sqrt{\sum_{j=0}^\infty \frac{j!e^{2\gamma}}{(2\gamma)^j} a_j^2 t^{2j}}$. If $C_\sigma (\|\mathbf{a}^f_{\ell k}\|_2) < \infty$, then the RKHS induced by $\kappa^{RBF}$ contains the function $\tau_\ell^{f}: \mathcal{Z} \in \mathbb{R}^{(K+1) \times F_{\ell-1}} \mapsto \sigma(\sum_{k=0}^K \langle [\mathcal{Z}]_{k+1}, \mathbf{a}^f_{\ell k} \rangle)$ with Hilbert norm $\|\tau_\ell^f\|_{\mathcal{A}} \leq  C_\sigma(\sum_{k=0}^K \|\mathbf{a}^f_{\ell k}\|_2)$.
\end{corollary}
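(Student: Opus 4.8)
The plan is to mirror the proof of Corollary~\ref{corollary:inverse-contains}, replacing the feature map of the inverse polynomial kernel by that of $\kappa^{RBF}$. The first observation is that the filter collapses to a single inner product: writing $\mathbf{a}^f_\ell := (\mathbf{a}^f_{\ell 0}, \dots, \mathbf{a}^f_{\ell K})$ for the concatenation of the hop-wise filters and $\mathbf{v} := ([\mathcal{Z}]_1, \dots, [\mathcal{Z}]_{K+1})$ for the concatenation of the rows of $\mathcal{Z}$, both living in $\mathbb{R}^{(K+1)F_{\ell-1}}$, we have $\sum_{k=0}^K \langle [\mathcal{Z}]_{k+1}, \mathbf{a}^f_{\ell k}\rangle = \langle \mathbf{v}, \mathbf{a}^f_\ell\rangle$, so $\tau^f_\ell(\mathcal{Z}) = \sigma(\langle \mathbf{v}, \mathbf{a}^f_\ell\rangle)$. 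This is exactly the single-patch activation treated by \citet{zhang2016l1}, and it reduces the claim to exhibiting an element of the RKHS over the concatenated domain whose evaluation reproduces $\sigma(\langle \cdot, \mathbf{a}^f_\ell\rangle)$ and whose Hilbert norm we can control.

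Next I would write down the explicit feature map of the RBF kernel on normalized inputs. Using $\|\mathbf{z} - \mathbf{z}'\|_2^2 = 2 - 2\langle \mathbf{z}, \mathbf{z}'\rangle$ under the constraint $\|\mathbf{z}\|_2 = \|\mathbf{z}'\|_2 = 1$, one obtains the absolutely convergent expansion
\begin{equation*}
    \kappa^{RBF}(\mathbf{z}, \mathbf{z}') = e^{-2\gamma}\sum_{j=0}^\infty \frac{(2\gamma)^j}{j!}\langle \mathbf{z}, \mathbf{z}'\rangle^j,
\end{equation*}
which identifies the feature map $\varphi$ whose $(k_1, \dots, k_j)$-th coordinate equals $\sqrt{e^{-2\gamma}(2\gamma)^j/j!}\;z_{k_1}\cdots z_{k_j}$, exactly as in \eqref{eq:def-varphi} but with the inverse-polynomial weights $2^{-(j+1)}$ replaced by $e^{-2\gamma}(2\gamma)^j/j!$. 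I would then define the candidate RKHS element $\Theta$ by setting its $(k_1,\dots,k_j)$-th coordinate to $a_j\,(e^{-2\gamma}(2\gamma)^j/j!)^{-1/2}\,w_{k_1}\cdots w_{k_j}$ with $\mathbf{w} = \mathbf{a}^f_\ell$. A term-by-term match against $\sigma(t) = \sum_j a_j t^j$ makes the feature weights cancel and gives $\langle \Theta, \varphi(\mathbf{v})\rangle = \sum_j a_j \langle \mathbf{v}, \mathbf{w}\rangle^j = \sigma(\langle \mathbf{v}, \mathbf{w}\rangle) = \tau^f_\ell(\mathcal{Z})$, while summing the squared coordinates yields $\|\Theta\|_{\mathcal{A}}^2 = \sum_{j=0}^\infty \frac{j!\,e^{2\gamma}}{(2\gamma)^j} a_j^2 \|\mathbf{w}\|_2^{2j} = C_\sigma(\|\mathbf{a}^f_\ell\|_2)^2$.

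Finally, I would convert the Euclidean norm of the concatenated filter into the stated bound. Since $\|\mathbf{a}^f_\ell\|_2 = \sqrt{\sum_{k=0}^K \|\mathbf{a}^f_{\ell k}\|_2^2} \le \sum_{k=0}^K \|\mathbf{a}^f_{\ell k}\|_2$ and $C_\sigma$ is non-decreasing on $[0,\infty)$ (its series has non-negative coefficients and only even powers), it follows that $\|\tau^f_\ell\|_{\mathcal{A}} = \|\Theta\|_{\mathcal{A}} \le C_\sigma(\sum_{k=0}^K \|\mathbf{a}^f_{\ell k}\|_2)$, as claimed. The main obstacle is the convergence bookkeeping rather than the algebra: the term-by-term evaluation of $\langle \Theta, \varphi(\mathbf{v})\rangle$ and of $\|\Theta\|_{\mathcal{A}}$ is only legitimate when the defining series converge absolutely, which is precisely where the normalization $\|\mathbf{z}\|_2 = 1$ and the finiteness hypothesis $C_\sigma(\cdot) < \infty$ enter. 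This finiteness forces the coefficients $\{a_j\}$ to decay fast enough relative to the super-exponentially growing weights $j!/(2\gamma)^j$, which is the quantitative meaning of ``$\sigma$ sufficiently smooth'' invoked in Remark~\ref{remark:sufficiently smooth}; verifying it for the admissible activations (polynomials, sinusoid, erf, smoothed hinge) is the one step that genuinely uses the structure of $\sigma$.
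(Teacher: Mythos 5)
Your proposal is correct and follows essentially the same route as the paper: the paper proves this corollary by combining the rephrased filter \eqref{eq:filter map} with the proof of Lemma~2 in Appendix~A.2 of \citet{zhang2016l1}, which is precisely the feature-map construction you reconstruct — expanding $\kappa^{RBF}$ on the unit sphere into the series with weights $e^{-2\gamma}(2\gamma)^j/j!$, defining the dual element coordinate-wise so the weights cancel in $\langle \Theta, \varphi(\mathbf{v})\rangle$, and summing squared coordinates to get $C_\sigma(\|\mathbf{a}^f_\ell\|_2)$. Your concatenation of the hop-wise filters together with the monotonicity of $C_\sigma$ to pass from $\|\mathbf{a}^f_\ell\|_2$ to $\sum_{k=0}^K \|\mathbf{a}^f_{\ell k}\|_2$ is exactly the adaptation to the structured aggregation that the paper's citation-based proof implicitly relies on, so you have in fact supplied details the paper only references.
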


Comparing Corollary \ref{corollary:inverse-contains} and Corollary \ref{corollary:rbf}, $\kappa^{RBF}$ requires a stronger condition on the smoothness of the activation function. For polynomial functions of degree $d$, $C_\sigma(t)=\mathcal{O}(t^d)$ remains intact. However, for the {\it sinusoid} activation $\sigma(t):=\sin(t)$, we have:
\begin{equation}
    \label{eq:sinusoid-rbf}
    C_\sigma(t) = \sqrt{e^{2\gamma}\sum_{j=0}^\infty \frac{1}{(2j+1)!}(\frac{t^2}{2\gamma})^{2j+1}} \leq e^{\frac{t^2}{4\gamma}+\gamma}
\end{equation}
In contrast, for both the {\it erf} function $\sigma_{erf}(t):= \frac{2}{\sqrt{\pi}}\int_0^t e^{z^2}dz$ and the {\it smoothed hinge loss} function $\sigma_{sh}(t):= \int_{-\infty}^t \frac{1}{2}(\sigma_{erf}(t) + 1)dz$, $C_\sigma(t)$ is \textbf{infinite}. Hence, \textbf{the RKHS induced by $\kappa^{RBF}$ does not contain input features activated by those two functions}.

\subsection{Valid Activation Functions}
\label{sec:valid act}
Now, we summarize the possible valid choices of an activation function $\sigma$:
\begin{enumerate}
    \item Arbitrary polynomial functions.

    \item The sinusoid activation function $\sigma(t):=\sin(t)$.

    \item The {\it erf} function $\sigma_{erf}(t):= \frac{2}{\sqrt{\pi}}\int_0^t e^{\tau^2} \mathrm{d}\tau$, which approximates the sigmoid function.

    \item The {\it smoothed hinge loss} function $\sigma_{sh}(t):= \int_{-\infty}^t \frac{1}{2}(\sigma_{erf}(\tau) + 1) \mathrm{d}\tau$, which approximates the ReLU function.
\end{enumerate}

Recall that the {\it erf} and {\it smoothed hinge loss} activation functions are unsuited for the RBF kernel.

\section{REPHRASING THE NONLINEAR FILTER AT EACH LAYER}
\label{supp:rephrase}

\begin{proof}
    For any graph signal $\mathcal{S}^k\mathcal{X}$, let $\mathbf{z}_i^k(\mathcal{X}) := [\mathcal{S}^k\mathcal{X}]_i$ be the $i^{\text{th}}$ row of $\mathcal{S}^k\mathcal{X}$, which is the information aggregated at node $i$ from $k$-hops away. We also denote the $f^{\text{th}}$ column of the filters $\mathcal{A}_{\ell k}$ as $\mathbf{a}^f_{\ell k} = [a_{\ell k}^{fg}]_{g \in [F_{\ell-1}]}$. By \eqref{eq:conv-features}, the value of the $f^{\text{th}}$ feature of node $i$ at layer $\ell$ can be thus written as:
    \begin{equation}
        \label{eq:f th feature}
        \begin{aligned}
            [\Psi^{\mathcal{A}_{\ell}}(\mathcal{X};\mathcal{S})]_{if} = \sum_{k=0}^K \sum_{g=1}^{F_{\ell-1}} [\mathbf{z}_i^k(\mathcal{X}_{\ell-1})]_{g} [\mathbf{a}^f_{\ell k}]_{g} = \sum_{k=0}^K \langle \mathbf{z}_i^k(\mathcal{X}_{\ell-1}), \mathbf{a}^f_{\ell k} \rangle.
        \end{aligned}    
    \end{equation}
\end{proof}

\section{KERNELIZATION FOR NONLINEAR ACTIVATIONS}
\label{supp:kernelization}
In this section, we elaborate on the kernelization for the case of nonlinear activation functions, briefly discussed in Section \ref{sec:nonlinear}. Recall that $\sigma$ denotes, for brevity, its entrywise application in \eqref{eq:graph-neural-network}. Further, the of the $f^{\text{th}}$ feature of node $i$ at layer $\ell$'s output is given by $\sigma(\sum_{k=0}^K \langle \mathbf{z}_i^k(\mathcal{X}_{\ell-1}), \mathbf{a}^f_{\ell k} \rangle)$ by \eqref{eq:f th feature}. According to Appendix \ref{sec:choice-of-kernels}, consider a sufficiently smooth valid activation function $\sigma$, and a proper choice of kernel $\kappa_{\ell}: \mathbb{R}^{F_{\ell-1}} \times \mathbb{R}^{F_{\ell-1}} \rightarrow \mathbb{R}$. Without loss of generality, we assume that, each input input graph signal $\mathcal{X}^{(j)}$, the feature vector $[\mathcal{X}^{(j)}]_i = \mathbf{x}_i^{(j)}$ of each node $i$ resides in the unit $\ell_2$-ball (which can be obtain via normalization). Combined with $\kappa_\ell$'s being a continuous kernel with $\kappa_\ell(\mathbf{z}, \mathbf{z}) \leq 1$, Mercer’s theorem \citep[Theorem 4.49]{steinwart2008support} implies that there exists a feature mapping $\varphi_\ell : \mathbb{R}^{F_{\ell-1}} \rightarrow \ell^2(\mathbb{N})$ for which $\kappa_{\ell}(\mathbf{z},\mathbf{z}')=\langle \varphi_\ell(\mathbf{z}),\varphi_\ell(\mathbf{z}')\rangle$, which yields the following for each feature $f$ of any node $i$ at layer $\ell$:
\begin{equation}
    \label{eq:feature map}
    \sigma\Bigg(\sum_{k=0}^K \langle \mathbf{z}_i^k(\mathcal{X}_{\ell-1}), \mathbf{a}^f_{\ell k} \rangle\Bigg) = \sum_{k=0}^K \langle \varphi_\ell(\mathbf{z}_i^k(\mathcal{X}_{\ell-1})),\varphi_\ell(\mathbf{a}^f_{\ell k}) \rangle =: \sum_{k=0}^K \langle \varphi_\ell(\mathbf{z}_i^k(\mathcal{X}_{\ell-1})), \bar{\mathbf{a}}^f_{\ell k} \rangle
\end{equation}
where $\bar{\mathbf{a}}^f_{\ell k} \in \ell^2(\mathbb{N})$ is a countable-dimensional vector, due to the fact that $\varphi$ itself is a countable sequence of functions. By Corollary \ref{corollary:inverse-contains} and Corollary \ref{corollary:rbf}, we have that $\|\bar{\mathbf{a}}^f_{\ell k}\|_2 \leq C_{\sigma, \ell} (\|{\mathbf{a}}^f_{\ell k}\|_2)$, provided a monotonically increasing $C_\sigma$ which depends on the kernel $\kappa_\ell$. Accordingly, $\varphi_\ell(\mathbf{z})$ may be utilized as the vectorized representation of each output features $\mathbf{z}_i^k(\mathcal{X}_{\ell-1})$, where $\bar{\mathbf{a}}^f_{\ell k}$ constitutes the linear graph filters, thus reducing the problem to training a GCN with the identity activation function. 

Each graph filter is now parametrized by the \textit{countable}-dimensional vector $\bar{\mathbf{a}}^f_{\ell k}$. Subsequently, we supply a reduction of the original ERM problem to a \textit{finite}-dimensional one. Output on the training data $\mathcal{T}:=\{(\mathcal{G}^{(j)}, \mathcal{X}^{(j)}, y^{(j)})\}_{j=1}^{m}$ should be solely considered when solving the ERM problem, i.e., $\sum_{k=0}^K \langle \varphi_\ell(\mathbf{z}_i^k(\mathcal{X}_{\ell-1})), \bar{\mathbf{a}}^f_{\ell k} \rangle$ for each feature $f$ of any node $i$ at layer $\ell$. Let $\mathcal{P}_{\ell-1}^k$ be the orthogonal projector onto the linear subspace spanned by $\Gamma_{\ell-1}^k := \{\mathbf{z}_i^k(\mathcal{X}^{(j)}_{\ell-1}): i \in [n], j \in [m], k \in [K] \cup \{0\}\}$. Therefore, for any $(i,j) \in [n] \times [m]$:
\begin{equation}
    \label{eq:ortho-projector}
    \sum_{k=0}^K \langle \varphi_\ell(\mathbf{z}_i^k(\mathcal{X}_{\ell-1}^{(j)})), \bar{\mathbf{a}}^f_{\ell k} \rangle = \sum_{k=0}^K \langle \mathcal{P}_{\ell-1}^k \varphi_\ell(\mathbf{z}_i^k(\mathcal{X}_{\ell-1}^{(j)})), \bar{\mathbf{a}}^f_{\ell k} \rangle = \sum_{k=0}^K \langle \varphi_\ell(\mathbf{z}_i^k(\mathcal{X}_{\ell-1}^{(j)})), \mathcal{P}_{\ell-1}^k \bar{\mathbf{a}}^f_{\ell k} \rangle
\end{equation}
where the last step stems from the fact that the orthogonal projector $\mathcal{P}_{\ell-1}^k$ is self-adjoint. Without loss of generality, for the sake of solving the ERM, we assume that $\bar{\mathbf{a}}^f_{\ell k}$ belongs to $\Gamma_{\ell-1}^k$. Similarly to \eqref{eq:reparametrized}, we can thus reparameterize it by:
\begin{equation}
    \label{eq:w-bar-reparameterize}
    \bar{\mathbf{a}}^f_{\ell k} = \sum_{(i,j) \in [n] \times [m]} \beta_{\ell k,(i,j)}^f \varphi_\ell(\mathbf{z}_i^k(\mathcal{X}_{\ell-1}^{(j)}))
\end{equation}
where we denote $\mathbf{\beta}_{\ell k}^f \in \mathbb{R}^{nm}$ as a vector of the coefficients, whose $(i,j)$-th coordinate is $\beta_{\ell k,(i,j)}^f$. To estimate $\bar{\mathbf{a}}^f_{\ell k}$, it thus suffices to estimate the vector $\mathbf{\beta}_{\ell k}^f$. By definition, the relation $(\mathbf{\beta}_{\ell k}^f)^T \mathcal{K}_{\ell}^k \mathbf{\beta}_{\ell k}^f = \|\bar{\mathbf{a}}^f_{\ell k}\|_2^2$ holds, where $\mathcal{K}_{\ell}^k \in \mathbb{R}^{nm \times nm}$ is the symmetric kernel matrix from Section \ref{sec:nonlinear}, whose rows and columns are indexed by some triple $(i,j) \in [n] \times [m]$. The entry at row $(i,j)$ and column $(i',j')$ equals to $\kappa_\ell (\mathbf{z}_i^k(\mathcal{X}^{(j)}_{\ell-1}), \mathbf{z}_{i'}^{k}(\mathcal{X}^{(j')}_{\ell-1}))$. A suitable factorization of $\mathcal{K}_{\ell}^k$ such that $\mathcal{K}_{\ell}^k = Q_\ell^k (Q_\ell^k)^\top$ for $Q_\ell^k \in \mathbb{R}^{nm \times P}$ then yields the following norm constraint:
\begin{equation}
    \label{eq:norm-constraint}
    \begin{aligned}
        \|(Q_\ell^k)^\top \mathbf{\beta}_{\ell k}^f\|_2 = \sqrt{(\mathbf{\beta}_{\ell k}^f)^T \mathcal{K}_{\ell}^k \mathbf{\beta}_{\ell k}^f} = \|\bar{\mathbf{a}}^f_{\ell k}\|_2^2 \leq C_{\sigma,\ell} (\|{\mathbf{a}}^f_{\ell k}\|_2) \leq  C_{\sigma,\ell} (R_\ell)
    \end{aligned}   
\end{equation}
where the last inequality is due to $\|{\mathbf{a}}^f_{\ell k}\|_2 \leq R_\ell$ due to \eqref{eq:gnn-model}.

Let $\mathbf{v}_{\ell}^k(\mathbf{z}) \in \mathbb{R}^{nm}$ be a vector whose $(i,j)$-th coordinate equals to $\kappa_{\ell}(\mathbf{z},\mathbf{z}_i^k(\mathcal{X}_{\ell-1}^{(j)}))$. By \eqref{eq:feature map} and \eqref{eq:w-bar-reparameterize}, we can write:
\begin{equation}
    \label{eq:rkhs-sigma-alternate}
    \begin{aligned}
        \sigma\Bigg(\sum_{k=0}^K \langle \mathbf{z}_i^k(\mathcal{X}_{\ell-1}^{(j)}), \mathbf{a}^f_{\ell k} \rangle\Bigg) \equiv \sum_{k=0}^K \langle \varphi_\ell(\mathbf{z}_i^k(\mathcal{X}_{\ell-1}^{(j)})), \bar{\mathbf{a}}^f_{\ell k} \rangle \equiv \sum_{k=0}^K \langle \mathbf{v}_{\ell}^k(\mathbf{z}_i^k(\mathcal{X}_{\ell-1}^{(j)})), \mathbf{\beta}_{\ell k}^f \rangle \\
        \equiv \sum_{k=0}^K \langle \mathbf{\beta}_{\ell k}^f, \mathbf{v}_{\ell}^k(\mathbf{z}_i^k(\mathcal{X}_{\ell-1}^{(j)})) \rangle
    \end{aligned}
\end{equation}
For each $\mathbf{z}_i^k(\mathcal{X}_{\ell-1}^{(j)})$, the vector $\mathbf{v}_{\ell}^k(\mathbf{z}_i^k(\mathcal{X}_{\ell-1}^{(j)}))$ resides in the column space of the symmetric kernel matrix $\mathcal{K}_{\ell}^k$. Thus, given that $(Q_\ell^k)^\dag$ denotes the pseudo-inverse of $Q_\ell^k$, we infer the following for any $(i,j) \in [n] \times [m]$:
\begin{equation}
    \label{eq:pseudo-inverse}
    \begin{aligned}
        \sum_{k=0}^K \langle \mathbf{\beta}_{\ell k}^f, \mathbf{v}_{\ell}^k(\mathbf{z}_i^k(\mathcal{X}_{\ell-1}^{(j)})) \rangle = \sum_{k=0}^K  (\mathbf{\beta}_{\ell k}^f)^\top Q_\ell^k (Q_\ell^k)^\dag \mathbf{v}_{\ell}^k(\mathbf{z}_i^k(\mathcal{X}_{\ell-1}^{(j)})) \\
        = \sum_{k=0}^K \Big\langle ((Q_\ell^k)^\top)^\dag (Q_\ell^k)^\top \mathbf{\beta}_{\ell k}^f, \mathbf{v}_{\ell}^k(\mathbf{z}_i^k(\mathcal{X}_{\ell-1}^{(j)})) \Big\rangle
    \end{aligned}
\end{equation}
Accordingly, replacing the vector $\mathbf{\beta}_{\ell k}^f$ on the right-hand side of \eqref{eq:rkhs-sigma-alternate} by $((Q_\ell^k)^\top)^\dag (Q_\ell^k)^\top \mathbf{\beta}_{\ell k}^f$ won't affect the empirical risk. For any matrix $\mathcal{Z} \in \mathbb{R}^{(K+1) \times F_{\ell-1}}$ whose $(k+1)^{\text{th}}$ row is given by $[\mathcal{Z}]_{k+1} := \mathbf{z}_k$ for $k \in [K] \cup \{0\}$, we thereby propose the following parameterization function: 
\begin{equation}
    \label{eq:parameterization}
    \tau_\ell^f(\mathcal{Z}) := \sum_{k=0}^K \Big\langle ((Q_\ell^k)^\top)^\dag (Q_\ell^k)^\top \mathbf{\beta}_{\ell k}^f, \mathbf{v}_{\ell}^k(\mathbf{z}_k) \Big\rangle = \sum_{k=0}^K \langle (Q_\ell^k)^\dag \mathbf{v}_{\ell}^k(\mathbf{z}_k), (Q_\ell^k)^\top \mathbf{\beta}_{\ell k}^f \rangle
\end{equation}
Consequently, the value of the $f^{\text{th}}$ feature of node $i$ at layer $\ell$ is given by $\tau_\ell^f(\mathcal{Z}_{i,\ell}(\mathcal{X}_{\ell-1}))$, where $\mathcal{Z}_{i,\ell}(\mathcal{X}_{\ell-1}) \in \mathbb{R}^{(K+1) \times F_{\ell-1}}$ is the matrix whose $(k+1)^{\text{th}}$ row is $[\mathcal{Z}_{i,\ell}(\mathcal{X}_{\ell-1})]_k = \mathbf{z}_i^k(\mathcal{X}_{\ell-1})$ for $k \in [K]\cup \{0\}$. That is, $(Q_\ell^k)^\top \mathbf{\beta}_{\ell k}^f$ now act as the filters under the reparameterization in \eqref{eq:parameterization} which satisfy the following constraints:

\textbf{Norm Constraint.} $\|(Q_\ell^k)^\top \mathbf{\beta}_{\ell k}^f\|_2 \leq  C_{\sigma,\ell} (R_\ell)$ due to \eqref{eq:norm-constraint}.

\textbf{Rank Constraint.} Its rank is at most $F_{\ell-1}$.

Therefore, similarly to the linear case in Section \ref{sec:linear}, those constraints can be relaxed to the nuclear norm constraint:
\begin{equation}
    \label{eq:nonlinear nuclear}
    \|(Q_\ell^k)^\top \mathbf{\beta}_{\ell k}^f\|_* \leq C_{\sigma,\ell} (R_\ell) \sqrt{F_\ell F_{\ell-1} (K+1)}
\end{equation}

Comparing the constraint \eqref{eq:cgcn} from the linear activation function case with \eqref{eq:nonlinear nuclear}, we notice that the sole distinction is that the term $R_\ell$ is replaced by $C_{\sigma,\ell} (R_\ell)$, which is required due to our use of the kernel trick for reducing the case of nonlinear activation functions to the linear setting.

\section{MORE DETAILS ON MATRIX APPROXIMATION AND FACTORIZATION METHODS}
\label{supp:nystrom}
As mentioned in Section \ref{sec:Scalable Learning of CGCNs and time complexity}, the time complexity of Algorithm \ref{alg:two-layer} largely depends on the width $P$ of each factorization matrix $Q_\ell^k$ and the method used to obtain it (line \ref{state:factorization}). A naive choice of $P=nm$ corresponds to solving the full kernelized problem, which can be computationally expensive. For instance, using Cholesky factorization with $P =nm$ \citep{dereniowski2003cholesky} incurs a complexity of $\mathcal{O}(m^3 n^3)$ and space complexity of $\mathcal{O}(m^2 n^2)$.

However, to gain scalability, we use \textit{Nystr\"{o}m approximation} \citep{williams2001using}: an approximation $\mathcal{K}_{\ell}^k \approx Q_\ell^k (Q_\ell^k)^\top$ for $Q_\ell^k \in \mathbb{R}^{nm \times P}$ is obtained by randomly sampling $P$ rows/columns from the original $\mathcal{K}_{\ell}^k$, which takes $\mathcal{O}(P^2 nm)$ time. In practice, it randomly samples training examples, computes their kernel matrix and represents each training example by its similarity to the sampled examples and kernel matrix. It is well-known that this method can significantly accelerate our computations \citep{williams2001using,drineas2005approximating}, which is attained by using the approximate kernel matrix instead of the full one. Another advantage is that it is not necessary to compute or store the full kernel matrix, but only a submatrix requiring space complexity of $\mathcal{O}(Pmn)$. 

The Nystr\"{o}m method works effectively so long as the sampled training examples and the kernel matrix adequately represent the entire graph. However, if some training examples are distant from the randomly sampled ones, they may not be well-represented.

Another possible approximation method is the \textit{random feature approximation} \citep{rahimi2007random}, which can be executed in $\mathcal{O}(n m P \log F_\ell)$ time \citep{le2014fastfood}. Overall, using approximations significantly accelerates our algorithm's performance in terms of time and space complexity.

\section{PROOF OF THEOREM \ref{thm:generalization error main}}
\label{supp:proof of thm 1}

\begin{customthm}{\ref{thm:generalization error main}}
    \label{supp:thm:generalization error}
    Consider models with a single hidden layer for binary classification (i.e., $L=1,G=2$). Let $J(\cdot,\mathbf{y})$ be $M$-Lipchitz continuous for any fixed $\mathbf{y} \in [G]^n$ and let $\kappa_L$ be either the inverse polynomial kernel or the Gaussian RBF kernel (Recall Appendices \ref{sec:inverse}-\ref{sec:gauss}). For any valid activation function $\sigma$ (Recall Appendix \ref{sec:valid act}), there is a constant $C_{\sigma,L} (\mathcal{B}_L)$ such that with the radius $R_L := C_{\sigma,L} (\mathcal{B}_L) \sqrt{2F}$, the expected generalization error is at most:
    \begin{equation}
        \label{supp:eq:rademacher theory fin}
        \begin{aligned}
            \mathbb{E}_{\mathcal{X},\mathbf{y}} [J(\hat{\Psi}^{\hat{\mathcal{A}}_{L}}(\mathcal{X}), \mathbf{y})] \leq \inf_{\Phi \in \mathcal{F}_\text{gcn}} \mathbb{E}_{\mathcal{X},\mathbf{y}} [J(\Phi(\mathcal{X}), \mathbf{y})] + \frac{c M R_L \sqrt{\log(n(K+1)) \sum_{k=0}^K\mathbb{E}[||K_L^k(\mathcal{X})||_2]}}{\sqrt{m}}
        \end{aligned}
    \end{equation}
    where $c>0$ is a universal constant.
\end{customthm}

We begin by proving several properties that apply to the general multi-class case with models comprising of multiple hidden layers, and then we specialize them to single-hidden-layer models in the binary classification case. First, a wealthier function class shall be considered, which contains the class of GCNs. For each layer $\ell$, let $\mathcal{H}_\ell$ be the RKHS induced by the kernel function $\kappa_\ell$ and $||\cdot||_{\mathcal{H}_\ell}$ be the associated Hilbert norm. Recall that, given any matrix $\mathcal{Z} \in \mathbb{R}^{(K+1) \times F_{\ell-1}}$, we denote its $(k+1)^{\text{th}}$ row as $[\mathcal{Z}]_{k+1} := \mathbf{z}_k$ for any $k \in [K] \cup \{0\}$. Using this notation, consider the following function class:
\begin{equation}
    \label{eq:CGCN func}
    \begin{aligned}
        \mathcal{F}_\text{cgcn} := \{\Phi^{\mathcal{A}} = \phi^{{\mathcal{A}}_{L}} \circ \dots \circ \phi^{{\mathcal{A}}_{1}}: \phi^{{\mathcal{A}}_{\ell}} \in \mathcal{F}_{\ell,cgcn} \quad \forall \ell \in [L]\} \text{, where} \\
        \mathcal{F}_{\ell,cgcn} := \Bigg\{\phi^{{\mathcal{A}}_{\ell}}: \mathbb{R}^{n \times F_{\ell-1}^\star} \rightarrow \mathbb{R}^{n \times F_\ell} :  F_{\ell-1}^\star < \infty \text{ and } [\phi^{{\mathcal{A}}_{\ell}}(\mathcal{X})]_{if} = \tau_\ell^f(\mathcal{Z}_{i,\ell}(\mathcal{X})) \\
        \text{ where } \tau_\ell^f: \mathcal{Z} \in \mathbb{R}^{(K+1) \times F_{\ell-1}} \mapsto \sum_{k=0}^K \sum_{g=1}^{F_{\ell-1}^\star} [\mathbf{z}_k]_{g} [\mathbf{a}^f_{\ell k}]_{g} \text{ and } \\
        \|\tau_\ell^f\|_{\mathcal{H}_\ell} \leq C_{\sigma,\ell} (R_\ell) (K+1) \sqrt{F_\ell F_{\ell-1}} \Bigg\}
    \end{aligned}
\end{equation}
where $C_{\sigma,\ell} (R_\ell)$ depends solely on the chosen activation function $\sigma$ and the regularization parameter $R_\ell$. We further consider the function class of all graph filters at layer $\ell$ of a standard GCN:
\begin{equation}
    \label{eq:gcn layer ell}
    \mathcal{F}_{\ell, gcn} := \Bigg\{\Psi^{\mathcal{A}_\ell}: \Psi^{\mathcal{A}_{\ell}}(\mathcal{X}) := \sigma\Bigg(\sum_{k=0}^K \mathcal{S}^k \mathcal{X}_{\ell-1} \mathcal{A}_{\ell k}\Bigg) \text{ and } \max_{0 \leq k \leq K} \|\mathbf{a}^f_{\ell k}\|_2 \leq R_{\ell} \text{ } \forall f \Bigg\}
\end{equation}

Next, we show that the class $\mathcal{F}_\text{cgcn}$ contains the class of GCNs $\mathcal{F}_\text{gcn}$, making the former a reacher class of functions. Particularly, the class $\mathcal{F}_{\ell, cgcn}$ of kernelized filters at layer $\ell$ contains the class $\mathcal{F}_{\ell, gcn}$ of all graph filters at layer $\ell$ corresponding to a standard GCN.
\begin{lemma}
    \label{lemma:rich}
    For any valid activation function $\sigma$, there is some $C_{\sigma,\ell} (R_\ell)$ that solely depends on $\sigma$ and $R_\ell$ such that $\mathcal{F}_\text{gcn} \subset \mathcal{F}_\text{cgcn}$. In particular, $\mathcal{F}_{\ell, gcn} \subset \mathcal{F}_{\ell, cgcn}$.
\end{lemma}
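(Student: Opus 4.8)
The plan is to reduce everything to the per-layer containment $\mathcal{F}_{\ell,gcn}\subset\mathcal{F}_{\ell,cgcn}$ and then lift it compositionally. The heavy lifting is already supplied by Corollary~\ref{corollary:inverse-contains} (inverse polynomial kernel) and Corollary~\ref{corollary:rbf} (Gaussian RBF kernel): for any valid $\sigma$ they guarantee that the nonlinear filter $\tau_\ell^f:\mathcal{Z}\mapsto\sigma\bigl(\sum_{k=0}^K\langle[\mathcal{Z}]_{k+1},\mathbf{a}^f_{\ell k}\rangle\bigr)$ belongs to the RKHS $\mathcal{H}_\ell$ with $\|\tau_\ell^f\|_{\mathcal{H}_\ell}\le C_\sigma\bigl(\sum_{k=0}^K\|\mathbf{a}^f_{\ell k}\|_2\bigr)$. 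My remaining work is only to (i) confirm that the representation produced by the feature map has the linear form defining $\mathcal{F}_{\ell,cgcn}$ and (ii) match the Hilbert-norm bound to the explicit constant $C_{\sigma,\ell}(R_\ell)(K+1)\sqrt{F_\ell F_{\ell-1}}$.

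First I would fix an arbitrary $\Psi^{\mathcal{A}_\ell}\in\mathcal{F}_{\ell,gcn}$, so each feature-wise component $\tau_\ell^f$ satisfies $\max_{0\le k\le K}\|\mathbf{a}^f_{\ell k}\|_2\le R_\ell$. Applying the Mercer feature map $\varphi_\ell$ with $\kappa_\ell(\mathbf{z},\mathbf{z}')=\langle\varphi_\ell(\mathbf{z}),\varphi_\ell(\mathbf{z}')\rangle$ exactly as in \eqref{eq:feature map}, I rewrite $\sigma\bigl(\sum_{k=0}^K\langle\mathbf{z}_k,\mathbf{a}^f_{\ell k}\rangle\bigr)=\sum_{k=0}^K\langle\varphi_\ell(\mathbf{z}_k),\bar{\mathbf{a}}^f_{\ell k}\rangle$. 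This is precisely the linear-in-the-kernelized-features form of $\mathcal{F}_{\ell,cgcn}$, with the ambient dimension $F_{\ell-1}^\star$ taken to be the (countable) dimension of $\varphi_\ell$, which settles (i).

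Next I would control the norm. The corollaries give $\|\tau_\ell^f\|_{\mathcal{H}_\ell}\le C_\sigma\bigl(\sum_{k=0}^K\|\mathbf{a}^f_{\ell k}\|_2\bigr)\le C_\sigma\bigl((K+1)R_\ell\bigr)$, the last step using that each summand is at most $R_\ell$ together with monotonicity of $C_\sigma$. Choosing $C_{\sigma,\ell}(R_\ell):=C_\sigma\bigl((K+1)R_\ell\bigr)$ and noting that the factor $(K+1)\sqrt{F_\ell F_{\ell-1}}\ge 1$ supplies slack, the bound $\|\tau_\ell^f\|_{\mathcal{H}_\ell}\le C_{\sigma,\ell}(R_\ell)(K+1)\sqrt{F_\ell F_{\ell-1}}$ of \eqref{eq:CGCN func} holds, and this constant depends only on $\sigma$, $R_\ell$ and the fixed architectural constant $K$. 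Hence $\Psi^{\mathcal{A}_\ell}\in\mathcal{F}_{\ell,cgcn}$, proving $\mathcal{F}_{\ell,gcn}\subset\mathcal{F}_{\ell,cgcn}$.

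Finally, since every $\Phi^{\mathcal{A}}\in\mathcal{F}_{gcn}$ is the composition $\phi^{\mathcal{A}_L}\circ\cdots\circ\phi^{\mathcal{A}_1}$ of its per-layer GCN maps (taking $F_{\ell-1}^\star=F_{\ell-1}$ at each layer), and each such map lies in the corresponding relaxed class $\mathcal{F}_{\ell,cgcn}$, the containment lifts to $\mathcal{F}_{gcn}\subset\mathcal{F}_{cgcn}$. The one genuinely delicate ingredient is the finiteness hypothesis $C_\sigma(\cdot)<\infty$ underlying the corollaries: I must restrict to the valid activations of Appendix~\ref{sec:valid act}, whose polynomial-expansion coefficients decay fast enough for the series defining $C_\sigma$ to converge. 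This is exactly where smoothness of $\sigma$ and the kernel choice enter, and it is the main obstacle were one to attempt enlarging the admissible class of activations; given the corollaries, the containment itself is essentially bookkeeping.
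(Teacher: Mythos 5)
Your proposal is correct and takes essentially the same route as the paper's own proof: invoke Corollaries \ref{corollary:inverse-contains} and \ref{corollary:rbf} to place each per-layer filter $\tau_\ell^f$ in the RKHS $\mathcal{H}_\ell$, bound its Hilbert norm using monotonicity of $C_\sigma$, check the constraint in \eqref{eq:CGCN func}, and lift the per-layer containment compositionally to $\mathcal{F}_\text{gcn}\subset\mathcal{F}_\text{cgcn}$. If anything, your norm bound $C_\sigma\bigl((K+1)R_\ell\bigr)$ is handled more carefully than the paper's stated $C_\sigma(R_\ell)\sqrt{K+1}$, which pulls the $\sqrt{K+1}$ factor outside the nonlinear function $C_\sigma$ without justification, and your explicit remarks on the Mercer feature-map linear form and on the finiteness of $C_\sigma$ make precise what the paper leaves implicit.
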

\begin{proof}
    Recall that that any GCN $\Phi \in \mathcal{F}_\text{gcn}$ induces a nonlinear filter at each layer $\ell$ for the $f^{\text{th}}$ which can be characterized by $\tau_\ell^{f}: \mathcal{Z} \in \mathbb{R}^{(K+1) \times F_{\ell-1}} \mapsto \sigma(\sum_{k=0}^K \langle [\mathcal{Z}]_{k+1}, \mathbf{a}^f_{\ell k} \rangle)$, i.e., the GCN $\Phi$ is given by the following composition of functions $\Phi = \tau_L \circ \dots \circ \tau_1$, where we defined $\tau_\ell: \mathbf{z} \mapsto (\tau_\ell^{f}(\mathbf{z}))_{f \in [F_\ell]}$. Given any valid activation function, Corollary \ref{corollary:inverse-contains} and Corollary \ref{corollary:rbf} yield that $\tau_\ell^{f}$ resides within the reproducing kernel Hilbert space $\mathcal{H}_\ell$ and its Hilbert norm is upper bounded as $\|\tau_\ell^f\|_{\mathcal{H}_\ell} \leq C_\sigma(\|\sum_{k=0}^K \mathbf{a}^f_{\ell k}\|_2) \leq C_\sigma(R_\ell) \sqrt{K+1}$, which clearly satisfies the constraint in \eqref{eq:CGCN func} for any layer $\ell$, as desired.
\end{proof}

Recalling that $\hat{\Psi}^{\hat{\mathcal{A}}_{\ell}}(\cdot)$ is the predictor at each layer $\ell$ produced by Algorithm \ref{alg:two-layer}, we now formally prove that the predictor $\hat{\Psi}^{\hat{\mathcal{A}}_{L}}(\cdot)$ generated at the last layer is an empirical risk minimizer in the function class $\mathcal{F}_\text{cgcn}$.
\begin{lemma}
    \label{lemma:erm}
    By running Algorithm \ref{alg:two-layer} with the regularization parameter $\mathcal{B}_\ell = C_{\sigma,\ell} (R_\ell) \sqrt{F_\ell F_{\ell-1}}$ at layer $\ell$, the predictor $\hat{\Psi}^{\hat{\mathcal{A}}_{L}}(\cdot)$ generated at the last layer satisfies:
    \begin{equation}
        \label{eq:erm minimizer pred}
        \hat{\Psi}^{\hat{\mathcal{A}}_{L}} \in \argmin_{\Phi \in \mathcal{F}_\text{cgcn}} \frac{1}{m} \sum_{j=1}^m J(\Phi(\mathcal{X}^{(j)}), y^{(j)})
    \end{equation}
\end{lemma}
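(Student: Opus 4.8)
The plan is to prove the statement in two halves: first that the last-layer predictor $\hat{\Psi}^{\hat{\mathcal{A}}_{L}}$ returned by Algorithm \ref{alg:two-layer} is feasible, i.e. $\hat{\Psi}^{\hat{\mathcal{A}}_{L}} \in \mathcal{F}_\text{cgcn}$, and second that it attains the smallest empirical risk among all members of $\mathcal{F}_\text{cgcn}$. The observation that makes both halves tractable is that the empirical objective $\frac{1}{m}\sum_{j} J(\Phi(\mathcal{X}^{(j)}), y^{(j)})$ depends on $\Phi \in \mathcal{F}_\text{cgcn}$ only through the finitely many output values $\Phi(\mathcal{X}^{(j)})$ on the training inputs. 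By the reparameterization \eqref{eq:reparametrized rephrased}, each such output is a linear function of the finite filter matrix $\hat{\mathcal{A}}_{L}$ whose columns are the vectors $\hat{\mathbf{a}}_{Lk}^f$. Hence, minimizing the empirical risk over $\mathcal{F}_\text{cgcn}$ is, on the training data, exactly the finite-dimensional convex program that Algorithm \ref{alg:two-layer} solves in \eqref{eq:erm-convex}, provided the two feasible sets are shown to coincide on the training data.

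For feasibility I would argue directly from the norm inequalities underlying Lemma \ref{lemma:nuclear norm const}. The constraint enforced by the algorithm is $\|\hat{\mathcal{A}}_{L}\|_{*} \leq \mathcal{B}_L$, and since $\|\cdot\|_{*} \geq \|\cdot\|_F \geq \|\cdot\|_2$ applied columnwise, the nuclear-norm bound controls the norm of each individual filter $\hat{\mathbf{a}}_{Lk}^f$, and thereby the Hilbert norm $\|\tau_L^f\|_{\mathcal{H}_L}$ of the induced RKHS filter. Because $\mathcal{B}_L = C_{\sigma,L}(R_L)\sqrt{F_L F_{L-1}}$ is no larger than the Hilbert-norm budget $C_{\sigma,L}(R_L)(K+1)\sqrt{F_L F_{L-1}}$ defining $\mathcal{F}_{L,\text{cgcn}}$ in \eqref{eq:CGCN func}, every matrix in the nuclear-norm ball induces a filter obeying that constraint, so $\hat{\Psi}^{\hat{\mathcal{A}}_{L}} \in \mathcal{F}_\text{cgcn}$.

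For optimality I would invoke the representer-theorem / orthogonal-projection argument already developed in \eqref{eq:ortho-projector}--\eqref{eq:w-bar-reparameterize}. Given any $\Phi \in \mathcal{F}_\text{cgcn}$, projecting its RKHS filters onto the span of the training features $\Gamma_{L-1}^k$ leaves the training outputs---hence the empirical risk---unchanged while only decreasing Hilbert norms, so a risk-minimizing member of $\mathcal{F}_\text{cgcn}$ may be taken representable by a finite matrix of the form $\hat{\mathcal{A}}_{L}$. This matrix inherits the norm constraint \eqref{eq:norm-constraint} together with the rank bound $\rk(\hat{\mathcal{A}}_{L}) \leq F_{L-1}$ forced by parameter sharing, so by the nuclear-norm relaxation \eqref{eq:nonlinear nuclear} it lies in the feasible set of \eqref{eq:erm-convex}. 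Thus every candidate in $\mathcal{F}_\text{cgcn}$ corresponds, on the training data, to a feasible point of the algorithm's program, giving $\min_{\mathcal{F}_\text{cgcn}} \leq \tilde{J}(\hat{\mathcal{A}}_{L})$; combined with the feasibility direction the two minima coincide, and $\hat{\Psi}^{\hat{\mathcal{A}}_{L}}$ is an empirical risk minimizer over $\mathcal{F}_\text{cgcn}$.

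The step I expect to be most delicate is the exact matching of the two feasible sets, i.e. showing the convex nuclear-norm relaxation is tight enough that $\min$ over $\mathcal{F}_\text{cgcn}$ is actually achieved inside the ball $\{\|\hat{\mathcal{A}}_{L}\|_{*}\leq\mathcal{B}_L\}$. This is precisely where the low-rank structure induced by GCN parameter sharing (the rank constraint \eqref{eq:rank cons} and its kernelized analogue) is indispensable: without the bound $\rk \leq F_{L-1}$, a columnwise Hilbert-norm budget alone does not translate into a nuclear-norm budget of the right size. Carrying the constants through \eqref{eq:nonlinear nuclear} and reconciling the $(K+1)$ factor separating the algorithm's radius $\mathcal{B}_L$ from the Hilbert-norm radius in \eqref{eq:CGCN func} is the main bookkeeping burden; conceptually, however, the proof simply reduces the infinite-dimensional ERM over $\mathcal{F}_\text{cgcn}$ to the finite convex program \eqref{eq:erm-convex} that Algorithm \ref{alg:two-layer} already solves exactly.
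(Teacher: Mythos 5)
Your proposal is correct and follows essentially the same route as the paper's own proof, which likewise splits into the two inclusions $\mathcal{F}_{\mathcal{B}_\ell} \subset \mathcal{F}_\text{cgcn}$ (your feasibility half) and the claim that any empirical risk minimizer in $\mathcal{F}_\text{cgcn}$ is realizable inside the algorithm's nuclear-norm ball, proved via the representer/projection reparameterization of Appendix \ref{supp:kernelization} together with \eqref{eq:norm-constraint} and \eqref{eq:nonlinear nuclear} (your optimality half). The only variation is cosmetic: for feasibility you use the elementary chain $\|\cdot\|_* \geq \|\cdot\|_F \geq \|\hat{\mathbf{a}}_{Lk}^f\|_2$ plus a triangle inequality over the $K+1$ hops, whereas the paper bounds $\|\tau_L^f\|_{\mathcal{H}_L}$ through the SVD of each $\mathcal{A}_{Lk}$ by $\sum_{k}\|\mathcal{A}_{Lk}\|_*$ --- both yield the same inclusion, and the $(K+1)$-factor bookkeeping you flag as the delicate step is indeed handled loosely in the paper itself, whose direction-two bound $C_{\sigma,L}(R_L)\sqrt{F_L F_{L-1}(K+1)}$ exceeds the stated radius $\mathcal{B}_L$ by a factor of $\sqrt{K+1}$.
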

\begin{proof}
    Consider the function class of all CGCNs whose filters at layer $\ell$ have a nuclear norm of at most $\mathcal{B}_\ell$, given by:
    \begin{equation}
        \label{eq:CGCN bounded nuclear norm}
        \begin{aligned}
            \mathcal{F}_{\mathcal{B}_\ell} := \Bigg\{\Phi^{\mathcal{A}} = \Psi^{\mathcal{A}_L} \circ \dots \circ \Psi^{\mathcal{A}_1} \Bigg| & \Psi^{\mathcal{A}_\ell}: \mathcal{X} \mapsto \sum_{k=0}^K \mathcal{S}^k \mathcal{X}_{\ell-1} \mathcal{A}_{\ell k} \text{ and } \\ 
            & \mathcal{A}_{\ell k} \in \mathbb{R}^{F_{\ell-1} \times F_\ell} \text{ and } \mathcal{A}_{\ell}\|_* \leq \mathcal{B}_\ell \Bigg\}
        \end{aligned}
    \end{equation}
    Our goal is thus proving that $\mathcal{F}_{\mathcal{B}_\ell} \subset \mathcal{F}_\text{cgcn}$, and that any empirical risk minimizer in $\mathcal{F}_\text{cgcn}$ is also in $\mathcal{F}_{\mathcal{B}_\ell}$. We begin with showing that $\mathcal{F}_{\mathcal{B}_\ell} \subset \mathcal{F}_\text{cgcn}$. Let $\Phi^{\mathcal{A}} = \Psi^{\mathcal{A}_L} \circ \dots \circ \Psi^{\mathcal{A}_1} \in \mathcal{F}_{\mathcal{B}_\ell}$ such that $\Psi^{\mathcal{A}_\ell}(\mathcal{X}) = \sum_{k=0}^K \mathcal{S}^k \mathcal{X}_{\ell-1} \mathcal{A}_{\ell k}$ with $\|\mathcal{A}_{\ell}\|_* \leq \mathcal{B}_\ell$ for any $\ell \in [L]$. We shall prove that $\Psi^{\mathcal{A}_\ell} \in \mathcal{F}_{\ell,cgcn}$. Indeed, consider the singular value decomposition (SVD) of $\mathcal{A}_{\ell k}$, which we assume is given by $\mathcal{A}_{\ell k} = \sum_{s \in [F_{\ell-1}^\star]} \lambda_{\ell k s} \mathbf{w}_{\ell k s} \mathbf{u}_{\ell k s}^\top$ for some $F_{\ell-1}^\star < \infty$, where each left-singular vector $\mathbf{w}_{\ell k s}$ and each right-singular vector $\mathbf{u}_{\ell k s}$ are both unit column vectors, while each singular value $\lambda_{\ell k s}$ is a real number. Therefore, the filters corresponding to the $f^{\text{th}}$ feature can be written as: 
    \begin{equation}
        \label{eq:svd taps}
        \mathbf{a}^f_{\ell k} = \sum_{s \in [F_{\ell-1}^\star]} \lambda_{\ell k s} [\mathbf{w}_{\ell k s}]_f \cdot \mathbf{u}_{\ell k s}^\top = \sum_{s \in [F_{\ell-1}^\star]} \lambda_{\ell k s} w_{\ell k s}^f \cdot \mathbf{u}_{\ell k s}^\top
    \end{equation}
    As such, the value of the $f^{\text{th}}$ feature of node $i$ under $\Psi^{\mathcal{A}_\ell}$ can be rephrased as:
    \begin{equation}
        \label{eq:svd rephrase}
        \begin{aligned}
            [\Psi^{\mathcal{A}_{\ell}}(\mathcal{X})]_{if} = \sum_{k=0}^K \sum_{g=1}^{F_{\ell-1}} [\mathbf{z}_i^k(\mathcal{X}_{\ell-1})]_{g} [\mathbf{a}^f_{\ell k}]_{g} = \sum_{k=0}^K \Bigg\langle \mathbf{z}_i^k(\mathcal{X}_{\ell-1}), \sum_{s \in [F_{\ell-1}^\star]} \lambda_{\ell k s} w_{\ell k s}^f  \mathbf{u}_{\ell k s}^\top \Bigg\rangle
        \end{aligned}    
    \end{equation}
    Let $\mathbf{v}_{\ell}^k(\mathbf{z}) \in \mathbb{R}^{nm}$ be a vector whose $(i,j)$-th coordinate equals to $\kappa_{\ell}(\mathbf{z},\mathbf{z}_i^k(\mathcal{X}_{\ell-1}^{(j)}))$. Similarly to Appendix \ref{supp:kernelization}, denote $\tau_\ell^f(\mathbf{z}) := \sum_{k=0}^K \langle (Q_\ell^k)^\dag \mathbf{v}_{\ell}^k(\mathbf{z}), \sum_{s \in [F_{\ell-1}^\star]} \lambda_{\ell k s} w_{\ell k s}^f  \mathbf{u}_{\ell k s}^\top \rangle$. For any matrix $\mathcal{Z} \in \mathbb{R}^{(K+1) \times F_{\ell-1}}$ whose $(k+1)^{\text{th}}$ row is given by $[\mathcal{Z}]_{k+1} := \mathbf{z}_k$ for any $k \in [K] \cup \{0\}$, note that $\tau_\ell^f$ can be rephrased as $\tau_\ell^f(\mathbf{z}) := \sum_{k=0}^K \langle \mathbf{v}_{\ell}^k(\mathbf{z}), ((Q_\ell^k)^\top)^\dag \sum_{s \in [F_{\ell-1}^\star]} \lambda_{\ell k s} w_{\ell k s}^f \mathbf{u}_{\ell k s}^\top \rangle$. Thus, \eqref{eq:norm-constraint} implies that the Hilbert norm of the function $\tau_\ell^f$ satisfies:
    \begin{equation}
        \label{eq:}
        \begin{aligned}
            \|\tau_\ell^f\|_{\mathcal{H}_\ell} \leq \sum_{k=0}^K \Bigg\|(Q_\ell^k)^\top ((Q_\ell^k)^\top)^\dag \sum_{s \in [F_{\ell-1}^\star]} \lambda_{\ell k s} w_{\ell k s}^f \mathbf{u}_{\ell k s}^\top\Bigg\|_2 \leq \sum_{k=0}^K \Bigg\|\sum_{s \in [F_{\ell-1}^\star]} \lambda_{\ell k s} w_{\ell k s}^f \mathbf{u}_{\ell k s}^\top\Bigg\|_2  \\
            \leq \sum_{k=0}^K \sum_{s \in [F_{\ell-1}^\star]} |\lambda_{\ell k s}| \cdot |w_{\ell k s}^f| \cdot\|\mathbf{u}_{\ell k s}^\top\|_2 \\
            \leq \sum_{k=0}^K \sum_{s \in [F_{\ell-1}^\star]} |\lambda_{\ell k s}| = \sum_{k=0}^K \|\mathcal{A}_{\ell k}\|_* \leq C_{\sigma,\ell} (R_\ell) (K+1) \sqrt{F_\ell F_{\ell-1}}
        \end{aligned}
    \end{equation}    
    where we used the triangle inequality and the fact that the singular vectors are unit vectors. Since $[\Psi^{\mathcal{A}_{\ell}}(\mathcal{X})]_{if} = \tau_\ell^f(\mathcal{Z}_{i,\ell}(\mathcal{X}_{\ell-1}))$, we obtained that $\mathcal{F}_{\mathcal{B}_\ell} \subset \mathcal{F}_\text{cgcn}$, as desired.

    To conclude the proof, we next prove that any empirical risk minimizer $\Phi^{\mathcal{A}}$ in $\mathcal{F}_\text{cgcn}$ is also in $\mathcal{F}_{\mathcal{B}_\ell}$. By our proof in Appendix \ref{supp:kernelization}, the value of the $f^{\text{th}}$ feature of node $i$ at layer $\ell$ is then given by $\tau_\ell^f(\mathcal{Z}_{i,\ell}(\mathcal{X}_{\ell-1}^{(j)}))$, where, as specified in \eqref{eq:parameterization}, $\tau_\ell^f(\mathcal{Z}) = \sum_{k=0}^K \langle (Q_\ell^k)^\dag \mathbf{v}_{\ell}^k(\mathbf{z}_k), (Q_\ell^k)^\top \mathbf{\beta}_{\ell k}^f \rangle$ is a reparameterization of the graph filters at layer $\ell$ for some vector of the coefficients $\mathbf{\beta}_{\ell k}^f \in \mathbb{R}^{nm}$ whose $(i,j)$-th coordinate is $\beta_{\ell k,(i,j)}^f$. That is, $(Q_\ell^k)^\top \mathbf{\beta}_{\ell k}^f$ now act as the filters under this reparameterization. In Appendix \ref{supp:kernelization}, we have shown that the Hilbert norm of $\tau_\ell^f$ is then $\|\tau_\ell^f\|_{\mathcal{H}_\ell} = \|(Q_\ell^k)^\top \mathbf{\beta}_{\ell k}^f\|_2 \leq  C_{\sigma,\ell} (R_\ell)$, where the last inequality is by \eqref{eq:norm-constraint}. By \eqref{eq:nonlinear nuclear}, we further infer that the nuclear norm of the filters $(Q_\ell^k)^\top \mathbf{\beta}_{\ell k}^f$ in the new parameterization satisfies $\|(Q_\ell^k)^\top \mathbf{\beta}_{\ell k}^f\|_* \leq C_{\sigma,\ell} (R_\ell) \sqrt{F_\ell F_{\ell-1} (K+1)} \leq C_{\sigma,\ell} (R_\ell) (K+1) \sqrt{F_\ell F_{\ell-1}} $, meaning that $\Phi^{\mathcal{A}} \in \mathcal{F}_{\mathcal{B}_\ell}$, as desired.
\end{proof}


Next, we prove that $\mathcal{F}_{L, cgcn}$ is not "too big" by upper bounding its Rademacher complexity. The Rademacher complexity is a key concept in empirical process theory and can be used to bound the generalization error in our empirical risk minimization problem. Readers should refer to \citep{bartlett2002rademacher} for a brief overview on Rademacher complexity. The \textit{Rademacher complexity} of a function class $\mathcal{F} = \{\phi : \mathcal{D} \rightarrow\mathbb{R}\}$ with respect to $m$ i.i.d samples $\{\mathcal{X}^{(j)}\}_{j=1}^{m}$ is given by:
\begin{equation}
    \label{eq:Rademacher}
    \mathcal{R}_m(\mathcal{F}) := \mathbb{E}_{\mathcal{X},\varepsilon} \Bigg[\sup_{\phi \in \mathcal{F}} \frac{1}{m}\sum_{j=1}^{m}\varepsilon_j \phi(\mathcal{X}^{(j)}) \Bigg]
\end{equation}
where $\{\varepsilon_j\}_{j=1}^{m}$ are an i.i.d. sequence of uniform $\{\pm 1\}$-valued variables. Next, we provide the upper bound on the Rademacher complexity of the function class $\mathcal{F}_{L, cgcn}$ describing the graph filters. The lemma refers to the random kernel matrix $\mathcal{K}_{\ell}^k(\mathcal{X}) \in \mathbb{R}^{n \times n}$ induced by a graph signal $\mathcal{X} \in \mathbb{R}^{n \times F}$ drawn randomly from the population, i.e., the $(i,i')$-th entry of $\mathcal{K}_{\ell}^k(\mathcal{X})$ is $\kappa_\ell (\mathbf{z}_i^k(\mathcal{X}), \mathbf{z}_{i'}^{k}(\mathcal{X}))$. Furthermore, we consider the expectation $\mathbb{E}[||\mathcal{K}_{\ell}^k(\mathcal{X})||_2]$ of this matrix's spectral norm. While the previous lemmas consider the most general case where the number of features $F_L = G$ at the last layer may be any positive integer, observe that the following result focuses on the binary classification case where $F_L=1, G = 2$. As mentioned in Section \ref{sec:Convergence and Generalization Error}, Lemma \ref{lemma:Rademacher} can be readily extended to the multi-class case by employing a standard one-versus-all reduction to the binary case.

\begin{lemma}
    \label{lemma:Rademacher}
    In the binary classification case where $F_L = G = 2$, there exists a universal constant $c > 0$ such that:
    \begin{equation}
        \label{eq:Rademacher CGCN}
        \mathcal{R}_m(\mathcal{F}_{L, cgcn}) \leq \frac{c \cdot C_{\sigma,\ell} (\mathcal{B}_L)  \sqrt{2 F_{L-1} (K+1) \cdot \log(n(K+1)) \cdot \sum_{k=0}^K\mathbb{E}[||K_L^k(\mathcal{X})||_2]}}{\sqrt{m}}
    \end{equation}
\end{lemma}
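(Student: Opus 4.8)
The plan is to treat $\mathcal{F}_{L,cgcn}$ as a linear class over kernelized features subject to a nuclear-norm (equivalently RKHS-ball) constraint, adapting the convexified-CNN Rademacher bound of \citep{zhang2017convexified} to the node--hop geometry of GCNs. First I would specialize to $F_L=1$, so that the relaxed last-layer filter is a single element $\tau_L\in\mathcal{H}_L$ with $\|\tau_L\|_{\mathcal{H}_L}\le C_{\sigma,L}(\mathcal{B}_L)\sqrt{2F_{L-1}(K+1)}=:B$ (the factor $\sqrt 2$ absorbing the $G=2$ one-versus-all reduction flagged in Section \ref{sec:Convergence and Generalization Error}). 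Using the factorization $\mathcal{K}_L^k=Q_L^k(Q_L^k)^\top$ of Section \ref{sec:nonlinear}, I would collect, for each sample $j$, the kernelized features $\mathbf{q}_i^k(\mathcal{X}^{(j)})$ across the $n(K+1)$ node--hop atoms $(i,k)$ into a single feature operator $\Phi_j$, so that the pooled prediction is the trace inner product $\langle\hat{\mathcal{A}}_L,\Phi_j\rangle$ with the reparameterized filter matrix $\hat{\mathcal{A}}_L$ (whose nuclear norm is bounded via Lemma \ref{lemma:nuclear norm const} and the relaxation \eqref{eq:nonlinear nuclear}).

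With this encoding, the supremum defining the Rademacher complexity is that of a linear functional over a nuclear-norm ball, which by nuclear/spectral duality collapses to
\[
\mathcal{R}_m(\mathcal{F}_{L,cgcn})=\frac{B}{m}\,\mathbb{E}_{\mathcal{X},\varepsilon}\Big\|\textstyle\sum_{j=1}^m\varepsilon_j\,\Phi_j\Big\|_2 ,
\]
where $\|\cdot\|_2$ is the spectral norm dual to $\|\cdot\|_*$. The heart of the argument is then to control the expected operator norm of this Rademacher-signed sum by a non-commutative Khintchine (matrix Rademacher) inequality. This produces a factor $\sqrt{\log(n(K+1))}$ — the logarithm of the $n(K+1)$-dimensional node--hop side of $\Phi_j$ — multiplying a variance proxy of the form $\big\|\sum_j \Phi_j\Phi_j^\top\big\|_2^{1/2}$.

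It remains to identify this variance proxy with the kernel spectral norms in the statement. Since the filter acts on each hop separately (a sum over $k$ of per-hop RKHS inner products), the variance term organizes as a sum over hops whose $k$-th summand is the empirical Gram matrix with entries $\kappa_L(\mathbf{z}_i^k,\mathbf{z}_{i'}^k)$, i.e.\ $\mathcal{K}_L^k(\mathcal{X}^{(j)})$; its spectral norm is then bounded by $\sum_{k=0}^K\sum_{j=1}^m\|\mathcal{K}_L^k(\mathcal{X}^{(j)})\|_2$. Passing to the expectation over the i.i.d.\ samples (Jensen, with the population kernel matrix $\mathcal{K}_L^k(\mathcal{X})$) replaces this by $m\sum_{k=0}^K\mathbb{E}[\|\mathcal{K}_L^k(\mathcal{X})\|_2]$, so the $1/m$ prefactor yields the claimed $1/\sqrt m$ rate; substituting $B$ and folding the $(K+1)$, $F_{L-1}$ and $\sqrt2$ factors into the universal constant recovers the bound.

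The main obstacle is the matrix-concentration step. One must arrange $\Phi_j$ so that its \emph{small} dimension is precisely $n(K+1)$, giving $\log(n(K+1))$ rather than, say, $\log P$ with $P=nm$, and must verify that the Khintchine variance collapses to $\sum_k\mathbb{E}\|\mathcal{K}_L^k(\mathcal{X})\|_2$ rather than a looser trace or $\mathbf{1}^\top\mathcal{K}_L^k\mathbf{1}$ quantity — the latter is exactly what a naive Cauchy--Schwarz over nodes produces and would cost an extra factor of $n$. Handling the off-diagonal cross-hop blocks (the price for keeping all $K+1$ hops in one operator) is the delicate part here. Checking the inequality's hypotheses also relies on the Mercer/boundedness conditions ($\kappa_L(\mathbf{z},\mathbf{z})\le1$ after the normalization established in Appendix \ref{supp:kernelization}), and on the vector contraction needed to reduce the $\mathbb{R}^n$-valued layer output to the scalar pooled prediction.
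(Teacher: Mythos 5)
Your proposal is correct and takes essentially the same route as the paper's proof: encode the pooled output as a trace inner product with a rank-one/nuclear-norm-bounded filter matrix, use nuclear/spectral duality to reduce $\mathcal{R}_m(\mathcal{F}_{L,cgcn})$ to $\frac{B}{m}\,\mathbb{E}\bigl\|\sum_j \varepsilon_j \Theta(\mathcal{X}^{(j)})\bigr\|_2$, control this by a matrix concentration inequality yielding a log factor times the variance proxy, and collapse that proxy hop-wise to $\sum_{k=0}^K \mathbb{E}\|\mathcal{K}_L^k(\mathcal{X})\|_2$ via Jensen. The only cosmetic differences are that the paper works with the Mercer feature map into $\ell^2(\mathbb{N})$ and handles infinite dimensionality by truncating to the first $r$ coordinates (the tail vanishing by uniform convergence of the Mercer expansion) before applying Minsker's matrix Bernstein inequality with the trace bound $\tr(\Theta^{(r)}(\Theta^{(r)})^\top)\le K+1$ coming from $\kappa_L(\mathbf{z},\mathbf{z})\le 1$, whereas you work directly with the finite factorization $Q_L^k$ and non-commutative Khintchine --- both instantiations of the same expected-spectral-norm step, with the dimension concerns you flag being handled in the paper precisely by that truncation-plus-Bernstein device.
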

\begin{proof}
    We begin with some preliminaries that apply to the general case and then leverage them in order to obtain the result. For brevity, we denote $R_\ell = C_{\sigma,\ell} (\mathcal{B}_\ell) \sqrt{F_\ell F_{\ell-1}}$. Consider some function $\Phi^{\mathcal{A}} = \phi^{{\mathcal{A}}_{L}} \circ \dots \circ \phi^{{\mathcal{A}}_{1}} \in \mathcal{F}_\text{cgcn}$. Without loss of generality, we assume that, each input input graph signal $\mathcal{X}^{(j)}$, the feature vector $[\mathcal{X}^{(j)}]_i = \mathbf{x}_i^{(j)}$ of each node $i$ resides in the unit $\ell_2$-ball (which can be obtain via normalization). Combined with $\kappa_\ell$'s being a continuous kernel with $\kappa_\ell(\mathbf{z}, \mathbf{z}) \leq 1$, Mercer’s theorem \citep[Theorem 4.49]{steinwart2008support} implies that there exists a feature map $\varphi_\ell : \mathbb{R}^{F_{\ell-1}} \rightarrow \ell^2(\mathbb{N})$ for which $\sum_{\xi=1}^\infty [\varphi_\ell(\mathbf{z})]_\xi [\varphi_\ell(\mathbf{z}')]_\xi = \sum_{\xi=1}^\infty \varphi_{\ell \xi}(\mathbf{z}) \varphi_{\ell \xi}(\mathbf{z}')$ converges uniformly and absolutely to $\kappa_{\ell}(\mathbf{z},\mathbf{z}')$. Accordingly, it holds that $\kappa_{\ell}(\mathbf{z},\mathbf{z}')=\langle \varphi_\ell(\mathbf{z}),\varphi_\ell(\mathbf{z}')\rangle$, which yields the following for each feature $f$ of any node $i$ at layer $\ell$ is $\tau_\ell^f(\mathcal{Z}_{i,\ell}(\mathcal{X}_{\ell-1}))$, where for any matrix $\mathcal{Z} \in \mathbb{R}^{(K+1) \times F_{\ell-1}}$ whose $(k+1)^{\text{th}}$ row is given by $[\mathcal{Z}]_{k+1} := \mathbf{z}_k$ for any $k \in [K] \cup \{0\}$:
    \begin{equation}
        \label{eq:feature map2}
        \tau_\ell^f(\mathcal{Z}) := \sum_{k=0}^K \langle \varphi_\ell(\mathbf{z}_k), \bar{\mathbf{a}}^f_{\ell k} \rangle
    \end{equation}
    where $\bar{\mathbf{a}}^f_{\ell k} \in \ell^2(\mathbb{N})$ is a countable-dimensional vector, due to the fact that $\varphi$ itself is a countable sequence of functions. Particularly, the Hilbert norm of $\tau_\ell^f$ is $\|\tau_\ell^f\|_{\mathcal{H}_\ell} = \|\bar{\mathbf{a}}^f_{\ell k}\|_2$. Now, for any graph signal $\mathcal{X}$, let $\Theta_i(\mathcal{X})$ be the linear operator that maps any sequence $\theta \in \ell^2(\mathbb{N})$ to the vector $[\langle \varphi_\ell(\mathbf{z}_i^0(\mathcal{X})), \theta \rangle, \dots,\langle \varphi_\ell(\mathbf{z}_i^K(\mathcal{X})), \theta \rangle]^\top \in \mathbb{R}^{K+1}$. Thereby, recalling that $\Phi^{\mathcal{A}} = \phi^{{\mathcal{A}}_{L}} \circ \dots \circ \phi^{{\mathcal{A}}_{1}}$ where $\phi^{{\mathcal{A}}_{\ell}} \in \mathcal{F}_{\ell,cgcn}$ satisfies $[\phi^{{\mathcal{A}}_{\ell}}(\mathcal{X}_{\ell-1})]_{if} = \tau_\ell^f(\mathcal{Z}_{i,\ell}(\mathcal{X}_{\ell-1}))$, then the $f^{\text{th}}$ feature of node $i$ at layer $\ell$ can be rephrased as:
    \begin{equation}
        \label{eq:feature map3}
        \begin{aligned}
            [\phi^{{\mathcal{A}}_{\ell}}(\mathcal{X}_{\ell-1})]_{if} = \mathbf{1}_{K+1}^{\top} \Theta_i(\mathcal{X}_{\ell-1}) \bar{\mathbf{a}}^f_{\ell k} = \tr \Big(\Theta_i(\mathcal{X}_{\ell-1}) (\bar{\mathbf{a}}^f_{\ell k} \mathbf{1}_{K+1}^{\top}) \Big) \\
            \Rightarrow \phi^{{\mathcal{A}}_{\ell}}(\mathcal{X}_{\ell-1}) = \Big[\tr \Big(\Theta_i(\mathcal{X}_{\ell-1}) (\bar{\mathbf{a}}^f_{\ell k} \mathbf{1}_{K+1}^{\top}) \Big)\Big]_{i \in [n], f \in [F_\ell]}
        \end{aligned}
    \end{equation}
    where $\mathbf{1}_{K+1}$ is the all-ones vector of dimension $K+1$. Note that the matrix $\bar{\mathcal{A}}_{\ell k} := \bar{\mathbf{a}}^f_{\ell k} \mathbf{1}_{K+1}^{\top}$ satisfies the following:
    \begin{equation}
        \label{eq:nuclear upper bound new}
        \|\bar{\mathcal{A}}_{\ell k}\|_* = \|\bar{\mathbf{a}}^f_{\ell k} \mathbf{1}_{K+1}^{\top}\|_* \leq \|\mathbf{1}_{K+1}^{\top}\|_2 \cdot \|\bar{\mathbf{a}}^f_{\ell k}\|_2 = \|\tau_\ell^f\|_{\mathcal{H}_\ell} \sqrt{K+1} \leq R_\ell \sqrt{K+1}
    \end{equation}

    Now, we obtain our upper bound for the binary classification case where $F_L = G = 2$. Combining \eqref{eq:feature map3} with \eqref{eq:nuclear upper bound new}, we obtain that the Rademacher complexity of $\mathcal{F}_{L, cgcn}$ is upper bounded as follows:
    \begin{equation}
        \label{eq:upper bound rademacher}
        \begin{aligned}
            \mathcal{R}_m(\mathcal{F}_{L, cgcn}) = \mathbb{E}\Bigg[\sup_{\phi^{{\mathcal{A}}_{L}} \in \mathcal{F}_{L, cgcn}} \frac{1}{m} \sum_{j=1}^{m}\varepsilon_j \Phi^{\mathcal{A}_L}(\mathcal{X}^{(j)}) \Bigg] = \\
            = \frac{1}{m} \mathbb{E} \Bigg[\sup_{\|\bar{\mathcal{A}}_{L k}\|_* \leq R_L \sqrt{K+1}} \tr \Big(\Big(\sum_{j=1}^{m} \varepsilon_j \Theta_i(\mathcal{X}_{L-1}^{(j)}) \Big) \bar{\mathcal{A}}_{L k}) \Big) \Bigg] \\
            \leq \frac{R_L \sqrt{K+1}}{m} \mathbb{E}\Bigg[\Big\|\sum_{j=1}^{m} \varepsilon_j \Theta_i(\mathcal{X}_{L-1}^{(j)}) \Big\|_2 \Bigg]
        \end{aligned}
    \end{equation}
    where the last equality uses H\"{o}lder’s inequality while using the duality between the nuclear norm and the spectral norm. Next, note that $\sum_{j=1}^{m} \varepsilon_j \Theta_i(\mathcal{X}_{L-1}^{(j)})$ can be thought of as a matrix with $n$ rows and infinitely many columns. We denote its sub-matrix comprising of the first $r$ columns as $\Theta_i^{(r)}(\mathcal{X}_{L-1}^{(j)})$ and let $\Theta_i^{(-r)}(\mathcal{X}_{L-1}^{(j)})$ be the remaining sub-matrix. Therefore:
    \begin{subequations}
        \label{eq:submat}
        \begin{align}
            \mathbb{E}\Bigg[\Big\|\sum_{j=1}^{m} \varepsilon_j \Theta_i(\mathcal{X}_{L-1}^{(j)}) \Big\|_2 \Bigg] \\
            \leq \mathbb{E}\Bigg[\Big\|\sum_{j=1}^{m} \varepsilon_j \Theta_i^{(r)}(\mathcal{X}_{L-1}^{(j)}) \Big\|_2 \Bigg] + \Bigg(\mathbb{E}\Bigg[\Big\|\sum_{j=1}^{m} \varepsilon_j \Theta_i^{(-r)}(\mathcal{X}_{L-1}^{(j)}) \Big\|_F^2 \Bigg]\Bigg)^{1/2}  \\
            \leq \mathbb{E}\Bigg[\Big\|\sum_{j=1}^{m} \varepsilon_j \Theta_i^{(r)}(\mathcal{X}_{L-1}^{(j)}) \Big\|_2 \Bigg] + \Bigg(nm \mathbb{E}\Bigg[\sum_{\xi=r+1}^{\infty} (\varphi_{\ell \xi} (\mathbf{z}))^2 \Bigg]\Bigg)^{1/2} \label{eq:submat 1}
        \end{align}
    \end{subequations}
    Since $\sum_{\xi=1}^{\infty} (\varphi_{\ell \xi} (\mathbf{z}))^2$ uniformly converges to $\kappa_\ell(\mathbf{z})$, the second term in \eqref{eq:submat 1} converges to $0$ as $r \rightarrow \infty$. Hence, it is sufficient to upper bound the first term in \eqref{eq:submat 1} and take the limit $r \rightarrow \infty$. By Bernstein inequality due to \cite[Theorem 2.1]{minsker2017some}, whenever $\tr(\Theta_i^{(r)}(\mathcal{X}_{L-1}^{(j)}) \Theta_i^{(r)}(\mathcal{X}_{L-1}^{(j)})^\top) \leq C_1$, there is a universal constant $c > 0$ such that the expected spectral norm is upper bounded by:
    \begin{equation}
        \label{eq:bern}
        \begin{aligned}
            \mathbb{E}\Bigg[\Big\|\sum_{j=1}^{m} \varepsilon_j \Theta_i^{(r)}(\mathcal{X}_{L-1}^{(j)}) \Big\|_2 \Bigg] \leq c \sqrt{\log(m C_1)} \mathbb{E}\Bigg[\Bigg(\sum_{j=1}^{m}\Big\|\Theta_i^{(r)}(\mathcal{X}_{L-1}^{(j)}) \Theta_i^{(r)}(\mathcal{X}_{L-1}^{(j)})^\top \Big\|_2\Bigg)^{1/2} \Bigg]  \\
            \leq c \sqrt{\log(m C_1)} (m \mathbb{E}[\|\Theta_i(\mathcal{X}) \Theta_i(\mathcal{X})^\top \|_2])^{1/2} \\
             \leq c \sqrt{\log(m C_1)} \Bigg(m \sum_{k=0}^K\mathbb{E}[||K_L^k(\mathcal{X})||_2]\Bigg)^{1/2} 
        \end{aligned}
    \end{equation}
    Notice that, due to the uniform kernel expansion $\kappa_{\ell}(\mathbf{z},\mathbf{z}') = \sum_{\xi=1}^\infty \varphi_{\ell \xi}(\mathbf{z}) \varphi_{\ell \xi}(\mathbf{z}')$, it holds that the trace norm is bounded as $\tr(\Theta_i^{(r)}(\mathcal{X}_{L-1}^{(j)}) \Theta_i^{(r)}(\mathcal{X}_{L-1}^{(j)})^\top) \leq K+1$ since $\kappa_\ell(\mathbf{z},\mathbf{z}') \leq 1$. Combining this with \eqref{eq:bern} and \eqref{eq:upper bound rademacher} implies the desired in \eqref{eq:Rademacher CGCN}.
\end{proof}

We are ready to prove the key lemma that will yield our main result. Recall that our current focus is on binary classification (i.e., $F_L = G = 2$). While our prior results consider models with multiple hidden layers, our main result regards the case of a single hidden layer $L = 1$. As mentioned in Section \ref{sec:Convergence and Generalization Error}, in the case of multiple hidden layers, our result applies to each individual layer separately. Using Lemmas \ref{lemma:rich}--\ref{lemma:Rademacher}, we are capable of comparing the CGCN predictor obtained by Algorithm \ref{alg:two-layer} against optimal model in the GCN class. Lemma \ref{lemma:erm} yields that the predictor $\hat{\Psi}^{\hat{\mathcal{A}}_{L}}(\cdot)$ generated at the last layer is an empirical risk minimizer in the function class $\mathcal{F}_\text{cgcn}$. As such, by the theory of Rademacher complexity (see, e.g., \citep{bartlett2002rademacher}), we obtain that:
\begin{equation}
    \label{eq:rademacher theory}
    \mathbb{E}_{\mathcal{X},\mathbf{y}} [J(\hat{\Psi}^{\hat{\mathcal{A}}_{L}}(\mathcal{X}), \mathbf{y})] \leq \inf_{\Phi \in \mathcal{F}_{L, cgcn}} \mathbb{E}_{\mathcal{X},\mathbf{y}} [J(\Phi(\mathcal{X}), \mathbf{y})] + 2M \cdot \mathcal{R}_m(\mathcal{F}_{L, cgcn}) + \frac{c}{\sqrt{m}}
\end{equation}
where $c$ is a universal constant a and we used the assumption that $J$ is $M$-Lipschitz. By Lemma \ref{lemma:rich}, we have that $\inf_{\Phi \in \mathcal{F}_{L, cgcn}} \mathbb{E}_{\mathcal{X},\mathbf{y}} [J(\Phi(\mathcal{X}), \mathbf{y})] \leq \inf_{\Phi \in \mathcal{F}_{L, gcn}} \mathbb{E}_{\mathcal{X},\mathbf{y}} [J(\Phi(\mathcal{X}), \mathbf{y})]$. Since $\mathcal{F}_\text{cgcn} = \mathcal{F}_{1,cgcn}$ due to $L=1$, this means that $\inf_{\Phi \in \mathcal{F}_\text{cgcn}} \mathbb{E}_{\mathcal{X},\mathbf{y}} [J(\Phi(\mathcal{X}), \mathbf{y})] \leq \inf_{\Phi \in \mathcal{F}_\text{gcn}} \mathbb{E}_{\mathcal{X},\mathbf{y}} [J(\Phi(\mathcal{X}), \mathbf{y})]$. Combined with \eqref{eq:rademacher theory}, we obtain the desired.

\begin{remark}
    \label{remark:activation functions}
    The constant $C_{\sigma,L} (\mathcal{B}_L)$ depends on the convergence rate of the polynomial expansion of the activation function $\sigma$ (See Appendix \ref{sec:choice-of-kernels}). Algorithmically, knowing the activation function is not required to run Algorithm \ref{alg:two-layer}. Theoretically, however, the choice of $\sigma$ matters for Theorem \ref{thm:generalization error main} when comparing the predictor produced by Algorithm \ref{alg:two-layer} against the best GCN, as it affects the representation power of such GCN. If a GCN with an activation function $\sigma$ performs well, CGCN will similarly generalize strongly. 
\end{remark}

\section{ADDITIONAL EXPERIMENTAL DETAILS}
\label{supp:Experimental Details}

\subsection{Dataset Descriptions}
\label{supp:Dataset Descriptions}
Table \ref{tab:dataset-stats} provides a summary of the statistics and characteristics of datasets used in this paper. All datasets are accessed using the TUDataset class \citep{morris2020tudataset} in PyTorch Geometric \citep{fey2019fast} (MIT License).

\begin{table}[h]
\centering
\caption{Statistics of the graph classification datasets used in our experiments.}
\begin{tabular}{lcccc}
\toprule
\textbf{Dataset} & \textbf{\#Graphs} &  \textbf{Avg. Nodes} & \textbf{Avg. Edges} & \textbf{\#Classes} \\
\midrule
MUTAG        & 118     & 18  & 20  & 2 \\
PTC-MR        & 344     & 26  & 51  & 2 \\
PROTEINS       & 1,113    & 39  & 73  & 2 \\
NCI1           & 4,110  & 30  & 32  & 2 \\
Mutagenicity   & 4,337  & 30  & 31  & 2 \\
\verb|ogbg-molhiv|	& 41,127 & 25 & 27 & 2 \\
\bottomrule
\end{tabular}
\label{tab:dataset-stats}
\end{table}

\subsection{Additional Implementation Details}
\label{supp:Implementation Details}

\textbf{Hardware and Software.} All experiments could fit on one GPU at a time. Most experiments were run on a server with 4 NVIDIA L40S GPUs. We ran all of our experiments in Python, using the PyTorch framework \citep{paszke2019pytorch} (\href{https://github.com/pytorch/pytorch/blob/main/LICENSE}{license URL}). We also make use of PyTorch Geometric (PyG) \citep{fey2019fast} (MIT License) for experiments with graph data.

\textbf{Implementation Details.} Particularly, all models were implemented using PyTorch Geometric (PyG) \citep{fey2019fast}. For GATv2 and GraphGPS, we use the official PyG implementations of their original papers, while for other baselines we use their PyG implementations, as standard in prior works (see, e.g., \citep{morris2019weisfeiler,rampavsek2022recipe}). In a high level, each neural architecture with $L\in\{2,6\}$ layers consists of an input encoder, stacked GNN layers with edge encoders, batch normalization as well as optional dropout and residual connections, followed by a global pooling step and a final output encoder (See Appendix \ref{sec:full implementation} for more details). The embedding dimension of each hidden layer is set to $32$, while non-convex models use a ReLU activation function between layers. For convex models, we use the Gaussian RBF kernel $\kappa(\mathbf{z},\mathbf{z}') := \exp(-\gamma \|\mathbf{z}-\mathbf{z}'\|_2^2)$ with $\gamma=0.2$, and compute an approximate kernel matrix via Nystr\"{o}m approximation \citep{williams2001using} with dimension $P=32$ (Recall Section 4.4 in the full paper). Further, since we do not focus on the impact of attention heads on performance, all convex and non-convex versions of GAT, GATv2, GraphTrans and GraphGPS use their default configurations of attention heads.

\textbf{Optimization.} All models are trained over $200$ epochs with the Adam optimizer \citep{kingma2014adam} using the cross entropy loss, a starting learning rate of $10^{-3}$, a minimum learning rate of $10^{-6}$ and a weight decay of $10^{-5}$. We use a ReduceLROnPlateau scheduler, which cuts the learning rate in half with a patience of $20$ epochs, and training ends when we reach the minimum learning rate. 

\subsection{Detailed Implementation Details of Neural Architectures}
\label{sec:full implementation}
\paragraph{Node Input Encoder.}
The first layer of each neural architecture consists of an input encoder implemented using an MLP, which transforms the raw data captured by the graph signal into a suitable representation for the GNN to process. Specifically:
\begin{itemize}
    \item If raw node features are available, then we an MLP whose input dimension is the number of features per node and its output dimension is the embedding dimension of $32$.
    \item If node features are absent, we use a learnable $\texttt{DiscreteEncoder}$ that maps integer node identifiers to vectors to the embedding dimension of $32$.
\end{itemize}

\paragraph{Edge Feature Encoders.}
We construct a separate edge encoder for each GNN layer. Each edge encoder maps raw edge features to the hidden dimension:
\begin{itemize}
    \item If edge features are provided, then we use a single-layer MLP.
    \item If edge features are absent, we use a $\texttt{DiscreteEncoder}$.
\end{itemize}

\paragraph{GNN Layers.}
We use a stack of $L$ GNN layers, each operating over the hidden feature dimension. Each GNN layer $\ell$ takes node features encoded using the input encoder, edge features encoded using the edge encoder corresponding to layer $\ell$, and an edge index matrix. Batch normalization is then applied to the output of the GNN layer, followed by a ReLU activation function. An optional dropout and a residual connection are then applied. Each layer thus has the following general form:
\begin{align*}
\text{GNN Layer} \rightarrow \text{BatchNorm} \rightarrow \text{ReLU} \rightarrow \text{Dropout (Optional)} \rightarrow\text{Residual Connection (Optional)}
\end{align*}


\paragraph{Global Pooling.}
After all GNN layers, we aggregate node features to obtain graph-level representations using \texttt{add} pooling to aggregate the learned node-level representations to a graph-level
embedding and a two-layer MLP for the final classification.

\paragraph{Output MLP.}
The final graph representation is passed through a two-layer MLP for the final classification. The first layer applies a linear projection followed by optional normalization and a nonlinearity. The second layer outputs logits or regression scores without an activation function.


\newpage
\section{Additional Experimental Results}
\label{supp:Additional Experiments}

\subsection{Heatmap of Classification Accuracy}
\label{supp:heatmap}

\begin{figure}[h!]
    \centering
    \includegraphics[width=\textwidth]{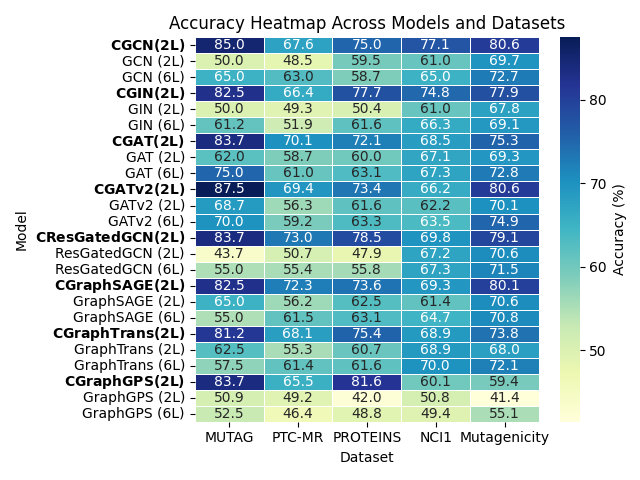}
    \caption{Heatmap of classification accuracy (\%) across all models and datasets. Rows correspond to models, columns to datasets. The names of convex models are \textbf{bolded} to emphasize their performance.}
    \label{fig:accuracy-heatmap}
\end{figure}
Figure~\ref{fig:accuracy-heatmap} provides a heatmap summarizing the accuracy of all models across the different datasets. It complements Table 1 in the full paper by visually reinforcing that convex models generally outperform their non-convex counterparts across nearly all datasets, with consistent and major improvements even for models like GATv2 and GraphGPS. This reaffirms the strength and generality of our convexification framework. For example, on MUTAG, the convex CGATv2 model reaches 87.5\% accuracy, outperforming the two-layer (68.7\%) and six-layer (70.0\%) GINs. Similarly, CGraphGPS achieves 83.7\%, exceeding the best non-convex variant by over 30\%. This trend holds across other datasets like PTC-MR, PROTEINS, and NCI1, where convex versions of CGraphSAGE, CGATv2, and CGraphGPS outperform non-convex baselines by 10--40\%.

\newpage
\subsection{Bar Plots}
\label{supp:bar plot}

\begin{figure}[h!]
    \centering
    \includegraphics[width=\linewidth]{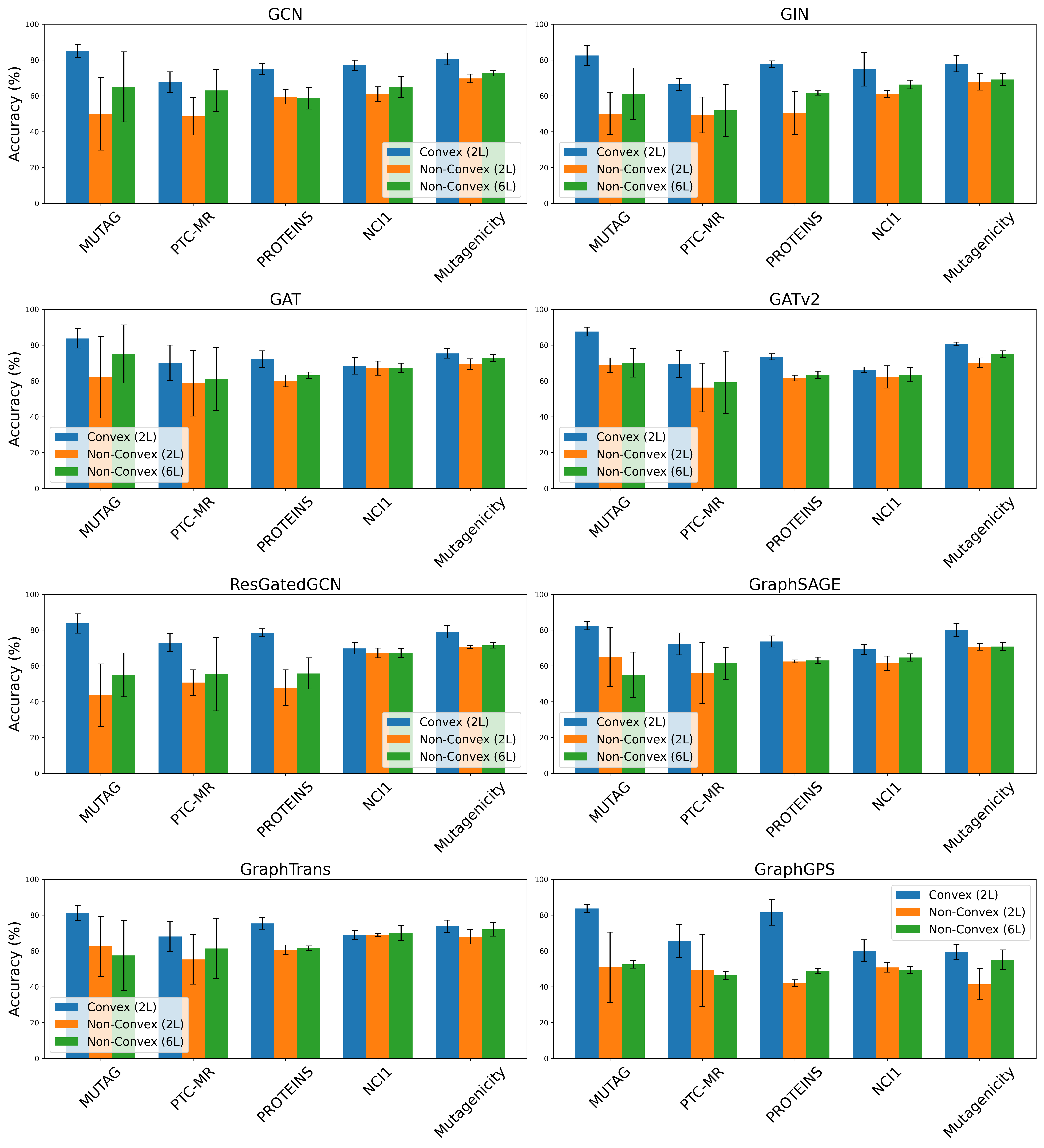}
    \caption{Accuracy comparison of two-layer convex GNNs (prefixed with ‘C’) against their standard two-layer and six-layer non-convex counterparts across five graph classification datasets. Bars represent the mean accuracy over four runs, and error bars denote the standard deviation. Datasets are ordered from left to right by increasing size.}
    \label{fig:accuracy bar plots}
\end{figure}
The bar plots in Figure \ref{fig:accuracy bar plots} highlight the consistent performance advantage of convex models over their non-convex counterparts across multiple datasets in a more straightforward way than numerical data alone. They further underscore how convex architectures provide not just dataset-specific improvements, but a general trend of robustness and high accuracy, even in the absence of large training sets. This emphasizes that our convex models harness stable optimization landscapes, enabling reliable learning where non-convex models exhibit erratic performance due to training instabilities.

\newpage
\subsection{Radar Plots}
\label{supp:radar}

\begin{figure}[h!]
    \centering
    \begin{tabular}{cc}
        \includegraphics[width=0.3\textwidth]{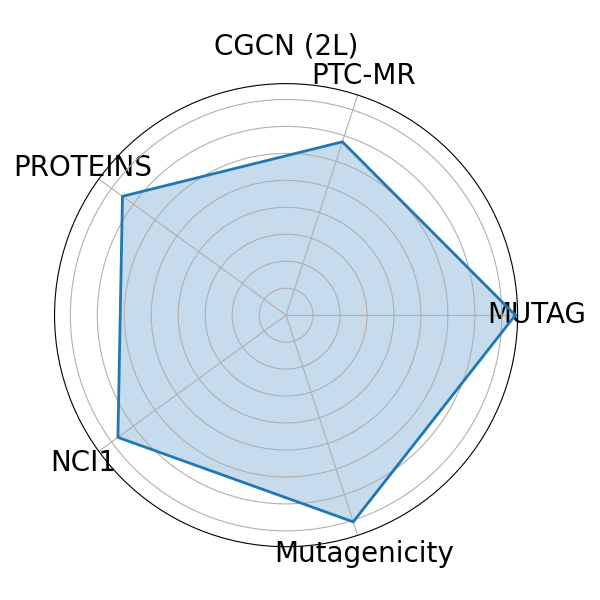} & 
        \includegraphics[width=0.3\textwidth]{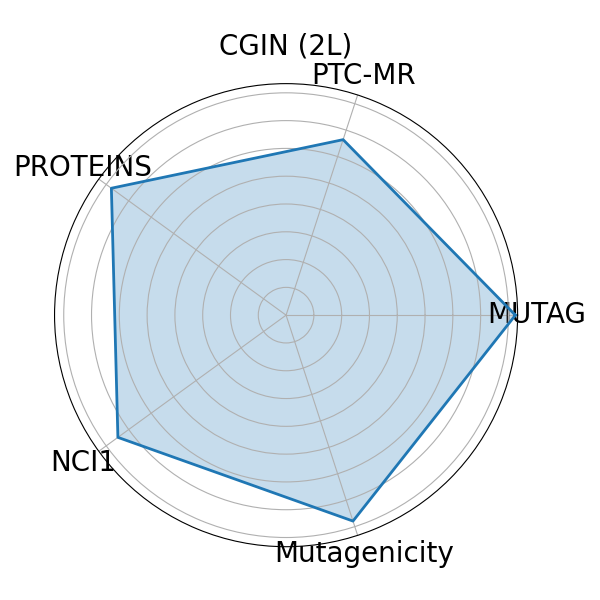} \\
        \includegraphics[width=0.3\textwidth]{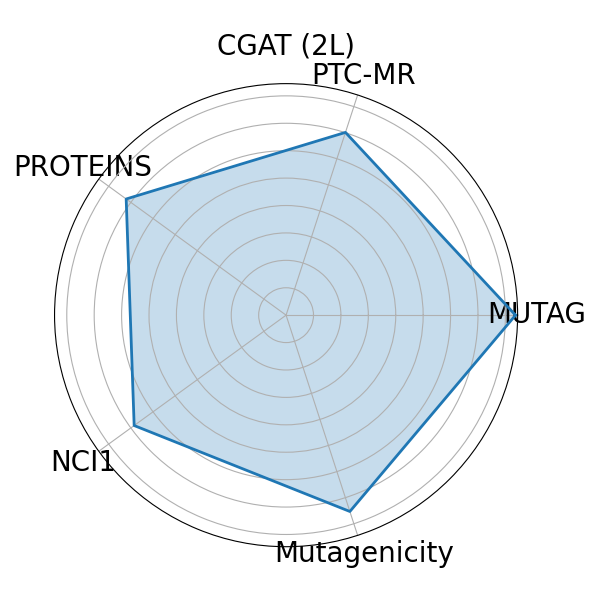} & 
        \includegraphics[width=0.3\textwidth]{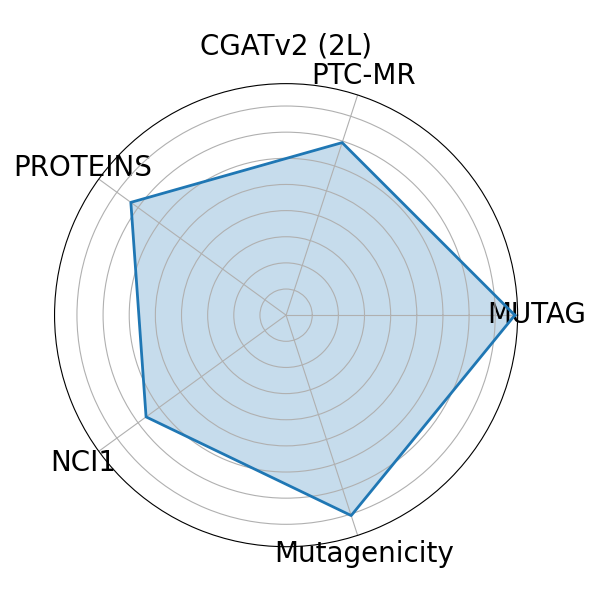} \\
        \includegraphics[width=0.3\textwidth]{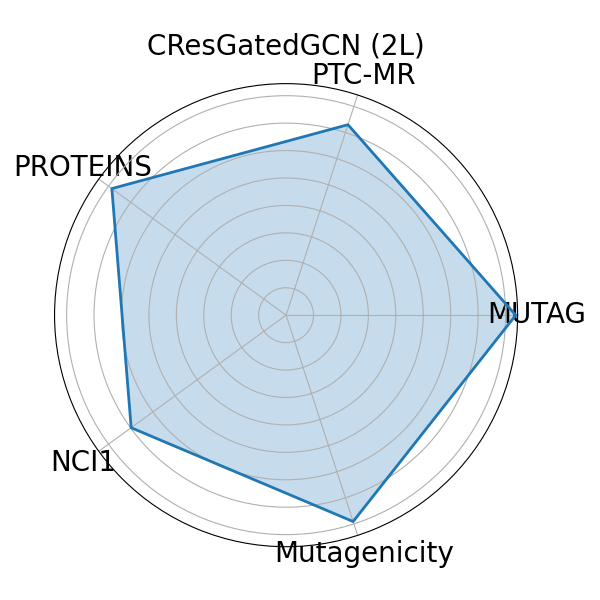} & 
        \includegraphics[width=0.3\textwidth]{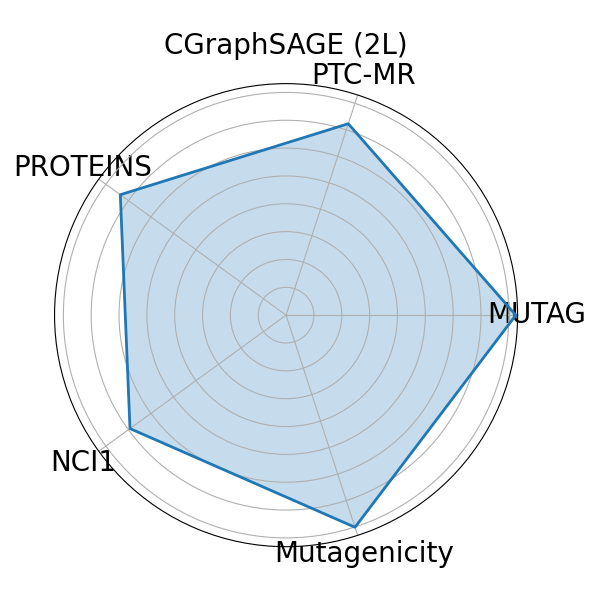} \\ 
        \includegraphics[width=0.3\textwidth]{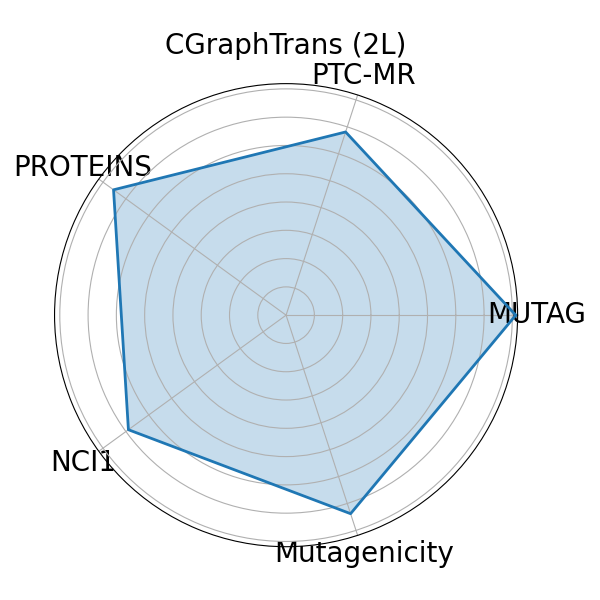} & \includegraphics[width=0.3\textwidth]{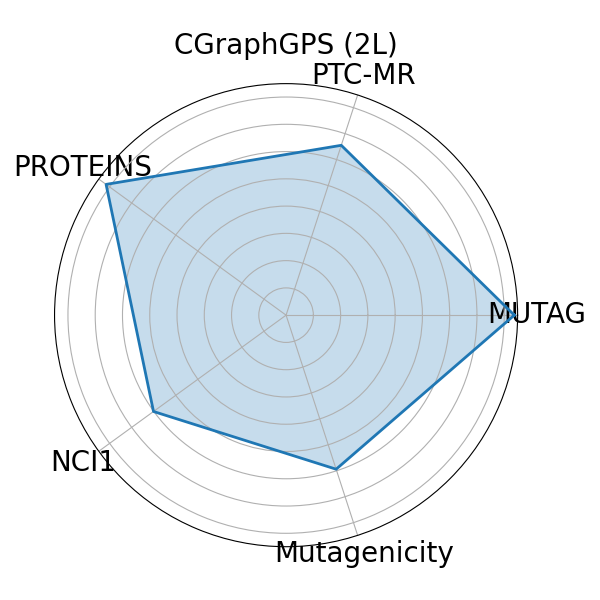} \\
    \end{tabular}
    \caption{Radar plots showing the performance of each two-layer convex model across five graph classification datasets. Each axis corresponds to a dataset, with higher values indicating better classification accuracy. These plots illustrate the consistency and robustness of convex models across diverse graph types.}
    \label{fig:radar-all}
\end{figure}

The radar plots in Figure \ref{fig:radar-all} offer a holistic view of how convex models perform across diverse datasets. They reveal that convex variants not only perform strongly on small datasets like MUTAG and PTC-MR, but also remain competitive or superior on larger datasets such as NCI1 and Mutagenicity. This multi-dimensional consistency further challenges the notion that expressive representations require deep, non-convex architectures, and instead supports that convex shallow models, when carefully designed, can adapt well across a spectrum of tasks and data scales.

\newpage
\subsection{Experimental Results for Deeper Convex Models}
\label{supp:Experimental Results for Deeper Convex Models}

Table \ref{tab:convex-vs-non-convex-mutagenicity-green} presents additional empirical results on Mutagenicity, comparing two-layer and six-layer convex versions of base models against their two-layer and six-layer non-convex counterparts\footnote{As demonstrated in Section \ref{sec:Experimental evaluation}, two-layer convex models already exhibit superior performance compared to their two- and six-layer non-convex counterparts when all models incorporate DropOut, batch normalization, node input encoders, and edge feature encoders. To evaluate the performance of deeper GNN models in isolation, in this section we exclude these additional components.}. Our results in Table \ref{tab:convex-vs-non-convex-mutagenicity-green} show that the accuracy of convex models can further improve with additional layers, which is consistent with our theoretical guarantees. Specifically, recall that Theorem \ref{thm:generalization error main} bounds the generalization error of Algorithm \ref{alg:two-layer} for single-hidden-layer models. As specified in Section \ref{sec:Convergence and Generalization Error}, our analysis extends layer-wise in the multi-layer setting, i.e., it applies to each hidden layer separately. The results in Table \ref{tab:convex-vs-non-convex-mutagenicity-green} therefore illustrate that errors remain controlled as depth increases. This offers strong empirical support for the practical effectiveness of multi-layer CGNNs, which is crucial since state-of-the-art GNN applications typically rely on deeper architectures.

\begin{table}[h!]
\centering
\caption{Accuracy ($\%\pm\sigma$) on Mutagenicity over $4$ runs for two-layer and six-layer convex versions of base models against their two-layer and six-layer non-convex counterparts. Convex variants start with 'C'. A two-layer model is marked by (2L), while a six-layer one is marked by (6L). Results of convex models surpassing their non-convex variants by a large gap ($\sim$10--40\%) are in \textbf{bold}, modest improvements are \underline{underlined}, and improvements of six-layer convex models over the corresponding two-layer convex models are in \textcolor{ForestGreen}{{dark green}}.}
\begin{tabular}{l c}
\toprule
\textbf{Model} & \textbf{Mutagenicity} \\
\midrule
\rowcolor{Gainsboro!50} CGCN (2L)       & $54.7\pm2.1$ \\
\rowcolor{Gainsboro!50} CGCN (6L)       & \textcolor{ForestGreen}{\textbf{$61.9\pm2.7$}} \\
GCN (2L)        & $54.7\pm1.1$ \\
GCN (6L)        & $54.7\pm1.9$ \\
\midrule
\rowcolor{Gainsboro!50} CGIN (2L)       & \underline{$56.8\pm1.8$} \\
\rowcolor{Gainsboro!50} CGIN (6L)       & \textcolor{ForestGreen}{\textbf{$59.1\pm1.8$}} \\
GIN (2L)        & $54.8\pm2.0$ \\
GIN (6L)        & $55.8\pm2.0$ \\
\midrule
\rowcolor{Gainsboro!50} CGAT (2L)       & \underline{$55.7\pm2.5$} \\
\rowcolor{Gainsboro!50} CGAT (6L)       & \textcolor{ForestGreen}{\textbf{$58.8\pm3.6$}} \\
GAT (2L)        & $54.8\pm2.0$ \\
GAT (6L)        & $54.8\pm2.0$ \\
\midrule
\rowcolor{Gainsboro!50} CGATv2 (2L)     & \underline{$56.6\pm2.7$} \\
\rowcolor{Gainsboro!50} CGATv2 (6L)     & \textcolor{ForestGreen}{\textbf{$62.3\pm4.2$}} \\
GATv2 (2L)      & $55.9\pm1.4$ \\
GATv2 (6L)      & $59.6\pm6.2$ \\
\midrule
\rowcolor{Gainsboro!50} CResGatedGCN (2L) & \underline{$56.2\pm2.7$} \\
\rowcolor{Gainsboro!50} CResGatedGCN (6L) & \textcolor{ForestGreen}{\textbf{$59.6\pm3.7$}} \\
ResGatedGCN (2L)  & $56.1\pm2.1$ \\
ResGatedGCN (6L)  & $56.1\pm2.1$ \\
\midrule
\rowcolor{Gainsboro!50} CGraphSAGE (2L)   & \underline{$55.1\pm2.4$} \\
\rowcolor{Gainsboro!50} CGraphSAGE (6L)   & \textcolor{ForestGreen}{\textbf{$66.3\pm5.3$}} \\
GraphSAGE (2L)    & $54.8\pm2.0$ \\
GraphSAGE (6L)    & $60.0\pm3.3$ \\
\midrule
\rowcolor{Gainsboro!50} CGraphTrans (2L)  & \underline{$55.5\pm2.3$} \\
\rowcolor{Gainsboro!50} CGraphTrans (6L)  & \textcolor{ForestGreen}{\textbf{$63.2\pm4.7$}} \\
GraphTrans (2L)   & $54.8\pm2.0$ \\
GraphTrans (6L)   & $59.9\pm4.6$ \\
\midrule
\rowcolor{Gainsboro!50} CGPS (2L)         & \underline{$57.7\pm4.5$} \\
\rowcolor{Gainsboro!50} CGPS (6L)         & \textcolor{ForestGreen}{\textbf{$59.4\pm4.1$}} \\
GPS (2L)          & $41.4\pm8.7$ \\
GPS (6L)          & $55.1\pm5.5$ \\
\bottomrule
\end{tabular}
\label{tab:convex-vs-non-convex-mutagenicity-green}
\end{table}

\newpage
\subsection{Experimental Results for DirGNNs}
\label{supp:Experimental Results for DirGNNs}
We herein provide additional empirical results for the recent baseline DirGNN \citep{rossi2024edge}, which is a generic wrapper for computing graph convolution on directed graphs. If the graph is originally undirected, applying DirGNN effectively treats it as directed. In our simulations, we extend GCN, GAT and GraphSAGE with DirGNN, obtaining DirGCN, DirGAT and DirGraphSAGE, respectively. In Table \ref{tab:convex-vs-non-convex-dirgnn}, we report results on the MUTAG and PROTEINS datasets, where the results are generated as described in Table \ref{tab:convex-vs-non-convex}. Particularly, convex architectures are highlighted in bold. Our two-layer convex models consistently outperform their non-convex two- and six-layer counterparts by 10-40\% accuracy.

\begin{table}[h!]
\centering
\caption{Accuracy ($\%\pm\sigma$) on MUTAG and PROTEINS over $4$ runs for two-layer convex versions of DirGNNs against their two-layer and six-layer non-convex counterparts. Convex variants start with 'C'. A two-layer model is marked by (2L), while a six-layer one is marked by (6L). Results of convex models surpassing their non-convex variants by a large gap ($\sim$10--40\%) are in \textbf{bold}, modest improvements are \underline{underlined}, and improvements of six-layer convex models over the corresponding two-layer convex models are in \textcolor{ForestGreen}{{dark green}}.}
\begin{tabular}{l c c}
\toprule
\textbf{Model} & \textbf{MUTAG} & \textbf{PROTEINS} \\
\midrule
\rowcolor{Gainsboro!50} CDirGCN (2L)   & \textcolor{ForestGreen}{$\mathbf{85.0\pm10.6}$} & \textcolor{ForestGreen}{$\mathbf{72.5\pm6.5}$} \\
DirGCN (2L)        & $50.0\pm10.0$ & $45.1\pm8.9$ \\
DirGCN (6L)        & $57.5\pm18.2$ & $47.0\pm10.6$ \\
\midrule
\rowcolor{Gainsboro!50} CDirGAT (2L)       & \textcolor{ForestGreen}{$\mathbf{75.0\pm6.1}$} & \textcolor{ForestGreen}{$\mathbf{73.6\pm4.3}$} \\
DirGAT (2L)        & $53.7\pm16.7$ & $59.7\pm5.8$ \\
DirGAT (6L)        & $61.2\pm24.5$ & $61.1\pm10.5$ \\
\midrule
\rowcolor{Gainsboro!50} CDirGraphSAGE (2L)       & \textcolor{ForestGreen}{$\mathbf{81.2\pm5.4}$} & \textcolor{ForestGreen}{$\mathbf{74.3\pm3.7}$} \\
DirGraphSAGE (2L)        & $60.0\pm14.5$ & $39.2\pm2.2$ \\
DirGraphSAGE (6L)        & $41.2\pm18.8$ & $51.5\pm9.4$ \\
\bottomrule
\end{tabular}
\label{tab:convex-vs-non-convex-dirgnn}
\end{table}

\subsection{Why do Convexified Message-Passing GNNs Excel?}
\label{supp:excel}

Next, we discuss why convexified GNNs perform well is valuable. We herein give a formal support for how each factor specified by the reviewer helps convexified models to excel, especially in small-data regimes.

\begin{enumerate}
    \item Constraining filters by a nuclear-norm ball limits the hypothesis class and yields favorable generalization gap bounds. Specifically, under the conditions of Theorem \ref{thm:generalization error main}, for each layer $\ell$ with graph filters $\mathcal{A}_{\ell}$ there exists a constant radius $B_{\ell}>0$ such that, if $\Vert\mathcal{A}_{\ell}\Vert_{\star} \leq B_{\ell}$, then the generalization gap scales as $\mathcal{O}(\frac{M C_\sigma(B_{\ell})}{\sqrt{m}})$ for a training set with $m$ samples and an $M$-Lipschitz continuous loss function, where $C_\sigma(B_{\ell})$ depends on the activation function $\sigma$. In contrast, unconstrained nonconvex training may effectively access much larger function classes (higher variance), which hurts generalization in small-data regimes. Our Theorem \ref{thm:generalization error main} thus indicates that nuclear-norm constraints control generalization for our CGNN class.

    \item Nuclear-norm projection acts as spectral smoothing, removing or attenuating directions with small singular values. Projecting the learned filter onto a nuclear-norm ball effectively suppresses small singular-value directions of the filter matrix. In particular, constraining nuclear norms encourages solutions that spread the singular-value mass in a way that uses fewer very large singular values, as nuclear norm sums them up, and penalizes many moderate ones. This eventually encourages low-rank solutions, similarly to trace-norm regularization. On graph-structured data, small singular-value directions often correspond to high-frequency components in the graph spectrum. In graph signal processing terms, low-rank approximations correspond to low-dimensional subspaces which omit high-frequency noise (see, e.g., \citep{ramakrishna2020user}). Thus, the nuclear-norm projection yields a smoother operator that suppresses noise and prevents overfitting to spurious high-frequency patterns. Namely, this explains the empirical gains we observe in low-data regimes.
\end{enumerate}

\end{document}